\documentclass[conference]{IEEEtran}
\usepackage{times}

\usepackage[numbers]{natbib}
\usepackage{multicol}
\usepackage[bookmarks=true]{hyperref}
\usepackage{float}
\usepackage{amsmath}
\usepackage{amsfonts}
\newtheorem{theorem}{Theorem}
\usepackage{booktabs}
\usepackage{amssymb}
\usepackage{textcomp}
\usepackage{multirow}

\renewcommand{\IEEEQED}{}

\usepackage{xspace}
\usepackage{bbm}
\usepackage{graphicx}
\usepackage{subfig} 
\usepackage{annotate-equations}
\definecolor{dark_red}{RGB}{122, 0, 0}
\definecolor{coral}{RGB}{255, 119, 94}
\definecolor{pink_orange}{RGB}{255, 72, 126}
\definecolor{vibrant_pink}{RGB}{255, 0, 104}
\definecolor{pink_pink}{RGB}{255, 37, 153}
\definecolor{wine}{RGB}{204, 0, 102}

\definecolor{light_orange}{RGB}{255, 198, 107}
\definecolor{orange(sae/ece)}{rgb}{1.0, 0.49, 0.0}
\definecolor{dark_orange}{RGB}{216,92,0}

\definecolor{org-purp-0}{RGB}{165, 76, 0}
\definecolor{org-purp-1}{RGB}{250, 130, 28}
\definecolor{org-purp-2}{RGB}{226, 89, 68}
\definecolor{org-purp-3}{RGB}{206, 92, 124}
\definecolor{org-purp-4}{RGB}{116, 80, 146}
\definecolor{org-purp-5}{RGB}{110, 78, 157}

\definecolor{teal(sae/ece)}{rgb}{0, 0.47, 0.52}
\definecolor{aqua}{RGB}{52,172,139}
\definecolor{dark_aqua}{RGB}{35,115,93}
\definecolor{dark_green}{RGB}{0, 92, 34}
\definecolor{solid_green}{RGB}{39, 163, 50}

\definecolor{grape}{RGB}{112,48,160}
\definecolor{purple}{rgb}{0.74, 0.65, 1.0}
\definecolor{dark_purple}{rgb}{0.58, 0.0, 0.82}
\definecolor{periwinkle}{RGB}{191, 140, 230}

\definecolor{light_gray}{rgb}{0.9, 0.9, 0.9}
\definecolor{medium_gray}{rgb}{0.6, 0.6, 0.6} 
\definecolor{dark_gray}{rgb}{0.2, 0.2, 0.2} 

\definecolor{sky_blue}{RGB}{37, 166, 213}
\definecolor{light_blue}{rgb}{0.33, 0.80, 1}
\definecolor{dark_blue}{rgb}{0.098, 0.239, 0.52}
\definecolor{ocean}{RGB}{13, 121, 202}
\definecolor{light_ocean}{RGB}{18, 178, 235}
\definecolor{dark_ocean}{RGB}{10, 89, 148}
\definecolor{vibrant_blue}{RGB}{14, 120, 255}

\definecolor{dark_brown}{rgb}{0.3255, 0.004, 0.001}

\newcommand{\para}[1]{\medskip\noindent\textbf{#1. }}

\newcounter{qnum}
\newcounter{tnum}
\newcommand{\obs}{o}
\newcommand{\img}{I}
\newcommand{\proprio}{s}
\newcommand{\imgSpace}{\mathcal{I}}
\newcommand{\proprioSpace}{\mathcal{S}}
\newcommand{\obstraj}{\mathbf{o}}
\newcommand{\obsSpace}{\mathcal{O}}

\newcommand{\enc}{\mathcal{E}_\phi}
\newcommand{\dec}{\mathcal{D}_\phi}
\newcommand{\dyn}{f_\phi}
\newcommand{\world}{\mathcal{W}_\phi}

\newcommand{\task}{\mathcal{L}}
\newcommand{\lang}{\ell}

\newcommand{\vlm}{\text{VLM}}
\newcommand{\vlmtrans}{\mathcal{T}^{\vlm}}

\newcommand{\latentSpace}{\mathcal{Z}}

\newcommand{\action}{a}
\newcommand{\acttraj}{\mathbf{\action}}
\newcommand{\actSpace}{\mathcal{A}}

\usepackage{amsmath}
\DeclareMathOperator*{\argmax}{arg\,max}

\newcommand{\cpinput}{x}
\newcommand{\cpseqinput}{x}
\newcommand{\score}{\mathbb{P}^{\text{VLM}}}

\begin{document}

\title{When to Act, Ask, or Learn: \\ Uncertainty-Aware Policy Steering}

\author{Jessie Yuan$^{1*}$, \quad Yilin Wu$^{1*}$, \quad Andrea Bajcsy$^{1}$ \\
$^{1}$Carnegie Mellon University \quad $^{*}$Equal Contribution}

\maketitle

\begin{abstract}

Policy steering is an emerging way to adapt robot behaviors at deployment-time: a learned verifier analyzes low-level action samples proposed by a pre-trained policy (e.g., diffusion policy) and selects only those aligned with the task. While Vision-Language Models (VLMs) are promising general-purpose verifiers due to their reasoning capabilities, existing frameworks often assume these models are well-calibrated. In practice, the overconfident judgment from VLM can degrade the steering performance under both high-level semantic uncertainty in task specifications and low-level action uncertainty or incapability of the pre-trained policy. We propose \textit{uncertainty-aware policy steering} (UPS), a framework that jointly reasons about semantic task uncertainty and low-level action feasibility, and selects an uncertainty resolution strategy: execute a high-confidence action, clarify task ambiguity via natural language queries, or ask for action interventions to correct the low-level policy when it is deemed incapable at the task. We leverage conformal prediction to calibrate the composition of the VLM and the pre-trained base policy, providing statistical assurances that the verifier selects the correct strategy. After collecting interventions during deployment, we employ residual learning to improve the capability of the pre-trained policy, enabling the system to learn continually but with minimal expensive human feedback. We demonstrate our framework through experiments in simulation and on hardware, showing that UPS can disentangle confident, ambiguous, and incapable scenarios and minimizes expensive user interventions compared to uncalibrated baselines and prior human- or robot-gated continual learning approaches. Videos can be found at: \href{https://jessie-yuan.github.io/ups/}{https://jessie-yuan.github.io/ups/}.

\end{abstract}

\IEEEpeerreviewmaketitle
\section{Introduction}



Imitation-based generative robot policies, such as diffusion policies and vision-language-action (VLA) models, are a promising way to model low-level action distributions conditioned on perceptual (and language) inputs. 
However, directly executing samples from the learned action distribution does not always lead to success at deployment time. 
This has motivated the paradigm of \textit{policy steering} which adapts the behavior of a pre-trained policy online, e.g., via sampling and verification. 

Recent works \cite{wu2025foresight, kwok2025robomonkey} demonstrated that vision-language models (VLMs) can be harnessed as ``open world'' verifiers during policy steering, reasoning about the outcomes of action samples and selecting those which align most with task instructions. 
However, to-date, all approaches have implicitly assumed that the verifier is well-calibrated:
i.e., it selects an appropriate action sample (or correctly rejects all candidates) with high confidence. 
In practice, this assumption often fails. 
VLM verifiers can be overconfident when task instructions are ambiguous or under-specified, leading to confident selection of misaligned behaviors. 
Moreover, when the low-level policy is fundamentally \textit{incapable} of accomplishing the task under the current conditions, all candidate samples may be unsatisfactory. Our experiments show that an uncalibrated verifier may still select one of these faulty options rather than 
identifying the limitation and stopping execution. 


In this work, we propose \textit{\textbf{U}ncertainty-aware \textbf{P}olicy \textbf{S}teering} (\textbf{UPS}), a method that maps a VLM verifier's uncertainty over action samples into a resolution strategy: \textbf{execute} a high-confidence action sample, \textbf{clarify} task ambiguity through natural-language interactions, or request to \textbf{re-train} the low-level policy when it is incapable.
Specifically, we calibrate the composition of the VLM verifier and the pre-trained policy using conformal prediction \cite{angelopoulos2023conformal}, returning a \textit{set} of action sample(s) and possibly a ``none of the above'' option.  
This set is constructed such that, with a user-specified level $1-\epsilon$, the verifier can steer the low-level policy to success in $1-\epsilon$ of scenarios by either executing the right action sample or selecting the appropriate resolution strategy (asking clarification questions when ambiguous or eliciting human interventions when incapable).
Our conformal prediction approach also minimizes the average size of prediction sets, ensuring that the robot selectively chooses to ask semantic clarification questions (when set size is $>1$) or chooses to ask for low-level retraining (when  set only contains  ``none of the above''). 
When low-level human intervention is required, we improve the policy via residual learning~\cite{jiang2025transic,xiao2025self}, enabling continual improvement of the robot's capabilities.

We evaluate UPS in a series of simulation and hardware experiments with robotic manipulation. Compared to prior policy steering methods that do \textit{not} calibrate the verifier, we show $30\%$ improvement in ambiguous scenarios. On the uncertainty-quantification side, we show that our approach to conformal prediction, including a new score function with VLMs, yields strong empirical coverage guarantees with tight prediction sets compared to prior UQ methods for VLMs. On the continual learning side, we show that UPS's ability to reason about both high-level semantic uncertainty and low-level action uncertainty enables it to more effectively balance cheap language queries vs. expensive low-level action interventions from a human. Furthermore, compared to human-\cite{kelly2019hg} or robot-gated~\cite{menda2019ensembledagger} interactive imitation learning approaches, the data collected by UPS
enables the robot to improve over successive deployment rounds while minimizing human effort.

\section{Related Work}
\para{Policy Steering in Robotics}Recent work increasingly use VLMs as deployment-time verifiers to steer robot policies by scoring multiple sampled trajectories \cite{wu2025foresight,kwok2025robomonkey,wu2025you}. However, these pipelines suffer from the same problems as LLM-based verification, including poor calibration and systematic agreement bias, where plausible but incorrect actions are blindly accepted \cite{liu2024calibrating,andrade2025let}. These failures are especially problematic in robotics because verifier outputs are often reduced to a scalar score  (accept/reject decision), obscuring \emph{what} is uncertain and \emph{how} the system should respond. 
Our work addresses this gap by quantifying uncertainty in VLM verification and mapping different uncertainty modes to distinct resolution strategies, from high-level clarification to low-level data collection. 

\para{Uncertainty Quantification for Learned Robot Policies} While uncertainty quantification (UQ) has been a long-standing problem in robotics and machine learning~\cite{vovk2005algorithmic,gal2016dropout}, in this paper, we focus on learning-based robot policies that generate low-level actions from multimodal inputs~\cite{chi2024diffusionpolicy, ren2023robots} such as images and language instructions. Prior work typically addresses UQ at two decoupled levels of abstraction: semantic ambiguity and low-level action uncertainty.
For high-level semantic uncertainty, approaches like KnowNo~\cite{ren2023robots} and RCIP~\cite{lidard2024risk} employ conformal prediction to provide statistical guarantees that the generated plans (or intent predictions) capture the user's true instruction. However, these methods largely operate in discrete textual or symbolic space with an unrealistic assumption that the low-level policy can always execute the chosen plan.
A separate body of work quantifies uncertainty directly within the continuous action space of learned policies, such as VLA or diffusion models~\cite{valle2025evaluating, zollo2025confidence}. 
While these methods can flag distribution shifts or execution noise~\cite{zhao25conformaldagger}, low-level action uncertainty does not always correlate with task-level failures (e.g., there are multiple ways to successfully grasp a cup). 
Our work takes a step towards more tightly coupling the uncertainty of the low-level action policy (approximated by repeatedly sampling from the policy) with high-level semantic uncertainty about the task (quantified by how well the action outcomes align with the user's instruction).

\para{Continual Learning via Interactive Imitation} Interactive Imitation Learning (IIL) improves policies through human feedback. 
The DAgger family of methods~\cite{ross2011reduction,hoquethriftydagger} typically relies on the human operator to know when to intervene, or on action-level uncertainty signals to trigger interventions, which has challenges as described in the above section. 
In contrast, our approach leverages calibration to capture semantic uncertainty over different meaningful phases (e.g., grasping, placing) to minimize human feedback.
Recent work also uses the imagination of a world model to evaluate long-horizon execution failures~\cite{liumulti}.
However, these methods typically evaluate a single policy trajectory. Because they do not sample multiple rollouts within the world model, they can't effectively ``search'' through the base policy's distribution to find an alternative action plan; this can result in asking for human interventions even when a feasible solution exists within the support of the policy distribution.
Our method mitigates this by performing action sampling and calibrated verification.
Finally, to adapt efficiently from human intervention, recent works utilize residual policy learning~\cite{jiang2025transic,xiao2025self} instead of fine-tuning the entire base model.
We adopt this paradigm as it enables our system to integrate user corrections into the generation-imagination loop without destroying the diverse capabilities of the base policy.
\begin{figure*}[t!]
    \centering
    \includegraphics[width=0.9\textwidth]{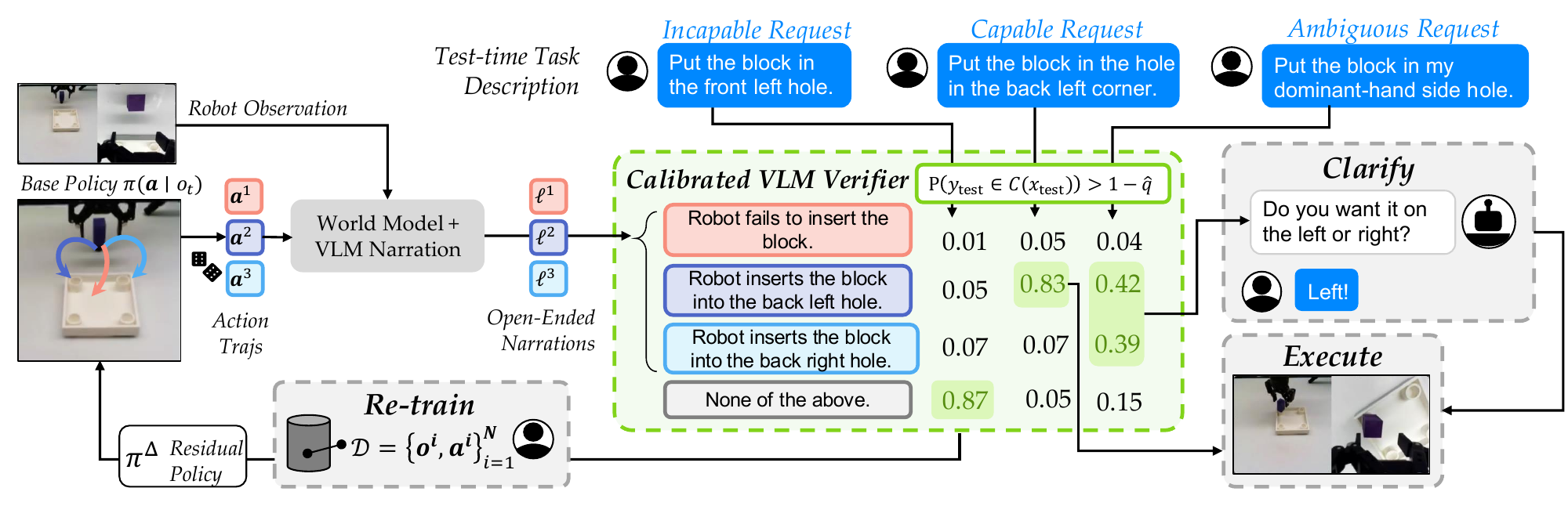}
    \caption{\textbf{Uncertainty-Aware Policy Steering.} Our framework calibrates the VLM verifier used for policy steering via conformal prediction. This enables the VLM to select an appropriate way to resolve uncertainty, from querying the end-user in natural language to asking to re-train the low-level control policy.}
    \vspace{-0.9em}
    \label{fig:framework}
\end{figure*}

\section{Problem Formulation}
\label{sec:problem_formulation}
\para{Setup \& Notation} 
We define the user’s task instruction $\task$ as a single sentence defining the goal of a long-horizon task that involves multiple $T$-timestep subtasks. 
The robot’s observation space $\obs \in \obsSpace := \imgSpace \times \proprioSpace$ integrates RGB images $\img$ with proprioceptive states $\proprio$, such as end-effector poses. Control is governed by a pre-trained base policy $\pi(\action \mid \obs_t)$, typically a generative visuomotor model such as a diffusion policy \citep{chi2024diffusionpolicy} or a vision-language-action model \cite{black2024pi_0}. 
At any timestep $t$, the policy generates a short action chunk $\action_{t:t+H}$ ($H \ll T$), which is aggregated into a full sequence $\acttraj_t = \action_{t:t+T}$ with multiple continuous generations. Given an observation $\obs_t$, sampling from the policy $\acttraj_t \sim \pi(\cdot \mid \obs_t)$ and executing the plan yields a corresponding future observation sequence $\obstraj_t \sim \mathbb{P}(\cdot \mid \obs_t, \acttraj_t)$. This formulation allows us to evaluate the policy's predicted outcomes against the high-level task instruction across multiple stages of execution.

\label{sec:problem_formulation_problem}
\para{Problem} Given the robot's current observation $\obs$ and $K$ \textit{i.i.d.} samples from the base policy, $\{\acttraj^i\}^K_{k=1} \sim \pi(\acttraj \mid \obs)$, our \textit{uncertainty-aware policy steering} problem seeks to return the index $y \in \mathcal{Y}=\{1,...,K, K+1\}$ of the action sample that best matches the textual task description $\task$ \textit{or} return a $K+1$-th index which indicates ``none of the above'' (i.e., no samples accomplish the instructed task). 
More formally, we seek to solve the following optimization problem:
\begin{equation}
\vspace{-0.1cm}
     y^{\star} = \arg\max_{y \in \mathcal{Y}}  \mathbb{P}(y \mid \{\acttraj^k\}^K_{k=1}, \obs; \task ).
    \label{eq:formulation-problem}
\end{equation}
In general, the distribution $\mathbb{P}(y \mid \{\acttraj^k\}^K_{k=1}, \obs; \task )$ above is very hard to characterize, which is why prior works  \cite{ren2023robots, wu2025foresight} have turned to vision-language models (VLMs) due to their ability to simultaneously reason about both natural language instructions and visual observations. However, prior work \cite{wu2025foresight} shows that effective zero-shot use of VLMs to approximate $\mathbb{P}(y \mid \{\acttraj^k\}^K_{k=1}, \obs; \task )$ is not trivial, since the model must implicitly translate low-level actions into outcomes (i.e., predict $\{\acttraj^k\}^K_{k=1} \rightarrow \{\obstraj^k\}^K_{k=1}$), reason about the outcomes (i.e., alignment between  the outcomes $\{\obstraj^i\}^K_{k=1}$ and the text of the task $\task$), as well as select the correct index. 

Following prior work \cite{wu2025foresight, wu2025you}, we factorize the problem into two stages: first,  predict the outcome of each low-level action and translate the outcomes into textual \textit{outcome narrations}; given these narrations, we ask a VLM to solve a multiple-choice Q\&A problem and select the action whose outcome narration best aligns with the task. Mathematically, our problem becomes:
\vspace{-0.2cm}
\begin{equation}
     y^{\star} = \arg\max_{y \in \mathcal{Y}} \mathbb{E}_{\eqnmarkbox[gray]{intent}{\{\lang^k\}^K_{k=1} \sim \mathbb{P}(\cdot \mid} \eqnmarkbox[red]{actions}{\{\acttraj^k\}^K_{k=1}}\eqnmarkbox[gray]{intent2}{, \obs)} } \Big[\eqnmarkbox[solid_green]{vlm}{\mathbb{P}(y \mid \{\lang^k\}^K_{k=1}; \task )}\Big].
    \label{eq:formulation-factorized}
\end{equation}
\annotate[xshift=-0.0em, yshift=-0.0em]{below,left}{intent}{\rmfamily{outcome narrations}}
\annotate[yshift=+0.5em, xshift=-0.8em]{above,right}{actions}{\rmfamily{$\sim\pi(\cdot \mid \obs)$}}
\annotate[yshift=-0.5em]{below, left}{vlm}{\rmfamily{VLM verifier}}

\bigskip 
\noindent Here, each $\lang^k$ is a language description of the outcome induced by the low-level action $\acttraj^k$ executed from an initial observation. For example, $\lang^k = \text{``The robot inserts the block into the back right hole.''}$ as shown in Fig.~\ref{fig:framework}. 
In practice, these narrations are obtained by first \textit{predicting future observations}  $\obstraj_t 
\sim \mathbb{P}(\cdot| \obs_t, \acttraj_t)$ (e.g., via a world model, shown in Sec.~\ref{sec:approach}), and then the predicted observations are described in text via video captioning \cite{team2023gemini}. 
By lifting the outcomes of low-level actions into textual descriptions, the VLM's reasoning capabilities can be used more effectively to verify which index $y \in \mathcal{Y}$ to choose; in other words, the VLM can be used as a proposal distribution in the optimization objective: $\mathbb{P}(y \mid \{\lang^k\}^K_{k=1}; \task )$.

\para{Challenges \& Opportunities}
\label{sec:problem_formulation_challenges}
A central challenge with Eq.~\ref{eq:formulation-factorized} is the assumption of a well-calibrated VLM verifier. 
Even with perfect outcome narrations to choose from, VLMs are often overconfident evaluators \cite{bai2024hallucination, mei2025reasoning} which can undermine the quality of the policy steering. 
Moreover, the task descriptions $\task$ from the user may be inherently \textit{ambiguous} or \textit{underspecified}; a VLM verifier which confidently selects an incorrect behavior narration can lead to misaligned or even unsafe actions being executed. A second challenge is that when the policy is \textit{incapable}, all $K$ action samples from the policy in Eq.~\ref{eq:formulation-factorized} will have unsatisfactory outcomes for the task instruction $\task$; an uncalibrated VLM verifier may choose one of the faulty samples instead of rejecting them all.

Finally, even if uncertainty or incapability is detected, the robot should \textit{not} always stop operation and instead should generate an appropriate resolution strategy. However, actionable strategies for resolving uncertainty can range from simple clarifications with the user (e.g., ``Did you want it on the left or right?'') to far more demanding interventions such as re-training the low-level control policy (e.g., refining the precision of the skill to insert the block into the hole or learning a new skill for placing into a new hole). 
In other words, the source of uncertainty can lie at different levels of abstraction, from \textit{semantic ambiguity} (e.g., multiple indices seem plausible for the task) to the base policy's \textit{incapability}. 
Our unified framework seeks to both calibrate the VLM verifier and equip the robot with appropriate resolution strategies.

\para{Aside: Comparison to KnowNo \cite{ren2023robots}} 
A closely related work \cite{ren2023robots} similarly performs uncertainty quantification on an LLM task planner that reasons about textual plans and asks questions when highly uncertain. 
However, it assumes an idealized low-level policy that executes any chosen plan accurately.  
We relax this assumption by grounding the uncertainty quantification in the capabilities of low-level policy.
We achieve this because our textual plans (that the VLM verifier reasons about) directly correspond to outcomes induced by samples from low-level policy
in Eq.~\ref{eq:formulation-factorized}, as well as an explicit ``incapability'' option. 
Thus, our uncertainty quantification jointly reasons about high-level semantic intent uncertainty and low-level action uncertainty. 
Together, our system not only resolves what the user wants, but also identifies whether the policy can reliably accomplish the task. 

\section{Approach: Uncertainty-Aware Policy Steering}
\label{sec:approach}

Our uncertainty-aware policy steering approach calibrates the VLM verifier to select, with high user-specified confidence, one of three uncertainty resolution strategies: \textbf{execute} a high-confidence action that fulfills the task; \textbf{clarify} task ambiguity when multiple actions seem plausible; or \textbf{re-train} the low-level policy via interactive imitation learning when it is incapable. 
Our overall framework
is visualized in Fig.~\ref{fig:framework}, and we describe each component in the following subsections. 

\subsection{VLM-in-the-loop Policy Steering}
We follow the framework from~\cite{wu2025foresight,wu2025you} and pose VLM-in-the-loop steering as an open-ended multiple-choice Q\&A problem. This involves three stages: predicting the outcomes $\obstraj$ of an action sample $\acttraj$ with a world model, generating behavior narrations $\lang$ for the predicted outcomes, and selecting from these options with the VLM verifier.

\para{Outcome Prediction}
We leverage latent world models ~\cite{hafner2023mastering, zhoudino} to predict the outcomes of low-level action samples directly from a high-dimensional observation, $\obs_t$.  
The world model $\world$ consists of an encoder $\enc: \obsSpace \rightarrow \latentSpace$  which encodes an observation $\obs_t$, a latent dynamics model $\dyn: \latentSpace \times \actSpace \rightarrow \latentSpace$ which  predicts latent state transitions, and a decoder $\dec: \latentSpace \rightarrow \obsSpace$ which reconstructs an observation from latent space. 
Recall that most low-level policies \cite{chi2024diffusionpolicy, black2024pi_0} predict relatively short-horizon action ``chunks''. This presents a challenge for policy steering, since on very short time-horizons, the outcomes are often indistinguishable for the verifier. 

To address this, we interleave action generation and imagination to obtain a longer-horizon action-outcome sequence that is informative for the verifier to reason about (visualized in Figure~\ref{fig:interleaved-narration}). 
Specifically, given the current \textit{real} observation $\obs_t$, we query $\action_{t:t+H} \sim \pi(\cdot \mid \obs_t)$ to generate an action chunk of length $H$ and pass this to the world model to obtain predicted future observations: $\hat{\obs}_{t+1:t+H} = \world(\obs_{t}, \action_{t+1:t+H})$. 
The last predicted observation $\hat{o}_{t+H}$ is provided as input to the policy $\pi(\cdot \mid \hat{o}_{t+H})$ to generate the next action chunk.
We repeat this process $T/H$ times and aggregate all action sequences together into one $T$-length sequence 
$\action_{t:t+T}$
with corresponding imagined observations $\hat{\obs}_{t:t+T}$. By executing this process in parallel for $K$ samples, we produce a set of predicted observation sequences $\{\hat{\obs}_{t:t+T}^k\}_{k=1}^K$ that are (approximate) future outcomes. 


\para{Narration}
Next, the imagined outcomes $\{\hat{\obs}_{t:t+T}^k\}_{k=1}^K$ are translated into textual narrations by leveraging the strong reasoning capabilities of the VLM verifier.  
In our experiments, we utilize a Gemini model \cite{team2023gemini} to produce these narrations. Let $\vlmtrans$ be the VLM narration model and for any $\hat{\obstraj}^k_t \in \{\hat{\obstraj}_t^k\}_{k=1}^K$, let the corresponding narration be $\lang_t^k = \vlmtrans(\hat{\obstraj}^k_t, \task)$. 
We augment the set of narrations with a $(K+1)$-th option stating that ``none of the above samples are correct'' to ensure the system can detect incapability.

\begin{figure*}[t!]
    \centering
    \includegraphics[width=0.85\linewidth]{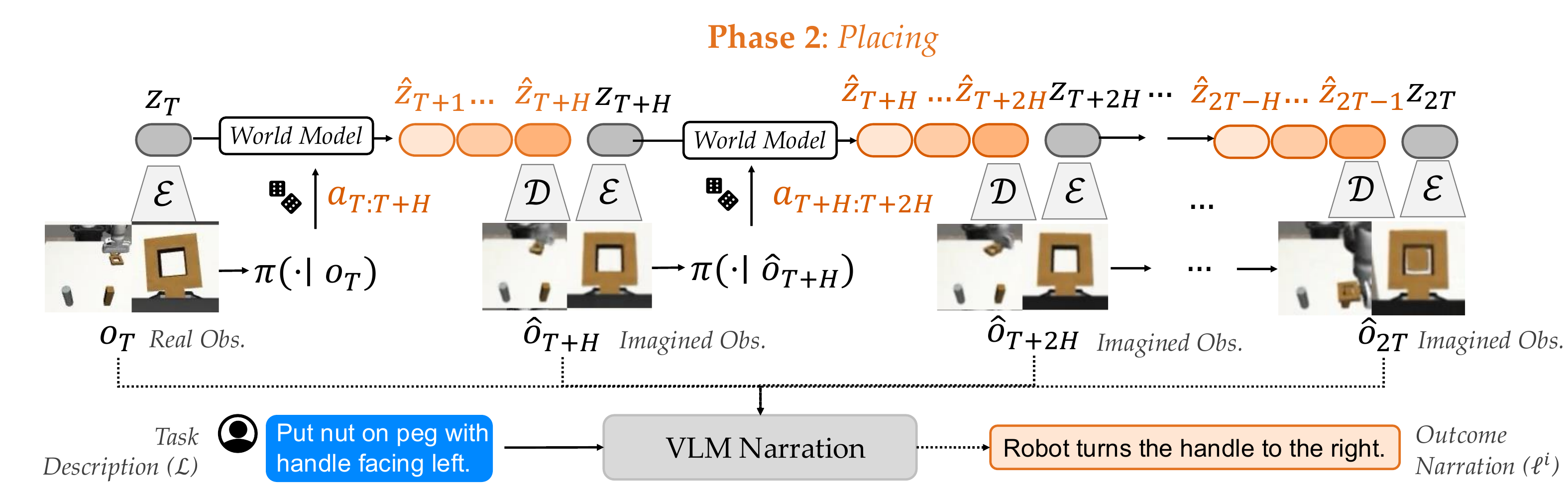}
    \caption{\textbf{Outcome Prediction \& Narration.} The policy and the world model are interleaved to predict long-horizon outcomes induced by the low-level policy. Decoded observations are fed into a VLM which narrates the outcomes in text. }
    \label{fig:interleaved-narration}
    \vspace{-1.4em}
\end{figure*}

\para{Verification}
Finally, the VLM verifier selects an option from the set of narrations (along with the ``none'' option) by answering an open-ended multiple-choice problem over these narrations.
Mathematically, this amounts to the VLM selecting the most likely single token corresponding to one of the multiple choice options: $y_t = \argmax_{y \in \mathcal{Y}} \score(y \mid \{\lang_t^{k}\}_{k=1}^{K+1}, \task)$. 
This formulation aligns with the VLM’s native log-likelihood loss and, as noted in \cite{ren2023robots}, and eliminates length bias when estimating the probability of open-ended textual generations.

\subsection{VLM Verifier Uncertainty Quantification}


We employ conformal prediction (CP) ~\cite{angelopoulos2023conformal} to quantify the VLM verifier's uncertainty during policy steering. 
Specifically, our goal is to use the potentially unreliable model likelihoods in $\score$ to construct tight prediction sets which are provably guaranteed to 
contain the correct action with probability of at least $1-\epsilon$, where $\epsilon$ is a user-defined error budget. 

\para{Conformal Prediction}
Given some \textit{input} $x$ to a prediction model, conformal prediction aims to construct a prediction set $C(\cpinput) \subseteq \mathcal{Y}$ on the \textit{outputs} of that model that contains the true label $y$ with a probability of at least $1-\epsilon$ while minimizing the set size $|C(\cpinput)|$. 
This calibration requires a dataset $(x, y) \in \mathcal{D}_{\text{calib}}$ of independent and identically distributed ($i.i.d.$) input-output pairs drawn from the same distribution as the test domain. 
However, because we are in the sequential-decision-making setting, any input to the VLM verifier depends on prior steering decisions and thus are not $i.i.d.$. 

To maintain formal coverage guarantees despite these temporal dependencies, we perform sequence-level calibration. 
Assume that the VLM verifier is called $M$ times during the task. 
This means we have $M$ true observations that the verifier uses, $M$ sets of behavior narrations, and---for calibration---$M$ true labels about which behavior narration(s) are correct at each timestep. Let the sequence-level calibration dataset be structured as follows:
$$
\{o_i, \{\{\lang^k\}^K_{k=1}\}_i, \task, \mathcal{Y}^*_i\}_{i=0}^M \in \mathcal{D}_{\text{calib}}, \quad |\mathcal{D}_{\text{calib}}| = N.
$$
For notational simplicity, denote any single \textit{input}
$x_i = (o_i, \{\{\lang^k\}^K_{k=1}\}_i, \task)$ and (set of) ground-truth label(s) $\mathcal{Y}^*_i$ obtained from an end-user. Then, denote a \textit{sequence} of inputs and labels as $\bar{x} = (x_0, \hdots, x_M)$ and $\bar{y} = (\mathcal{Y}^*_0, \hdots, \mathcal{Y}^*_M)$.
To construct $\mathcal{D}_{\text{calib}}$, we sample multiple initial observations and different task instructions, 
use our interleaved generation-imagination loop to obtain $K$ candidate narrations and the ``none'' option, annotate the correct set of indices, and execute the correct action; this results in the next observation that the verifier sees and serves as the beginning of next generation-imagination loop, and so on. 

Given this calibration dataset $\mathcal{D}_{\text{calib}}$, we use the VLM verifier's learned distribution $\score$ to compute the \textit{non-conformity score}, which is defined for any data point $(\bar{x},\bar{y}) \in \mathcal{D}_\textrm{calib}$ as:
\begin{equation}
\kappa(\bar{x}, \bar{y}) = 1- \min_{x_i \in \bar{x}}\min_{y \in \mathcal{Y}^*_{i}}\score(y \mid  x_i).
\label{eq:score-func-ours-multi-phase}
\end{equation}
We compute this quantity for all $N$ sequences in the calibration dataset. 
There are a few points worth noting here. 
First, intuitively, for any calibration data point, the higher the score, the less the data ``conforms'' to the training data of the verifier. 
Second, we design this score function to compute a \textit{minimum} over the entire sequence (outer min in Eq.~\ref{eq:score-func-ours-multi-phase}) to 
ensure coverage guarantees despite these temporal dependencies.
Finally, the inner minimium in Eq.~\ref{eq:score-func-ours-multi-phase} is a conservative score designed to penalize the model for not including \emph{all} suitable options in the prediction set. 
From a user perspective, this avoids scenarios where the verifier guesses the user’s unstated intent and incentivizes the verifier to ask the user about the entire set of plausible choices.

Finally, CP calibrates the VLM verifier by selecting the $\hat{q}=\frac{\lceil (N+1)(1-\epsilon)\rceil}{N}$ empirical quantile of the nonconformity scores $\{\kappa^1,...,\kappa^N\}$ and uses $\hat{q}$ to construct the prediction set $C(\cpinput)$. This includes all labels that the VLM verifier is at least $1-\hat{q}$ confident about, 
ensuring the $1-\epsilon$ coverage guarantee \cite{vovk2005algorithmic, ren2023robots}.

\para{Deployment Time}
At deployment time, the robot is given a new task description $\task^{\text{test}}$.  
Given the current observation $\obs^{\text{test}}_i$, we obtain $x^\text{test}_i = (\obs^\text{test}_i, \{\{\lang^k\}_{k=1}^K\}_i, \task^\text{test})$ via the same action sampling, prediction, and narration pipeline described above. 
We query the VLM verifier to estimate the probability $\score(y \mid x^\text{test}_i)$ for each choice $y\in \mathcal{Y} = \{1, ..., K, K+1\}$, and the prediction set is generated via \begin{equation}
C(\cpseqinput_i^{\text{test}}) = \{y\in \mathcal{Y} \mid 1-\score(y \mid x_i^{\text{test}}) \leq \hat{q}\}    
\end{equation}
i.e., selecting any index whose probability is larger than the calibrated empirical quantile $1-\hat{q}$. 
Assuming the narrations $\lang$ faithfully represent the outcomes of actions, we formally state the statistical assurance for our calibration procedure.
\begin{theorem}[Verification Coverage Guarantee]
\label{theorem-guarantee}
Let $D_{\text{calib}}= \{ \{(\cpseqinput_i^n, \mathcal{Y}_i^n)\}_{i=0}^M \}_{n=1}^N$ be the calibration dataset with non-conformity score:
\[
\kappa^n := \kappa(\bar{x}^n, \bar{y}^n) = 1- \min_{x_i \in \bar{x}^n}\min_{y \in \mathcal{Y}_i^n}\score(y \mid  \cpseqinput_i^n)  
\]
and let $\hat q$ be the $\frac{\lceil (N+1)(1-\varepsilon)\rceil}{N}$--quantile of $\{\kappa^n\}^N_{n=1}$.  
For a test point $x_i^{\text{test}}$ at any time when the VLM verifier is called $i \in [M]$, where the correct set of labels is $\mathcal{Y}^*_i \subseteq\mathcal{Y}$, define
\[
C(\cpseqinput^{\text{test}}_i)=\{y: 1-\score(y \mid \cpseqinput^\text{test}_i) \le \hat q\}.
\]
Then $\mathbb{P}(\mathcal{Y}^*_i \subseteq C(x_i^{\text{test}})) \ge 1-\varepsilon$.
\end{theorem}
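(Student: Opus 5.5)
The plan is to reduce the statement to the standard split-conformal coverage result applied at the \emph{sequence} level, where exchangeability holds, and then pass from the sequence-level event to the per-timestep event through the two nested minima in $\kappa$.

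\textbf{Step 1: sequence-level exchangeability.} Let the test sequence $(\bar{x}^{\text{test}},\bar{y}^{\text{test}})$ be generated by the same procedure as the $N$ calibration sequences: sample an initial observation and instruction, then run the interleaved generation--imagination loop with the correct action executed at each call. The $N+1$ sequences are then i.i.d., hence exchangeable, even though the individual $x_i$ inside a sequence are dependent. This is exactly why calibration was done at the sequence level, and it is the only probabilistic assumption I will use. I also take as given the stated assumption that narrations faithfully represent outcomes, so that $\mathcal{Y}^*_i$ is well defined on the test sequence.

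\textbf{Step 2: standard quantile lemma.} Set $\kappa^{\text{test}}=\kappa(\bar{x}^{\text{test}},\bar{y}^{\text{test}})$. By exchangeability of $\kappa^1,\dots,\kappa^N,\kappa^{\text{test}}$, the rank of $\kappa^{\text{test}}$ among the $N+1$ scores is uniform, assuming ties are broken uniformly at random. Hence
\[
\mathbb{P}(\kappa^{\text{test}}\le \hat q)\ \ge\ \frac{\lceil (N+1)(1-\varepsilon)\rceil}{N+1}\ \ge\ 1-\varepsilon ,
\]
where $\hat q$ is the $\lceil (N+1)(1-\varepsilon)\rceil$-th smallest calibration score. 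This is the classical argument \cite{vovk2005algorithmic,angelopoulos2023conformal}, and I would cite it rather than reprove it. If $\lceil (N+1)(1-\varepsilon)\rceil>N$, set $\hat q=\infty$, and the bound holds trivially.

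\textbf{Step 3: from the sequence event to the per-step event.} On the event $\kappa^{\text{test}}\le\hat q$, the definition of $\kappa$ gives
\[
\min_{j}\min_{y\in\mathcal{Y}^*_j}\score(y\mid x^{\text{test}}_j)\ \ge\ 1-\hat q .
\]
So for every call index $i$ and every $y\in\mathcal{Y}^*_i$, we have $1-\score(y\mid x^{\text{test}}_i)\le\hat q$, which means $y\in C(x^{\text{test}}_i)$. Thus $\{\kappa^{\text{test}}\le\hat q\}\subseteq\{\mathcal{Y}^*_i\subseteq C(x^{\text{test}}_i)\}$ for each fixed $i$. In fact the stronger simultaneous statement $\bigcap_i\{\mathcal{Y}^*_i\subseteq C(x_i^{\text{test}})\}$ holds. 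Monotonicity of probability then finishes the proof.

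\textbf{Main obstacle.} The step that needs care is Step 1. The test point $x_i^{\text{test}}$ is not itself exchangeable with anything in $D_{\text{calib}}$. The argument therefore has to be phrased so that the guarantee is over the full test trajectory, generated with correct actions executed, as in calibration. The conclusion for "any $i$" is then a corollary of the joint event, not of a per-step exchangeability claim.

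A related subtlety arises if at deployment the robot executes an action other than the true one, for example after clarification. In that case the test trajectory's distribution must still match calibration. I would handle this by assuming the clarification or intervention recovers the correct label, so the executed trajectory coincides in law with the calibration-generating process.
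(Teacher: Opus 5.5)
Your proposal is correct and follows the paper's proof essentially step for step. In both, i.i.d.\ calibration and test sequences make the sequence-level scores exchangeable, the split-conformal quantile lemma gives $\mathbb{P}(\kappa^{\text{test}}\le\hat q)\ge 1-\varepsilon$, and the outer and inner minima in $\kappa$ carry this to every call index $i$ and every label in $\mathcal{Y}^*_i$; your event inclusion $\{\kappa^{\text{test}}\le\hat q\}\subseteq\bigcap_i\{\mathcal{Y}^*_i\subseteq C(x_i^{\text{test}})\}$ merges the paper's Steps 2 and 3 into one cleaner step and states the simultaneous-coverage strengthening explicitly.
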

\smallskip \textbf{\textit{Proof.}} See Appendix~\ref{appendix:proof}.
We follow the sequence-level calibration proof from \cite{ren2023robots}, but show that our non-conformity score includes all the ground-truth labels in our prediction set.



\para{Shaping the VLM's Distribution \textit{Before} CP}
\label{sec:bayesian_intent_score}
While in theory conformal prediction provides distribution-free coverage guarantees,
the tightness of the resulting prediction sets depends on the non-conformity score. 
Our score in Eq.~\ref{eq:score-func-ours-multi-phase} directly depends on the VLM's distribution, $\mathbb{P}^\text{VLM}$.
Naively querying a VLM for likelihoods over multiple choices results in significantly overconfident distributions, especially in ambiguous or incapable scenarios.  
Although CP can compensate for this, it leads to overly-large prediction sets that are not informative and result in the robot over-asking for help. 
Thus, the raw softmax probabilities from the logits of the VLM verifier are \textit{not} an ideal way to inform our score function   $\score$ from Eq.~\ref{eq:score-func-ours-multi-phase}.


Our key idea is shape $\mathbb{P}^\text{VLM}$ even \textit{before} calibration by \textit{factorizing} the VLM's reasoning process such that the uncertainty is more explicit.
Our factorization is inspired by Bayesian models of human intent \cite{baker2007goal,mullen2025lbap} which decouple inferring human intent (e.g., unstated aspects of the task) and the likelihood of behaviors given a specific intent:
\begin{align}
    \mathbb{P}^{\text{VLM}}(y \mid \{\lang^k\}^K_{k=1}; \task ) 
    &=\sum_{\theta \in \Theta}  \mathbb{P}(y\mid \{\lang^k
    \}^K_{k=1}, \theta)  \mathbb{P}(\theta \mid  \task ).
    \label{eq:factorized-reasoning}
\end{align}
Given the original task instruction $\mathcal{L}$ (e.g., ``Put the block in my dominant-hand side hole``), we query the VLM three times to estimate each component of Eq.~\ref{eq:factorized-reasoning}. 
First, we ask the model to hypothesize a set of hidden human intents $\theta \in \Theta$ (e.g., ``Insert the block into with the $\Theta=$[left/right] hole'') alongside their respective probabilities, $\mathbb{P}(\theta \mid \mathcal{L})$. 
Then, we ask the VLM to infer the likelihood of each option, \textit{given a hypothesized user intent}: $\mathbb{P}(y\mid \{\lang^k\}^K_{k=1}, \theta)$. 
The final distribution is obtained by marginalizing over the space of user intents.

Intuitively, this factorization leverages the commonsense reasoning abilities of the VLM in a structured way to  
re-calibrate $\mathbb{P}^\text{VLM}$ intrinsically even before non-conformity score computation.  
We find this to be especially useful in incapable scenarios or when one behavior mode is underrepresented in the action samples. 
For example, consider the scenario where all action samples insert the block into the right hole (i.e., $\lang^k$ = ``Robot inserts the block into the back right hole'', $\forall k \in \{1,\hdots,K\}$).
Naively querying $\mathbb{P}^\text{VLM}$ results in an extremely biased distribution in favor of selecting the option with block in the right hole, 
ignoring the possibility that the user is left-handed. 
In our factorized approach, the VLM is asked to reason about hidden aspects of the task or user intents, and thus includes the possibility that the user is right \textit{or} left-handed in $\Theta$. This reshapes the final distribution over options, forcing the VLM to place non-trivial probability mass on the $K+1$th ``none'' option. 
In turn, this results in the reduced overconfidence of the VLM in incapable scenarios and enables the robot to ask for help only when necessary.
In ambiguous scenarios, this same principle helps re-weight the distribution over several equally likely behaviors. 
In Sec.~\ref{sec:experiments}, we show empirically that our factorization results in  $\mathbb{P}^\text{VLM}$ that is substantially more calibrated even before conformal adjustment, and outperforms chain-of-thought reasoning (CoT) \cite{wei2022chain}. 
Ultimately, we obtain non-trivial prediction sets with high empirical coverage while also reducing the help rate.

\subsection{Resolving Uncertainty:  From High-level Clarifications to Low-Level Continual Learning}
\label{sec:method-continual-learning}

At deployment-time, the prediction sets constructed by our CP procedure provide the robot with a principled measure of both semantic task uncertainty and policy incapability, implying separate resolution strategies. 
Importantly, our statistical guarantee from Theorem~\ref{theorem-guarantee} ensures that the VLM verifier can steer the low-level policy to success in $1-\epsilon$ proportion of scenarios by either executing the correct action sample or selecting the correct resolution strategy. We describe each strategy in detail below.

\para{Confident and Capable Policy} 
If the prediction set is a singleton and contains an index corresponding to one of the low-level action plans, i.e. $|C(\cpseqinput_i^{\text{test}})| =1$ and $K+1 \notin C(\cpseqinput_i^{\text{test}})$, 
then the steering system confidently executes this sequence of actions until the next verification step (bottom right, Fig.~\ref{fig:framework}). 
 

\para{Resolving Task Uncertainty with Textual Clarifications}
\label{sec:method-intent-uncertainty}
If the set size $|C(\cpseqinput_i^{\text{test}})| > 1$, the verifier has identified high-level semantic uncertainty in the task and asks a textual clarification question. 
For example, consider the right part of Fig.~\ref{fig:framework}. Both inserting the block in the left hole and inserting into the right hole options are included in the prediction set when the user gives an ambiguous task description $\task=$``\emph{put the block in my dominant-hand side hole}''.
After asking the user to state their preferences about options, the VLM summarizes user's answer $\tilde{\task}$ 
and replaces the original ambiguous task instruction.
This verify and clarify loop repeats and continues until 
$|C(\cpseqinput_i^\text{test})|=1$. 
The right part of Fig.~\ref{fig:framework} demonstrates this process of resolving the task uncertainty.

\para{Resolving Policy Incapability with Continual Learning}
\label{sec:method-policy-incapability}
When the CP prediction set contains only ``\emph{none of the above}'', i.e., $C(x_i^\text{test}) = \{K+1\}$, the VLM verifier detects that none of the action samples align with the user-requested task with high probability. 
This triggers a low-level resolution strategy via interactive imitation learning. Specifically, the robot randomly selects a trajectory and explicitly asks human supervision to correct potential failures, as shown in Fig~\ref{fig:intervention-methods}. 
This approach improves upon standard interactive imitation learning methods \cite{menda2019ensembledagger, kelly2019hg} in two ways. 
First, unlike human-gated  methods \cite{kelly2019hg} that demand constant monitoring of every trajectory, our robot only requests help when the policy is incapable in \textit{all} samples. 
Second, unlike low-level action-uncertainty methods \cite{menda2019ensembledagger},
our approach gates interventions based on semantic task alignment rather than low-level action variance;
this avoids scenarios where low-level policy has noisy actions, but all result in good outcomes (i.e., the policy does not need to be re-trained). 

We employ residual policy learning~\cite{silver2018residual,jiang2025transic}
and train a lightweight model, $\Delta a \sim \pi^{\text{residual}}(\cdot \mid \obs, \action)$, which predicts the difference between human corrections and base policy outputs alongside a gating classifier that determines when this correction is active. 
At re-deployment, we sample $K$ actions from the frozen base policy and mix the residual policy into half of the samples (when triggered by the gating classifier), while 
half of the samples remain unmodified. 
This sampling strategy mitigates catastrophic forgetting by explicitly retaining the base policy's distribution. 
Finally, because the policy's action distribution shifts after residual policy learning, we recalibrate the VLM verifier, thereby closing the learning loop.

\section{Simulation \& Hardware Experiments}
\label{sec:experiments}

We evaluate our framework in both a simulation task and in robotic hardware with a Franka manipulator. 
In all our experiments, the base policy is an image-conditioned diffusion policy\footnote{Although we focus on diffusion models, our framework can be applied to any base policy that represents a stochastic action distribution and from which diverse action samples can be drawn, e.g., a vision-language-action model~\cite{black2024pi_0}.} \cite{chi2024diffusionpolicy} and the world model is Dreamer-v3~\cite{hafner2023mastering}.  In both simulation and hardware \textbf{PnP Cup} task, we call the verifier twice ($M = 2$) throughout the task and use $M=3$ for  \textbf{Insert Block} task. We perform calibration at this sequence-level.

\para{Simulation Setup}
We use the Robomimic \cite{robomimic2021} benchmark, specifically the square nut-on-peg task.
We chose this task because it requires high-precision and the human Robomimic demonstrations exhibit inherent diversity in \textit{how} the nut is placed on the peg (e.g., handle facing left or right).
We train our low-level diffusion policy with 120 demonstrations: 60 with the handle facing left and 60 facing right. We initially train the world model on 600 trajectories: the 120 demonstrations plus 480 rollouts from our base policy. We further fine-tune the model with 100 additional rollouts from the interleaved policy and world model prediction scheme.


\para{Real World Setup}
We use a Franka Emika robotic manipulator equipped with a Robotiq gripper 
and implement two real-world manipulation tasks: (1) \textbf{PnP Cup} has the robot picking a green cup and placing it into one of two bins (labeled ``clean'' and ``dirty''), and (2) \textbf{Insert Block} has the robot inserting a purple table leg (block with circular peg) into the top left or top right holes of a tabletop.
Images are perceived from a wrist camera and an external camera (see Appendix~\ref{appendix:real-robot}). The second task is more challenging because it requires high insertion precision.
Task uncertainty comes from the user's instructions as to where to place the cup or insert the block. 
For both tasks, the base policy is a diffusion policy that controls the cartesian pose and gripper. 
We train it with 100 demonstrations, covering two behavioral modes: placing the cup in the left/right bin or inserting the block into the left/right hole.
The world model is trained on 350 trajectories: the 100 expert demonstrations plus 250 policy rollouts.

\para{VLM Narration \& Verification} We use the same set of VLM models in simulation and hardware. 
For narrating the imaginations, we use Gemini-3-flash-preview due to its strong video summarization capabilities. 
For verification, we use Gemini-2.0-flash because it exposes the token logits via the API for ablations of conformal prediction in Appendix~\ref{appendix:calibration}.

\subsection{Uncertainty-Aware Steering}

\para{Calibration Dataset}
We collect 80 pairs of initial observation $\obs_0$ and language instruction $\task$. Our tasks have multiple phases $M=2$ or $3$ and we obtain new observation $o_T$ after the each action sequence is executed, gathering data of all phases as our calibration set. 40 of these instructions are ambiguous, such as \textit{``move the cup to a bin''}, and 40 are straightforward, such as \textit{``can you move it to the dirty bin?''} for \textbf{PnP Cup} task. Among straightforward instructions, sometimes the policy does not generate any target actions, leading the scenario to be labeled incapable. From the observation, we obtain the behavior narrations for the imagined rollouts of $K=10$ action samples. 
After narration, we group semantically identical narrations and randomly select a narration from each group to appear as a multiple choice option. We choose $1-\epsilon = 0.85$ for calibration. Details of instructions are in Appendix~\ref{appendix:real-robot}.

\para{Evaluation Dataset}\label{sec:eval_dataset}Similar to our calibration dataset, we maintain a set of $40$ samples in the test set sampled from the same distribution as the calibration dataset. 
During evaluation, we predict the set for each sample and compare that to the ground-truth set. 
If the ground-truth option(s) lie in the prediction set, we have achieved coverage. 
If the set size is larger than 1, we need clarification.

\para{Metrics} We define three metrics commonly used in prior CP work to evaluate our calibrated VLM verifier. \textbf{Coverage} is the proportion of test samples whose ground-truth option(s) are included in the prediction set. The \textbf{Clarification rate} is the proportion of times that the prediction set is larger than 1 and thus the robot asks the human a question. The \textbf{Set size} is the size of the prediction set. 
For coverage, we want to achieve the desired coverage rate $1-\epsilon$ in all three scenarios. 
For the clarification rate, we want a high clarification rate in ambiguous cases and a low one in the other two. For the prediction set, we want the set size to be 2 for ambiguous cases and 1 for other cases.
\begin{figure}[h!]
    \centering
    \includegraphics[width=0.9\linewidth]{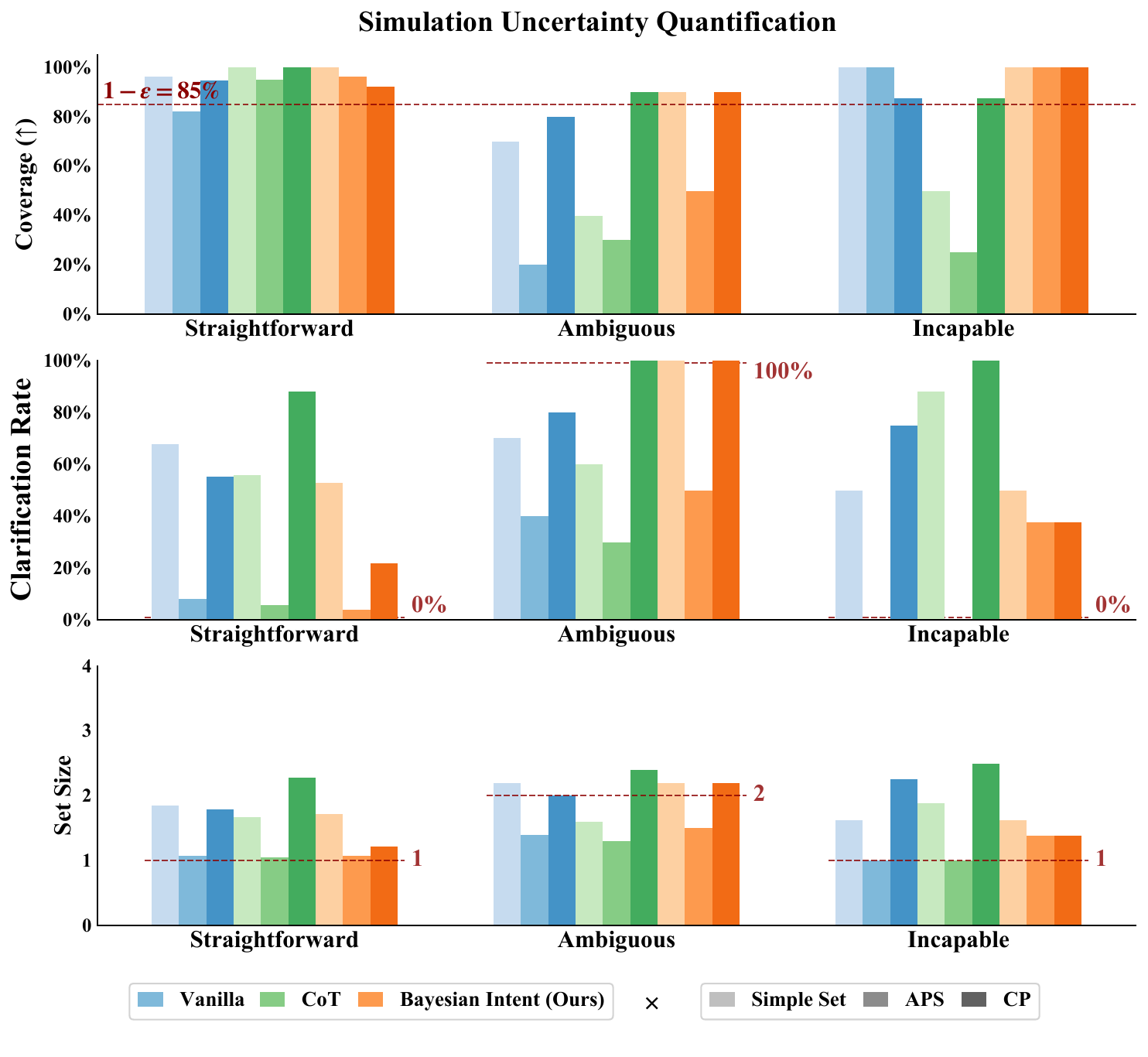}
    \caption{\textbf{Uncertainty Quantification Results: Simulation}. We compare the combination of Vanilla, CoT and Bayesian Intent (Ours) models for UQ. Dashed lines indicate the target coverage ($1-\epsilon = 0.85$), clarification rate, and set size. } 
    \label{fig:results-UQ-simulation}
    \vspace{-1.3em}
\end{figure}
\smallskip  
\begin{figure}[h!]
    \centering
    \vspace{-0.3em}
\includegraphics[width=0.9\linewidth]{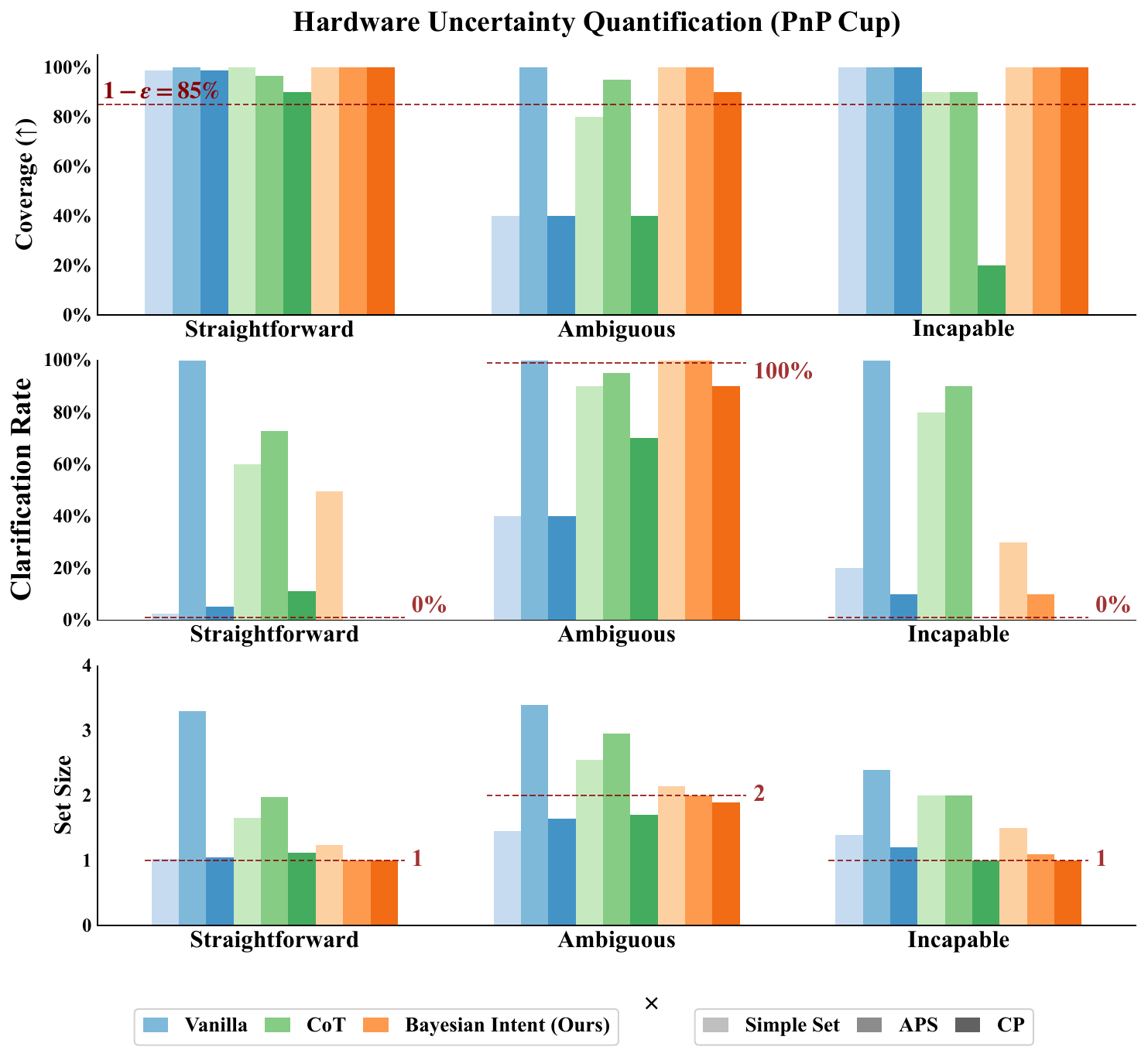}
    \caption{\textbf{Uncertainty Quantification Results: Hardware (PnP Cup)}. Combination of Vanilla, CoT and Bayesian Intent (Ours) models for UQ. Dashed lines indicate the target coverage rate ($1-\epsilon = 0.85$), clarification rate, and set size.}
    \vspace{-1.2em}
    \label{fig:results-UQ-hardware}
\end{figure}

\begin{figure}[h!]
    \centering
    \includegraphics[width=0.9\linewidth]{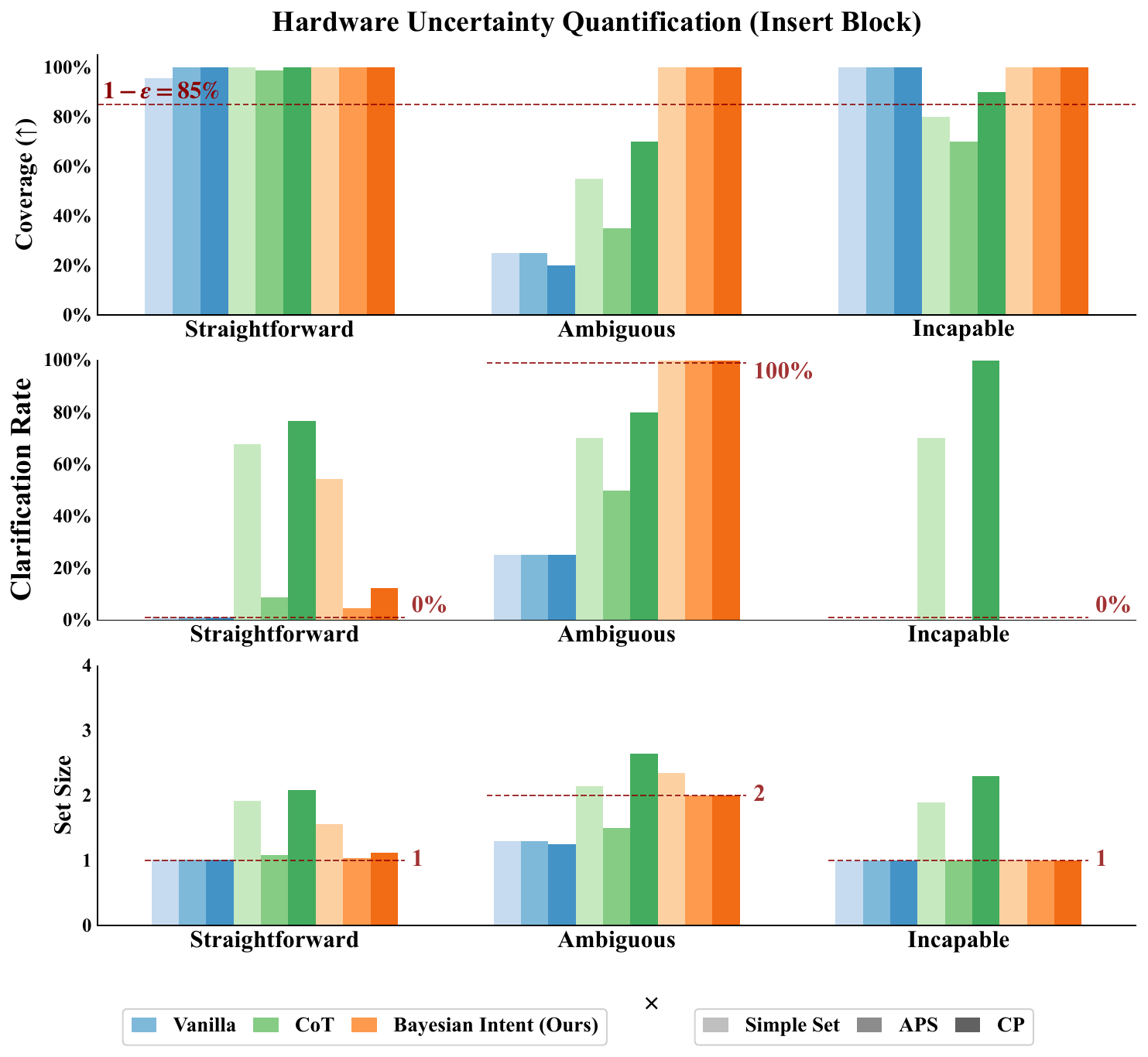}
    \caption{\textbf{Uncertainty Quantification Results: Hardware (Insert Block)}. We compare the combination of Vanilla, CoT and Bayesian Intent (Ours) models for UQ. Dashed lines indicate the target coverage, clarification rate, and set size. } 
    \label{fig:results-UQ-block}
    \vspace{-1.3em}
\end{figure}

\subsubsection{Our Score Function Balances Coverage \& Clarification}\quad
\label{sec:experiment-coverage}

\para{Methods} We evaluate  three different UQ methods: Simple Set, APS~\cite{romano2020classification}, CP~\cite{ren2023robots}, and three different score functions including Vanilla, CoT and Bayesian Intent (Ours). 
This results in 3x3 methods we compare.
\textbf{CP}: we adopt the same conformal prediction scheme  from ~\cite{ren2023robots} but modify the score function to be Eq~\ref{eq:score-func-ours-multi-phase}.
\textbf{Simple Set}: sorts the options from high scores to low and adds the options until the sum exceeds $1-\epsilon$. \textbf{APS} sorts the options from high to low scores and adds the options until the sum exceeds $1-\hat{q}$.
\textbf{Vanilla} uses the distribution $\mathbb{P}^\text{VLM}$ by asking the model to self-generate the probability scores between 0 and 1. 
\textbf{CoT} shapes $\mathbb{P}^\text{VLM}$ by first prompting the VLM to reason step-by-step and then, conditioned on the reasoning, generates the self-reported probability. \textbf{Bayesian Intent} computes intent-conditioned scores $\mathbb{P}^\text{VLM}$ as in Eq.~\ref{eq:factorized-reasoning}.

\para{Evaluation} We collect the evaluation dataset in Sec.~\ref{sec:eval_dataset}. When the policy is incapable under straightforward instruction, it is counted as an incapable scenario. Details of the dataset composition are in Appendix~\ref{appendix:calibration}.

\para{Results} Across simulation and hardware, Fig.~\ref{fig:results-UQ-simulation}, Fig.~\ref{fig:results-UQ-hardware} and Fig.~\ref{fig:results-UQ-block} show that our proposed Bayesian intent score function with CP can achieve strong coverage ($\geq 85\%$) while minimizing the clarification rate compared to Vanilla and CoT. Our proposed Bayesian intent score function achieves the best balance between coverage and clarification rate across all three tasks. 
In contrast, CoT-based methods perform poorly due to conservative reasoning; they frequently overestimate ambiguity (leading to near $100\%$ clarification rates) or exhibit sharp drops in coverage when set sizes are reduced (APS CoT). 
Similarly, Vanilla methods struggle with poor calibration, resulting in either overconfidence or severe under-coverage. 
This indicates that shaping the VLM's uncertainty via factorization  leads to better calibration. 
With our intent-conditioned score function, CP strikes the best balance: it avoids Simple Set's overconservativeness in straightforward cases and APS's undercoverage in ambiguous ones.

\smallskip 
\subsubsection{Uncertainty Improves Policy Steering Performance}
\label{sec:experiment-uq-steering}
Next, we study if our well-calibrated VLM verifier (CP + Bayesian Intent) results in closed-loop performance improvements within our policy steering framework.

\para{Methods} We compare three methods in this section including \textbf{Base Policy}, the low-level diffusion policy, \textbf{Forewarn}~\cite{wu2025foresight}, an \textit{uncalibrated} variant of VLM-in-the-loop policy steering, and \textbf{UPS w/ Clarification} which
uses the calibrated threshold $\hat{q}$ to select the prediction set. When set size $> 1$, it communicates with user through Q \& A and updates its task instructions accordingly to verify again as described in Sec.~\ref{sec:method-intent-uncertainty}.

\para{Evaluation} We take the same dataset of 40 samples as  in Sec.\ref{sec:experiment-coverage}.  
We compute a confusion matrix where true positive (TP) indicates the selected action sample leads to the correct outcome; true negative (TN) indicates the system properly elected to ask for help, false positive (FP) indicates the selected action actually fails; false negative (FN) indicates the system chose to ask for unnecessary help. 
We report the success rate as the number of true positives divided by total number of samples. The detailed analysis is in Appendix~\ref{appendix:real-robot} and ~\ref{appendix:sim}.

\para{Results} Fig.~\ref{fig:continual-learn} shows that although Forewarn improves the success rate in both straightforward scenarios and ambiguous scenarios by steering the base policy to better align with the task instructions, it achieves much smaller gain in ambiguous cases in both tasks (e.g., in the \textbf{PnP Cup} task improvement in straightforward = $45\%$, ambiguous = $15\%$) because the VLM verifier is overconfident. 
In contrast, UPS + w/ Clarification addresses uncertainty in ambiguous cases, further increasing success rate by $15\%$. 
We note that with UQ and clarification, we can already achieve the desired coverage rate of $85\%$ in straightforward cases for the \textbf{PnP Cup} task.

\begin{figure}[h!]
    \centering
    \includegraphics[width=0.9\linewidth]{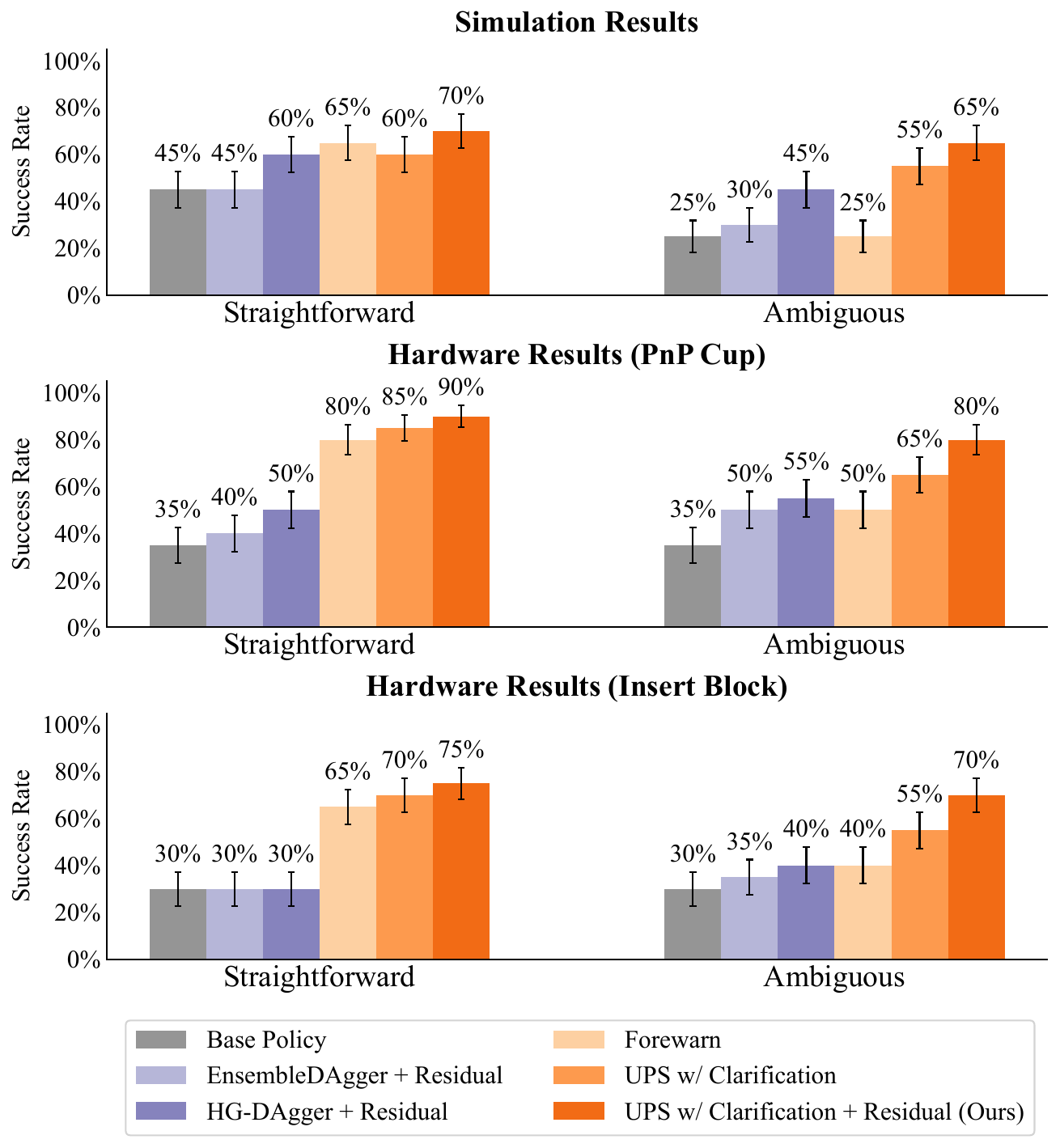}
    \caption{\textbf{Success Rates Pre- and Post-Continual Learning: Hardware and Simulation.} We deploy the robot with 20 straightforward (left) and 20 ambiguous (right) instructions. We average the success rate over 20 trials for each scenario. UPS solicits data in a way which maximizes the final success rate after residual policy training, compared to baselines. }
    \label{fig:continual-learn}
    \vspace{-1.2em}
\end{figure}

\begin{figure}[h!]
    \centering
    \includegraphics[width=0.9\linewidth]{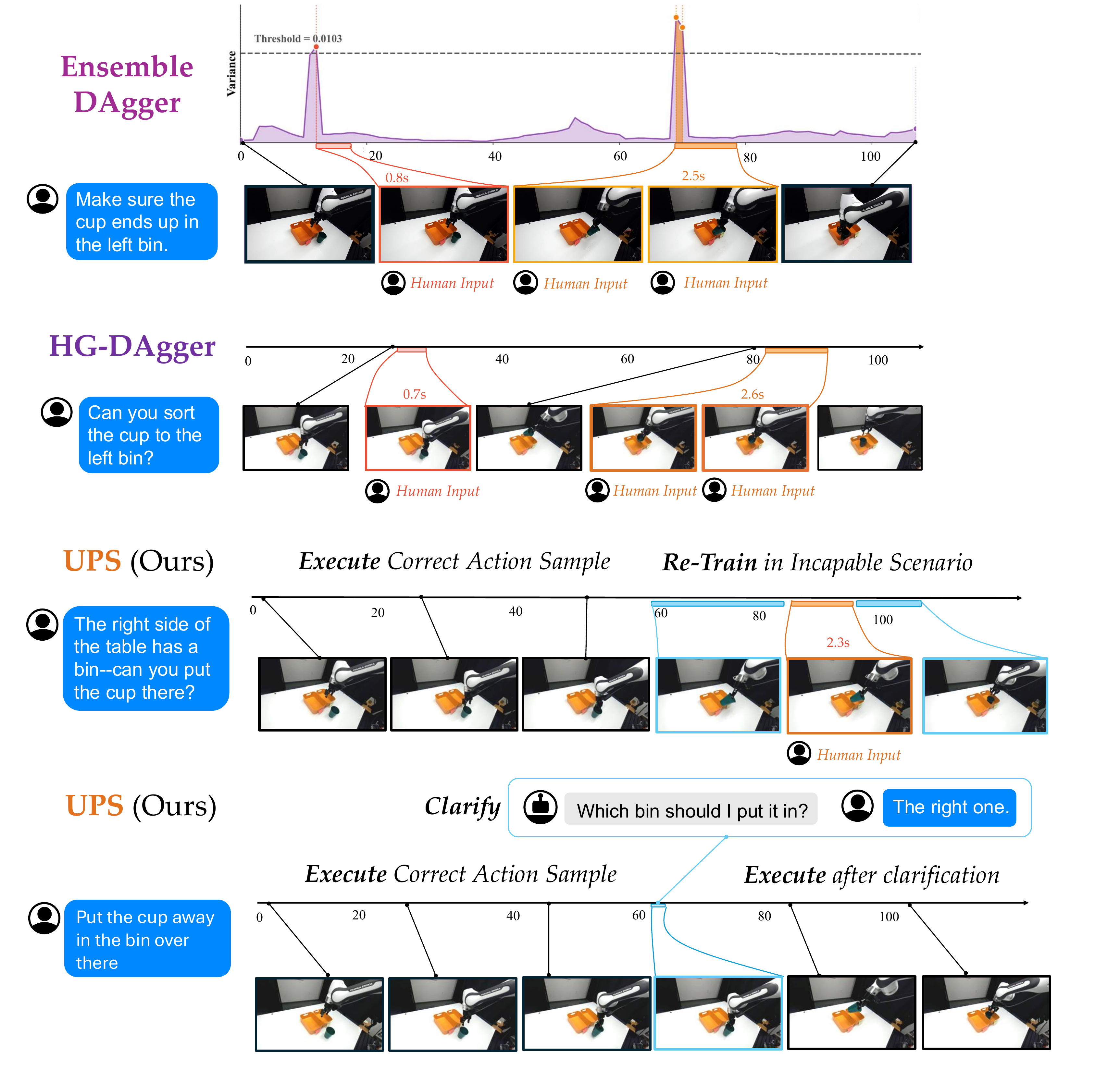}
    \caption{\textbf{Hardware: Robot Asking for Interventions.} 
    EnsembleDAgger (top): the human intervenes whenever the ensemble disagreement exceeds a threshold. 
    Human-Gated (HG) DAgger (middle): a human monitors and intervenes when the robot's behavior deviates from their intention. 
    Our approach (bottom two) asks for ``cheap'' clarifications and only requests interventions when no action sample achieves the task. 
    }
    \label{fig:intervention-methods}
    \vspace{-1.2em}
\end{figure}

\subsection{Continual Learning}

Finally, we close the loop and evaluate how our uncertainty-aware steering method informs low-level data collection for improving the base policy, and how this influences human intervention rate and re-deployment performance.

\smallskip 
\subsubsection{Semantic-level UQ Minimizes Human Feedback}\quad
\label{sec:experiment-mini-feedback}

\para{Evaluation} Similar to  Sec.~\ref{sec:experiment-uq-steering}, we evaluate the policy for 40 trials where each trial has different initial condition and different language instructions. During intervention, the human corrects the robot by controlling the end-effector and gripper through a teleoperation device (e.g., SpaceMouse). We use the same success rate metric as in  Sec.~\ref{sec:experiment-uq-steering}. 
\begin{figure}[h!]
    \centering
\includegraphics[width=0.9\linewidth]{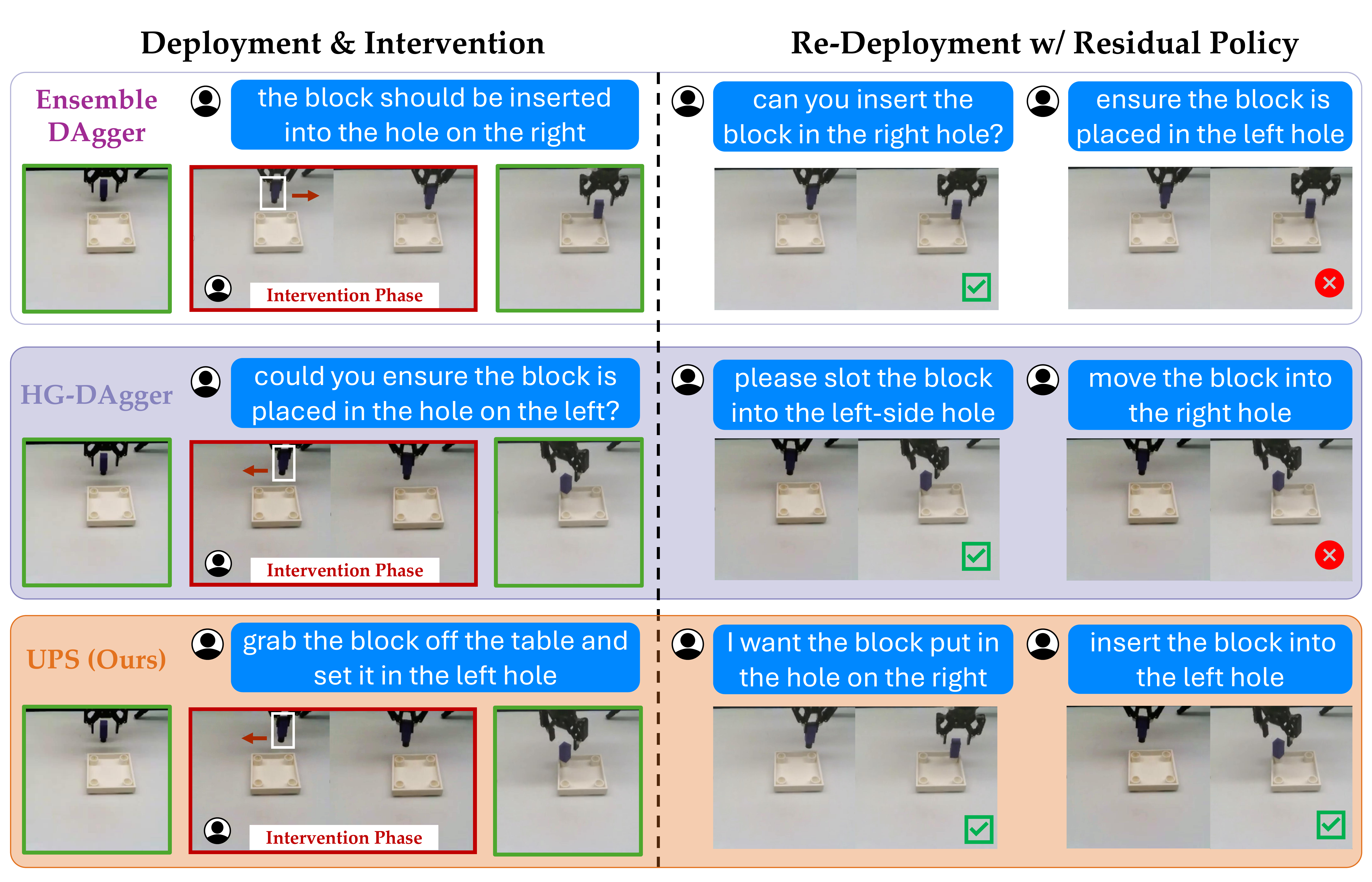}
    \caption{\textbf{  Intervention \& Re-deployment for Insert Block Task}. (left) Different strategies to elicit human interventions: HG-DAgger (top), EnsembleDAgger (middle) and UPS (bottom). (right) UPS maintains the multi-modality, while other methods exhibit failures or mode collapse.}
    \vspace{-0.7cm}
    \label{fig:intervention_method_block}
\end{figure}

\para{Methods} We compare our proposed high-level semantic guided human interventions \textbf{UPS + w/ Clarification + Residual} against two foundational approaches, shown in Fig.~\ref{fig:intervention-methods}: \textbf{HG-DAgger}~\cite{kelly2019hg}, where a human intervenes whenever they see fit, and \textbf{EnsembleDAgger}~\cite{menda2019ensembledagger}, which asks for human interventions when the policy ensemble disagreement is high.

\para{Metrics} For each trajectory, we divide the number of human intervention steps by trajectory length, and then we average this normalized rate across all trajectories to measure overall human effort. 
When it asks a clarification question \textit{without} low-level interventions, we count this as one human intervention step since Q\&A is easier than physical intervention. 
        
\para{Results} \textbf{UPS} has the lowest human intervention rate because it asks for low-level intervention data only when the policy cannot generate a  low-level behavior that is ``semantically aligned'' with the task. 
Our method achieves significantly lower intervention rates than the baselines in both settings. For \textbf{PnP Cup} task, UPS requires interventions at a rate of 0.06, compared to 0.1 for HG-DAgger and 0.16 for EnsembleDAgger, whose low-level uncertainty quantification demands even more human assistance. Similarly, UPS' intervention rate is 0.05, compared to HG-DAgger's rate of 0.06 and EnsembleDAgger's rate of 0.26 in \textbf{Insert Block}. In simulation, the gap widens: UPS's rate is 0.058, while HG-DAgger has an intervention rate of 0.20 and EnsembleDAgger has 0.27.


\smallskip

\subsubsection{UPS Improves Re-Deployment Performance}\quad
\label{sec:experiments-ups-redeployment}

\para{Residual Policy Training} 
For fair comparison among different intervention methods, we fix an intervention budget with the same number of intervention trajectories for all methods. 
We compute the delta action between the intervention and the base action and train the residual policy $\pi^\text{residual}$ with observation and base action as inputs, as shown in Appendix~\ref{appendix:residual}. 

\para{Methods} Similar to Sec.~\ref{sec:experiment-mini-feedback}, we learn a residual policy from interventions collected with different methods: \textbf{EnsembleDAgger + Residual}, \textbf{HG-DAgger + Residual} and \textbf{UPS + w/ Clarification + Residual}. We also compare to base policy without the residual with the metric described in  Sec.~\ref{sec:experiment-uq-steering}. 

\para{Results} Fig.~\ref{fig:continual-learn} shows that our residual policy continues to improve from \textbf{UPS w/ Clarification} only because it reduces the proportion of incapable scenarios; it obtains a $15\%$ increase in success rate in ambiguous settings for both hardware tasks.  
Across both types of scenarios, our final success rate is $85\%$ for \textbf{PnP Cup} task. 
This empirical result  corroborates  Theorem~\ref{theorem-guarantee}, demonstrating that the 
verifier can steer the low-level policy to success in $1-\epsilon$ of scenarios by either executing the right sample or selecting the appropriate strategy (asking clarification questions when ambiguous or eliciting human interventions when incapable). In simulation and \textbf{Insert Block} task, we achieve slightly lower coverage due to the challenges of simulating precise contact-rich manipulation tasks with the current world model.
On the other hand, both baseline DAgger methods struggle to learn new behaviors from intervention data while maintaining the original capabilities, as shown in Fig.~\ref{fig:intervention_method_block}. 
We hypothesize
the learned residual policy may bias the final policy distribution towards certain behavior modes of doing the task, or may push the model towards new incapable modes. 
\section{Conclusion \& Limitations}

In this work, we propose an uncertainty-aware policy steering system which rigorously calibrates a verifier's uncertainty over low-level action samples. Our approach also resolves uncertainty with high-level clarifications or low-level re-training to achieve overall success rate in $1-\epsilon$ of scenarios. 
Since we focus on uncertainty quantification of VLM verifiers, 
we assumed the imaginations from world model always match future outcomes and behavior narrations are correct and complete. 
Interesting future work incorporates the uncertainty of the world model (and narrations)  \cite{mei2025world}, 
for more robust system.

\section{acknowledgements}
The authors were partially supported by the National Science Foundation (NSF) award $[\#2246447]$, NSF CAREER award $[\#2441014]$, and Samsung Research through the LEAP-U program. The views expressed are those of the authors and do not necessarily reflect those of NSF or Samsung. We thank Junwon Seo and Michelle Zhao for helpful discussions. 
\newpage

\bibliographystyle{plainnat}
\bibliography{references}
\clearpage
\appendix
\section{Appendix}

\subsection{Proof of Coverage Guarantees}
\label{appendix:proof}

To maintain formal coverage guarantees despite temporal dependencies, we follow the same proof style as in~\cite{ren2023robots} and perform sequence-level calibration using the minimum score over phases.
    
\begin{proof}
Let the calibration dataset be \textit{i.i.d.} sequences of $(\bar{x}^n, \bar{y}^n)$,
\(
(\bar x^{\,n},\bar y^{\,n}) \stackrel{\text{i.i.d.}}{\sim} \mathcal D
\):
\[
D_{\text{calib}}=\{\{(x_i^n, \mathcal{Y}_i^n)\}_{i=0}^M\}_{n=1}^N
=\{(\bar x^{\,n},\bar y^{\,n})\}_{n=1}^N,
\]
where \(\bar x^{\,n}\) is a sequence of query points and \(\bar y^{\,n}\) assigns to each
\(x_i^n\in\bar x^{\,n}\) a set of correct labels \(\mathcal{Y}_i^n\subseteq\mathcal Y\).
For any sequence \((\bar x,\bar y)\), define the per-point ``multi-option'' nonconformity score
\[
s(x_i,\mathcal{Y}_i) \;:=\; 1-\min_{y\in \mathcal{Y}_i} \mathbb{P}^\text{VLM}(y\mid x_i),
\label{def:per_point_score}
\tag{1}\]
and define the sequence-level nonconformity score (equivalently to the theorem's definition)
\[
\kappa(\bar x,\bar y)
\;:=\;
1-\min_{x_i\in \bar x}\ \min_{y\in \mathcal{Y}_i} \mathbb{P}^\text{VLM}(y\mid x_i)
\;=\;
\max_{x_i\in \bar x}\, s(x_i,\mathcal{Y}_i).
\tag{2}
\label{def:score}
\]
Let \(\kappa^n:=\kappa(\bar x^{\,n},\bar y^{\,n})\) be the $n$-th calibration score for
\(n\in[N]\), and let \(\hat q\) be the split-conformal quantile of
\(\{\kappa^n\}_{n=1}^N\) at level \(1-\varepsilon\), i.e.\ the
\(\lceil (N+1)(1-\varepsilon)\rceil\)-th order statistic (standard split conformal rule).

\paragraph{Step 1: The standard conformal prediction guarantee for $i.i.d.$ samples}\quad

Draw an independent test sequence
\[
(\bar x^{\,\text{test}},\bar y^{\,*})\sim \mathcal D,
\qquad
\bar x^{\,\text{test}}=\{x_i^{\text{test}}\}_{i=0}^M,\ 
\bar y^{\,*}=\{\mathcal Y_i^*\}_{i=0}^M,
\]
and define its sequence-level score \(\kappa^{\text{test}}:=\kappa(\bar x^{\,\text{test}},\bar y^{\,*})\).
Since \((\bar x^{\,1},\bar y^{\,1}),\dots,(\bar x^{\,N},\bar y^{\,N}),(\bar x^{\,\text{test}},\bar y^{\,*})\)
are \textit{i.i.d.} and \(\kappa(\cdot,\cdot)\) is a deterministic function of a sequence,
the $(N+1)$ random variables \((\kappa^1,\dots,\kappa^N,\kappa^{\text{test}})\) are \textit{i.i.d.}.
Hence, by the standard split conformal quantile guarantee,
\[
\mathbb{P}\!\left(\kappa^{\text{test}}\le \hat q\right)\ \ge\ 1-\varepsilon.
\tag{3}
\label{eq:sequence-prob}
\]

\paragraph{Step 2: Sequence-level calibration guarantees the per-point coverage as shown in Claim 1 in ~\cite{ren2023robots}}\quad 

Define the conformal prediction set for any query point \(x\) in any sequence by
\[
C(x)\;:=\;\{y\in\mathcal Y:\ 1-p^{\text{VLM}}(y\mid x)\le \hat q\}.
\tag{4}
\label{def:prediction_set}
\]
From Definition ~\ref{def:score}, we have $\kappa^{\text{test}}=\max_{x_i^{\text{test}}\in \bar x^{\,\text{test}}}s(x_i^{\text{test}},\mathcal Y_i^*)$
Since from Eq.~\ref{eq:sequence-prob}, we have \[
\mathbb{P}\!\left(\kappa^{\text{test}}\le \hat q\right)\ \ge\ 1-\varepsilon.
\]
Then \[
\mathbb{P}\!\left(\max_{x_i^{\text{test}}\in \bar x^{\,\text{test}}}s(x_i^{\text{test}},\mathcal Y_i^*)\le \hat q\right)\ \ge\ 1-\varepsilon.
\]
Given that 
\[\forall x^{\text{test}} \in \bar{x}^{\text{test}}, s(x^{\text{test}},\mathcal Y_i^*) \leq \max_{x_i^{\text{test}}\in \bar x^{\,\text{test}}}s(x_i^{\text{test}},\mathcal Y_i^*)\]
We have
\[\forall x^{\text{test}} \in \bar{x}^{\text{test}}, \mathbb{P}\!\left(s(x_i^{\text{test}},\mathcal Y_i^*)\le \hat q\right)\ \ge\ 1-\varepsilon \tag{5}\label{eq:per-point-guarantee} \]
This indicates if the test sequence $\bar{x}^\text{test}$ satisfies the guarantee, then any query point $x^\text{test}$ in the test sequence also satisfies the guarantee.


\paragraph{Step 3: Minimum score over true option sets guarantees all true options are included}\quad

From Definition~\ref{def:per_point_score} and Eq.~\ref{eq:per-point-guarantee}, we know $\forall x^\text{test} \in \bar{x}^\text{test}$, 
\[\mathbb{P}\!\left(1- \min_{y \in \mathcal{Y}^*} \score (y \mid x_i^{\text{test}})\le \hat q\right)\ \ge\ 1-\varepsilon \]
Given that
\[\forall y' \in \mathcal{Y}_i^*, 1- \score(y' \mid x_i^\text{test})\leq 1-\min_{y\in \mathcal{Y}_i^*}\score(y \mid x_i^\text{test})\]
We have $\forall y' \in \mathcal{Y}_i^*$,
\[\mathbb{P}\!\left(1- \score(y' \mid x_i^\text{test})\leq 1-\min_{y\in \mathcal{Y}_i^*}\score(y \mid x_i^\text{test}) \leq \hat{q}\right)\geq 1-\varepsilon\]

Therefore, combined with Definition~\ref{def:prediction_set}, we get 
\[
\mathbb{P}\!\big(\mathcal Y_i^*\subseteq C(x_i^{\text{test}})\big)\ \ge\ 1-\varepsilon,
\] which proves the claim.
\end{proof}
We have proved that \( \mathbb{P}\!\big(\mathcal{Y}_i^* \subseteq C(x_i^{\text{test}})\big) \ge 1-\epsilon \), where the probability is taken over the random draw from the calibration set \(D_{\text{calib}}\) and the test pair \(\big(x_i^{\text{test}}, \mathcal{Y}^*_i\big)\); that is, the result is a marginal coverage guarantee. Consequently, achieving the target coverage for each new test point would, in principle, require a freshly sampled calibration set. However, following~\cite{ren2023robots}, we use the dataset-conditional guarantee
--- conditioning on the realized calibration dataset --- which can be applied to new test samples without re-calibration.

\subsection{Aside on our score function design}
Prior work \cite{ren2023robots}
uses the \textit{max} score to get an existence guarantee in Eq~\ref{eq:score-func-knowno-multi-phase} and is designed to ensure the prediction set contains \emph{at least one} acceptable choice and then commits to a single selected option.
We aim to include \emph{all} suitable options in the prediction set by using a more conservative score that minimizes over options to achieve a completeness guarantee (Eq.~\ref{eq:score-func-ours-multi-phase}). From a user perspective, this avoids guessing the user’s unstated intent and instead displays the set of plausible choices for quick disambiguation.

\begin{equation}
\{\Tilde{\kappa}_i^n = 1- \min_{1\leq i\leq h}\eqnmarkbox[gray]{labelset}{\max_{y \in \mathcal{Y}_i^n}}\score(y \mid  \cpseqinput_i^n)\}_{n=1}^N    
\label{eq:score-func-knowno-multi-phase}
\end{equation}
\annotate[ xshift=0.5em, yshift=0.2em]{below,left}{labelset}{KnowNo maximizes over \\multiple ground-truth options }

\vspace{0.2em}
\begin{equation}
\{\Tilde{\kappa}_i^n = 1- \min_{1\leq i\leq h}\eqnmarkbox[solid_green]{labelset}{\min_{y \in \mathcal{Y}_i^n}}\score(y \mid  \cpseqinput_i^n)\}_{n=1}^N    
\label{eq:score-func-ours-multi-phase-color}
\end{equation}
\annotate[xshift=0.4em]{below,right}{labelset}{Ours minimizes over
\\multiple ground-truth options}
\vspace{0.4em}
\subsection{Real Robot (Hardware)}
\para{Hardware setup: PnP Cup}
\label{appendix:real-robot}
\begin{figure*}
    \centering
    \includegraphics[width=\linewidth]{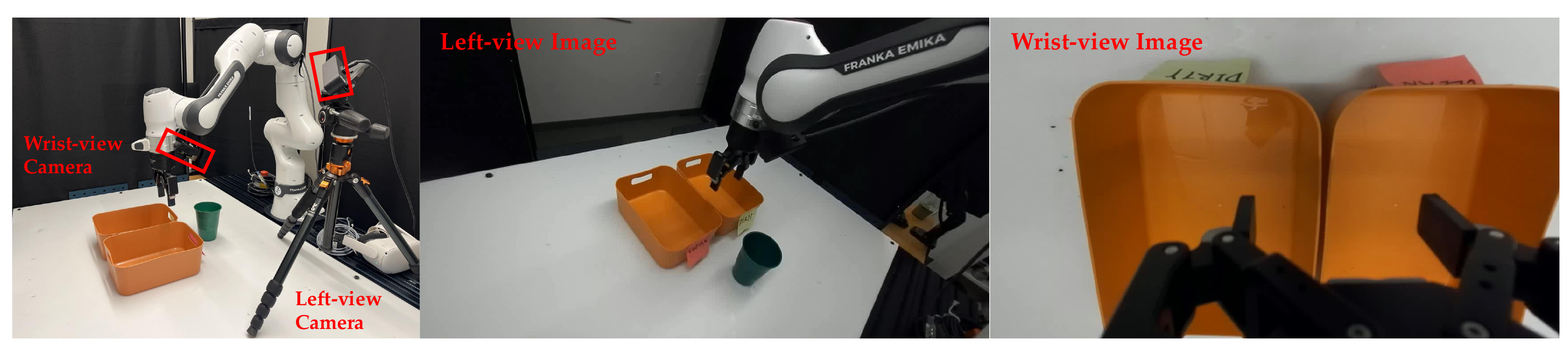}
    \caption{\textbf{Hardware Setup: PnP Cup.} We demonstrate our hardware environment setup with a Franka Emika Panda arm and two Zed cameras (left image). In the middle, we show the left-view image captured by a Zed 2i camera and on the right, we show the wrist-view image captured by a Zed Mini camera. }
    \vspace{-0.3cm}
    \label{fig:hardware_setup}
\end{figure*}
Fig.~\ref{fig:hardware_setup} shows the setup for our real-world task of placing a cup into a bin. On the table, a green cup is positioned in front of two orange bins: the left bin is marked ``clean'' with a pink tag, and the right bin is marked ``dirty'' with a yellow tag. Both the left-view and wrist-view images have dimensions of $1920\times1080$. The height is zero-padded to match the width, and the resulting square images are resized to $256\times256$ for input to the policy and world model. To achieve the goal of placing the cup in a bin, the robot can complete the task in two ways by selecting either the left or right bin, as shown in Fig.~\ref{fig:hardware_rollout}.
\begin{figure}
    \centering
    \includegraphics[width=\linewidth]{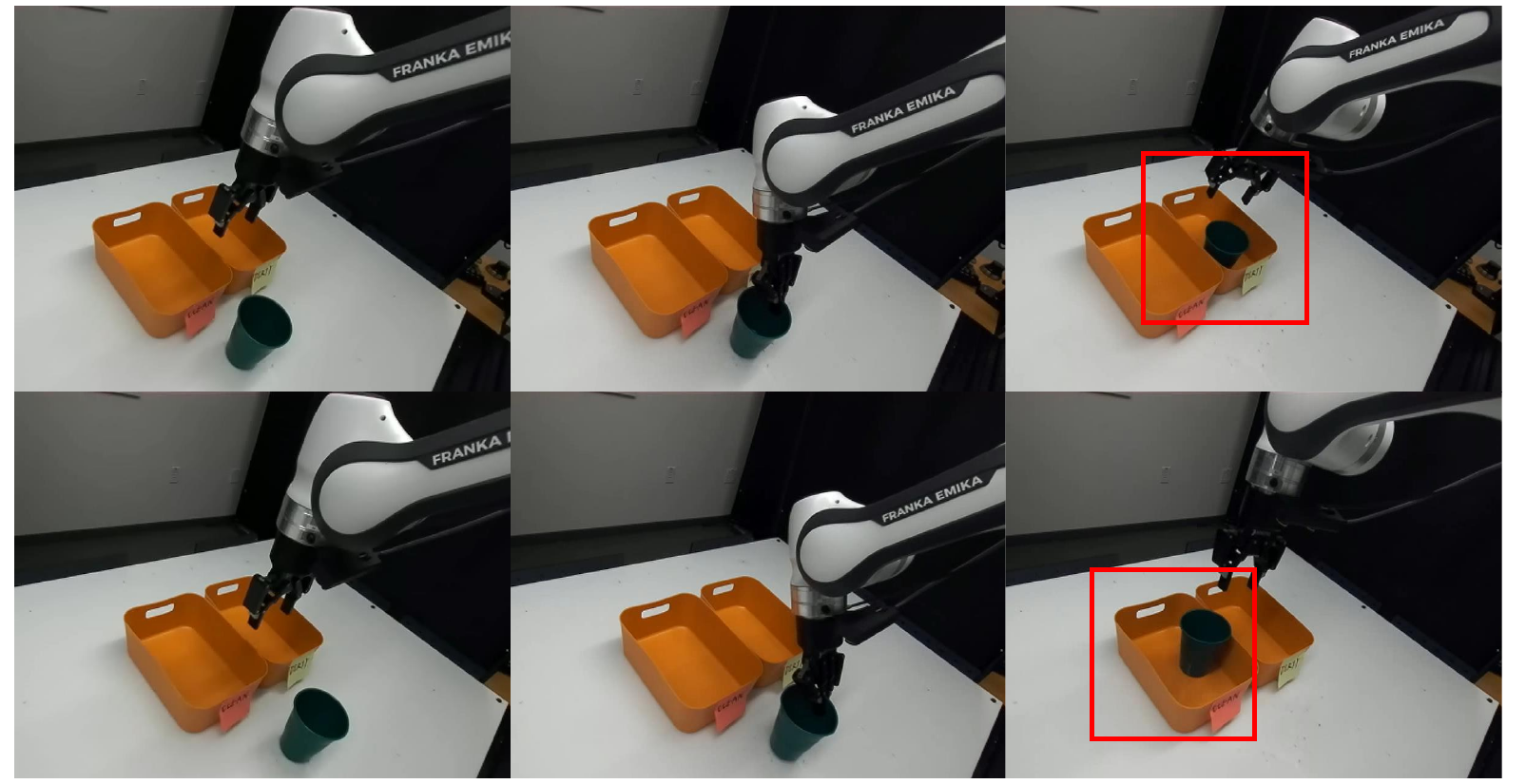}
    \caption{\textbf{Hardware Examples of the PnP Cup Task.} We show two different ways of achieving the task of placing the cup in the bin. The top row shows the cup placed in the left bin while the bottom row shows the cup placed in the right bin.}
    \vspace{-0.3cm}
    \label{fig:hardware_rollout}
\end{figure}

\para{Hardware setup: Insert Block}
The second real-world task requires greater precision: a purple block with a cylindrical base sits on a table in front of a white base with four round holes; the goal is to place the block into the user-specified hole. A RealSense camera captures a front view ($640\times480$) and a Zed camera captures a wrist view ($1280\times720$). For each, the width is cropped from the center to match the height, yielding a square image that is then resized to $256\times256$ for input to the diffusion policy and world model. As shown in Fig.~\ref{fig:hardware_rollout}, the robot places the block in either the top-left or top-right hole.
\begin{figure*}
    \centering
    \includegraphics[width=\linewidth]{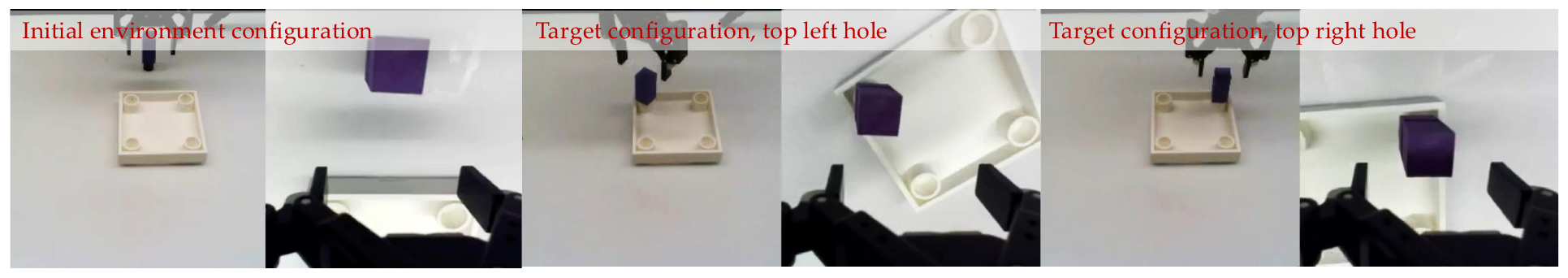}
    \caption{\textbf{Hardware Setup: Block.} We demonstrate our hardware setup for the block task using a Franka Emika Panda arm. The leftmost two images show the initial environment setup, while the right four images show the target configurations for the two behavior modes, as labeled. In each image pair, the left image provides the third-person camera view and the right image provides the wrist view.}
    \vspace{-0.3cm}
    \label{fig:hardware_rollout}
\end{figure*}

\para{Task Instructions} We divide the different instructions a user might give for the robot to complete the task into two categories: 1) \emph{straightforward} scenarios; 2) \emph{ambiguous} scenarios. In Table~\ref{tab:straightforward_hardware_instruction} and Table~\ref{tab:ambiguous_hardware_instruction}, we list all task instructions used to construct the calibration dataset for \textbf{PnP Cup} task. In Table~\ref{tab:ambiguous_block_instruction}, we list all task instructions used to construct the calibration dataset for \textbf{Insert Block} task. The instructions at test-time are randomly sampled from this same set of instructions.

\para{Prompts} In our proposed framework and corresponding baselines, we use a VLM for a variety of purposes. Therefore, we list all the prompts we use in our experiments. 

Specifically, to enable the VLM to serve as a behavior narration translator $\vlmtrans$, we use the prompt listed in Fig.~\ref{fig:hardware_prompt_translate_grasp} during the grasping phase and Fig.~\ref{fig:hardware_prompt_translate_place} during the placing phase in our hardware settings. For VLM verification in our Forewarn baseline, we follow prior work~\cite{wu2025foresight} by directly prompting the VLM to select an option. The prompts are listed in Fig.~\ref{fig:hardware_prompt_forewarn_verifier}.

We also include the prompts for different score functions evaluated in Sec.~\ref{sec:experiment-coverage}. For the \textbf{Vanilla} baseline, the prompts are listed in Fig.~\ref{fig:hardware_prompt_vanilla_baseline}. For the \textbf{CoT} baseline, we include the two-step prompts that reason first about the ambiguity and then, based on this reasoning, select the option and generate the scores. For the hardware setting, the two-step prompts are included in Fig.~\ref{fig:hardware_prompt_cot_step1} and Fig.~\ref{fig:hardware_prompt_cot_step2}. For our \textbf{Bayesian Intent} method, there are three steps: 1) generate the set of possible intents; 2) evaluate each intent's probability among the set; 3) evaluate each behavior mode's probability under each specific intent. In the first step, we use the prompt in Fig.~\ref{fig:hardware_prompt_intent_generation}. Next, we generate intent probability with the prompt in Fig.~\ref{fig:hardware_prompt_intent_probability}. Finally, we infer behavior probability with the prompt in Fig.~\ref{fig:hardware_prompt_behavior_probability}.

In addition to translation and verification, the VLM generates clarification questions and automatically updates user instructions based on the responses. The prompts for this clarification phase are the same for hardware and simulation. We use the prompt in Fig.~\ref{fig:hardware_prompt_clarification_question} to ask clarification questions and the prompt in Fig.~\ref{fig:hardware_prompt_update_instruction} to update instructions once we receive the user's answer.
\begin{table*}[]
    \centering
       \renewcommand{\arraystretch}{1.5}
     \setlength{\tabcolsep}{3pt}
    \begin{tabular}{c|c|c|c|c|c|c|c|c}
    \hline
      \multirow{2}{*}{Method}& \multicolumn{4}{c|}{Straightforward } & \multicolumn{4}{c}{Ambiguous}\\
      \cline{2-9}
         & TP & TN & FP & FN &TP & TN & FP & FN\\
       \hline
        Base Policy & 0.35  & 0.65 & -- &-- & 0.35 & 0.65 &-- & -- \\
        EnsembleDagger + Residual& 0.40 & 0.60 & -- & -- & 0.50 & 0.50 & -- & -- \\
        HG-Dagger + Residual & 0.50 & 0.50 & -- & -- & 0.55 & 0.45 & -- & -- \\
        Forewarn & 0.80 & 0.10 & 0.10 & 0.0 & 0.50 & 0.0 & 0.40 & 0.10 \\
        UPS w Clarification & 0.85 & 0.05 & 0.05 & 0.05 & 0.65 & 0.25 & 0.10 & 0.0  \\
        UPS w Clarification + Residual (Ours) & \textbf{0.90}& 0.0 & 0.10 & 0.0 & \textbf{0.80} & 0.0 & 0.20 & 0.0 \\
        \hline
        
    \end{tabular}
    \caption{\textbf{Hardware - PnP Cup: Policy Steering Results Breakdown}. We include the True Positive Rate (TP), True Negative Rate (TN), False Positive Rate (FP) and False Negative Rate (FN) in the table as an extension to the results we show in Fig.~\ref{fig:continual-learn}. For straightforward and ambiguous scenarios, the reported success rate is averaged across 20 trials, each with a unique task instruction.}
    \label{tab:hardware-policy-steering}
\end{table*}
\begin{table*}[]
    \centering
    \begin{tabular}{c|c|c}
         Category&User Intent&Task Instruction  \\
         \hline
         \multirow{40}{*}{Straightforward}&left & ``on the left side, there is a bin—place the cup there''\\
    &left &``the cup should end up in the left bin''\\
    &left &``take the cup off the table and finish by putting it in the left bin''\\
    &left &``make the left bin the final location for the cup''\\
    &left &``ensure the cup is placed inside the left bin''\\
    &left &``please place the cup in the left bin''\\
    &left &``the cup should be transferred to the left orange bin''\\
    &left &``clean up the table by placing the cup in the left bin''\\
    &left &``grasp the cup and place it in the left bin''\\
    &left &``could you ensure the left bin is the final location for the cup''\\
    \cline{2-3}
    & right &``on the right side, there is a bin—send the cup there''\\
      & right &``the cup should end up in the right bin''\\
      & right &``take the cup off the table and finish by setting it in the right bin''\\
      & right &``make the right bin the final location for the cup''\\
      & right &``ensure the cup is inside the right bin''\\
      & right &``could you move the cup to the bin on the right side?''\\
      & right &``can you make sure the right-hand container is where the cup ends up?''\\
      & right &``the bin on the right needs the cup inside it; could you handle that?''\\
      & right &``the right side of the table has a bin—can you put the cup there?''\\
      & right &``could you ensure the cup is placed in the bin to your right?''\\
      \cline{2-3}
      & dirty (right) &``since the cup is dirty, it belongs in the dirty bin''\\
      & dirty (right) &``do not store it with clean items—send the cup to the dirty bin''\\
      & dirty (right) &``treat this as a dirty item and place the cup accordingly''\\
      & dirty (right) &``the destination depends on its condition: dirty cup → dirty bin''\\
      & dirty (right) &``put the cup wherever dirty cups are supposed to go (the matching bin)''\\
      & dirty (right) &``could you put the cup in the dirty bin because it is soiled?''\\
      & dirty (right) &``this cup isn't clean; can you make sure it goes in the dirty container?''\\
      & dirty (right) &``the dirty bin is the correct spot for this cup; could you put it there?''\\
      & dirty (right) &``since we have a dirty cup here, can you move it to the dirty bin?''\\
      & dirty (right) &``could you match this cup's condition by putting it in the dirty bin?''\\
      \cline{2-3}
       & clean (left) &``since the cup is clean, it belongs in the clean bin''\\
      & clean (left) &``do not mix it with dirty items—send the cup to the clean bin''\\
      & clean (left) &``treat this as a clean item and place the cup accordingly''\\
      & clean (left) &``the destination depends on its condition: clean cup → clean bin''\\
      & clean (left) &``put the cup wherever clean cups are supposed to go (the matching bin)''\\
      & clean (left) &``could you put the cup in the clean bin since it is already washed?''\\
      & clean (left) &``this cup is clean; can you ensure it ends up in the clean container?''\\
      & clean (left) &``the clean bin is the right place for this; could you set it inside?''\\
      & clean (left) &``because this is a clean cup, can you move it to the clean bin?''\\
      & clean (left) &``could you place the cup in the clean bin so it stays sanitary?''\\
 
      \hline

    \end{tabular}
    \caption{\textbf{Straightforward Hardware (PnP Cup) Task Instructions.} 40 straightforward task instructions used for the calibration dataset are listed here. During testing, we randomly sample 20 instructions from the set and obtain new options when sampling from the policy.}
    \label{tab:straightforward_hardware_instruction}
\end{table*}
\begin{table*}[]
    \centering
    \begin{tabular}{c|c|c}
         Category&User Intent&Task Instruction  \\
         \hline
       
        \multirow{40}{*}{Ambiguous}& left or right &``can you deal with the cup and put it away?''\\
      & left or right &``the cup should not be on the table anymore—put it somewhere appropriate''\\
      & left or right &``tidy this up: move the cup into a bin''\\
      & left or right &``pick up the cup and store it in a container''\\
      & left or right &``put the cup away in the bin over there''\\
      & left or right &``could you move the cup out of the working area?''\\
      & left or right &``take care of the cup and put it in a bin, please''\\
      & left or right &``can you get the cup into one bin?''\\
      & left or right &``the cup has some water—could you put it away somewhere?''\\
      & left or right &``there is water in the cup; can you relocate it to a bin?''\\
      & left or right &``just put the cup away''\\
      & left or right &``could you remove the cup from the table and place it in a container?''\\
      & left or right &``put the cup in the orange bin''\\
      & left or right &``can you put the cup into the orange container?''\\
      & left or right &``could you put the cup in a container so it is not in the way?''\\
      & left or right &``store the cup in the bin, please''\\
      & left or right &``clear the space by moving the cup into a bin''\\
      & left or right &``the cup goes in a bin—can you handle it?''\\
      & left or right &``could you move the cup to a bin and clean up the table?''\\
      & left or right &``can you put the cup somewhere it belongs, like a bin?''\\
      & left or right & "the cup should be settled in one bin''\\
      & left or right &``can you place the cup into a bin to finish cleaning?''\\
      & left or right &``put the cup into a bin to organize the scene''\\
      & left or right &``remove the cup and store it in a bin-like container''\\
            & left or right &``put the cup into a bin and keep the area neat''\\
      & left or right &``move the cup into any bin so the table is usable''\\
      & left or right &``take the cup and put it in a bin for now''\\
      & left or right &``could you put the cup into a bin so it’s not in the working zone?''\\
      & left or right &``place the cup into a bin and return the surface to empty''\\
      & left or right &``the cup should be contained—put it in a bin''\\
      & left or right &``put the cup in a bin so it won’t spill on the table''\\
      & left or right &``since the cup has liquid, move it into a bin''\\
      & left or right &``move the cup with water into a bin to avoid mess''\\
      & left or right &``put the cup somewhere secure, like inside a bin''\\
      & left or right &``could you store the cup inside a bin and clear the workspace?''\\
      & left or right &``take the cup away from the table and place it in a bin''\\
     & left or right &``can you put the cup away in one of the bins over there?''\\
      & left or right &``put the cup in a bin over there and tidy up''\\
      & left or right &``move the cup into a bin so the table is clean''\\
      & left or right &``the cup needs to be put away—use one of the bins''\\
      \hline

    \end{tabular}
    \caption{\textbf{Ambiguous Hardware (PnP Cup) Task Instructions.} 40 ambiguous task instructions used for the calibration dataset are listed here. During testing, we randomly sample 20 instructions from the set and obtain new options with the new action samples from the base policy.}
    \label{tab:ambiguous_hardware_instruction}
\end{table*}

\begin{figure*}
    \centering
    \includegraphics[width=\linewidth]{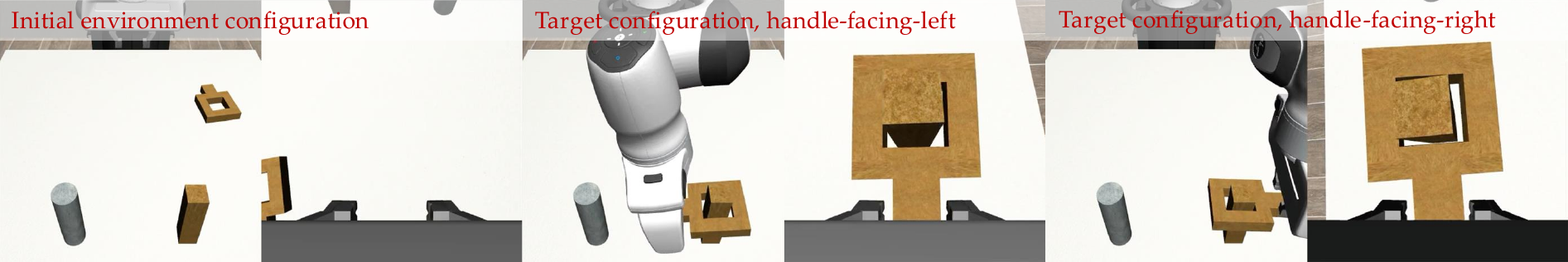}
    \caption{\textbf{Simulation Setup.} We demonstrate our simulated task setup using a Franka Emika Panda arm. The leftmost two images show the initial environment setup, while the right four images show the target configurations for the two behavior modes, as labeled. In each image pair, the left image provides the third-person camera view and the right image provides the wrist view.}
    \label{fig:sim_setup}
\end{figure*}
\begin{table*}[]
    \centering
       \renewcommand{\arraystretch}{1.5}
     \setlength{\tabcolsep}{3pt}
    \begin{tabular}{c|c|c|c|c|c|c|c|c}
    \hline
      \multirow{2}{*}{Method}& \multicolumn{4}{c|}{Straightforward } & \multicolumn{4}{c}{Ambiguous}\\
      \cline{2-9}
         & TP & TN & FP & FN &TP & TN & FP & FN\\
       \hline
        Base Policy & 0.45  & 0.55 & -- &-- & 0.25 & 0.75 &-- & -- \\
        EnsembleDagger + Residual& 0.45 & 0.55 & -- & -- & 0.30 & 0.70 & -- & -- \\
        HG-Dagger + Residual & 0.60 & 0.40 & -- & -- & 0.45 & 0.55 & -- & -- \\
        Forewarn & 0.65 & 0.20 & 0.15 & 0.0 & 0.0 & 0.25 & 0.15 & 0.60 \\
        UPS w Clarification & 0.60 & 0.20 & 0.10 & 0.10 & 0.55 & 0.25 & 0.20 & 0.0  \\
        UPS w Clarification + Residual (Ours) & \textbf{0.70}& 0.20 & 0.05 & 0.05& \textbf{0.65} & 0.10 & 0.15 & 0.10 \\
        \hline
        
    \end{tabular}
    \caption{\textbf{Simulation Policy Steering Results Breakdown}. We include the True Positive Rate (TP), True Negative Rate (TN), False Positive Rate (FP) and False Negative Rate (FN) in the table as an extension to the results we show in Fig.~\ref{fig:continual-learn}. For straightforward and ambiguous scenarios, the reported success rate is averaged across 20 trials respectively, where each trial has an unique task instruction.}
    \label{tab:simulation-policy-steering}
\end{table*}
\para{Results Breakdown} To provide a detailed analysis of our approach and related baselines regarding policy steering performance, we include a breakdown of the True Positive Rate (TP), True Negative Rate (TN), False Positive Rate (FP) and False Negative Rate (FN) in Table~\ref{tab:hardware-policy-steering}. The success rate metrics shown in Fig.~\ref{fig:continual-learn} display the number of True Positives divided by the total number of trials. As shown in Fig.~\ref{tab:hardware-policy-steering}, all policy  steering methods significantly reduce the TN compared to the base policy and DAgger variants, indicating the effectiveness of generation and verification. Among those steering methods, our UPS w/ Clarification can already significantly improve the TP and reduce the FP rate in ambiguous situations because the VLM verifier is more calibrated with conformal prediction and our proposed Bayesian Intent Score Function. With Residual Learning, UPS w/ Clarification + Residual can further reduce the TN and improve the TP, which is more obvious in ambiguous scenarios. This detailed analysis demonstrates that by calibrating high-level semantic uncertainty with low-level action infeasibility together and selectively choosing an appropriate resolution strategy, we can steer the policy to better achieve user-desired goals, especially when the instruction is ambiguous.

\subsection{Simulation}
\label{appendix:sim}

\para{Simulation Setup} We use the Robomimic NutAssemblySquare environment for our simulated task. Initially, a square nut with a handle protruding from one side lies flat on the table, while a square peg stands upright. The goal is to pick up the nut and place it onto the peg. We focus on two behavior modes to complete this task: placing the nut with its handle facing left versus placing it with the handle facing right, as shown in Fig.~\ref{fig:sim_setup}.

Our observation data consists of two simulated camera views: a wrist-mounted camera on the robot and a third-person camera providing an angled top-down view of the table and robot. Both cameras render at $96 \times 96$ resolution for diffusion policy training and deployment. For world-model training and deployment, we downsample these to $64 \times 64$, then re-upsample our decoded observations to $96 \times 96$ before passing them to the diffusion policy during interleaving.

\para{Task Instruction} We divide the different instructions that a user might give to the robot to complete the task into two categories: 1) \emph{straightforward} instructions, listed in Table~\ref{tab:sim_instruction_straightforward}; 2) \emph{ambiguous} instructions, listed in Table~\ref{tab:sim_instruction_ambiguous}. We randomly sample 40 instructions of each category to make up the calibration set, and the remaining instructions form the test set.

\para{Prompts}In our proposed framework and corresponding baselines, we use a VLM for a variety of purposes. Therefore, we list all the prompts we use in our experiments. 

Specifically, to enable the VLM to serve as a behavior narration translator $\vlmtrans$, we use the prompt listed in Fig.~\ref{fig:sim_prompt_translate_grasp} during the grasping phase and Fig.~\ref{fig:sim_prompt_translate_place} during the placing phase in our simulation setting. For VLM verification for our Forewarn baseline, we follow prior work~\cite{wu2025foresight} by directly prompting the VLM to select an option. The prompts are listed in Fig.~\ref{fig:sim_prompt_forewarn_verifier}.

We also include the prompts for different score functions evaluated in Sec.~\ref{sec:experiment-coverage}. For the \textbf{Vanilla} baseline, the prompts are listed in Fig.~\ref{fig:sim_prompt_vanilla_baseline}. For the \textbf{CoT} baseline, we include the two-step prompts that reason first about the ambiguity and then, based on this reasoning, select the option and generate the scores. For the simulation setting, the two-step prompts are included in Fig.~\ref{fig:sim_prompt_cot_step1} and Fig.~\ref{fig:sim_prompt_cot_step2}. For our \textbf{Bayesian Intent} method, there are three steps: 1) generate the set of possible intents; 2) evaluate each intent's probability among the set; 3) evaluate each behavior mode's probability under each specific intent. In the first step, we use the prompt in Fig.~\ref{fig:sim_prompt_intent_generation}. Next, we generate intent probability with the prompt in Fig.~\ref{fig:sim_prompt_intent_probability}. Finally, we infer behavior probability with the prompt in Fig.~\ref{fig:sim_prompt_behavior_probability}.

In addition to translation and verification, the VLM generates clarification questions and automatically updates user instructions based on the responses. The prompts for this clarification phase are the same as the hardware setting.

\para{Results Breakdown} Table~\ref{tab:simulation-policy-steering} demonstrates the detailed results of all the methods evaluated in Sec.~\ref{sec:experiment-uq-steering} and Sec.~\ref{sec:experiments-ups-redeployment}, including True Positive Rate (TP), True Negative Rate (TN), False Positive Rate (FP), False Negative Rate (FN). Similar to the trend we observed in hardware experiments, policy steering methods significantly reduce the TN compared to base policy and Dagger variants. In ambiguous scenarios, UPS w/ Clarification significantly improves the TP by $55\%$ through quantifying uncertainty with conformal prediction and further clarifying user intent. With Residual Learning, our method further reduces the TN and FP to improve the TP and overall success rate.
\begin{figure*}
    \centering
    \includegraphics[width=0.8\linewidth]{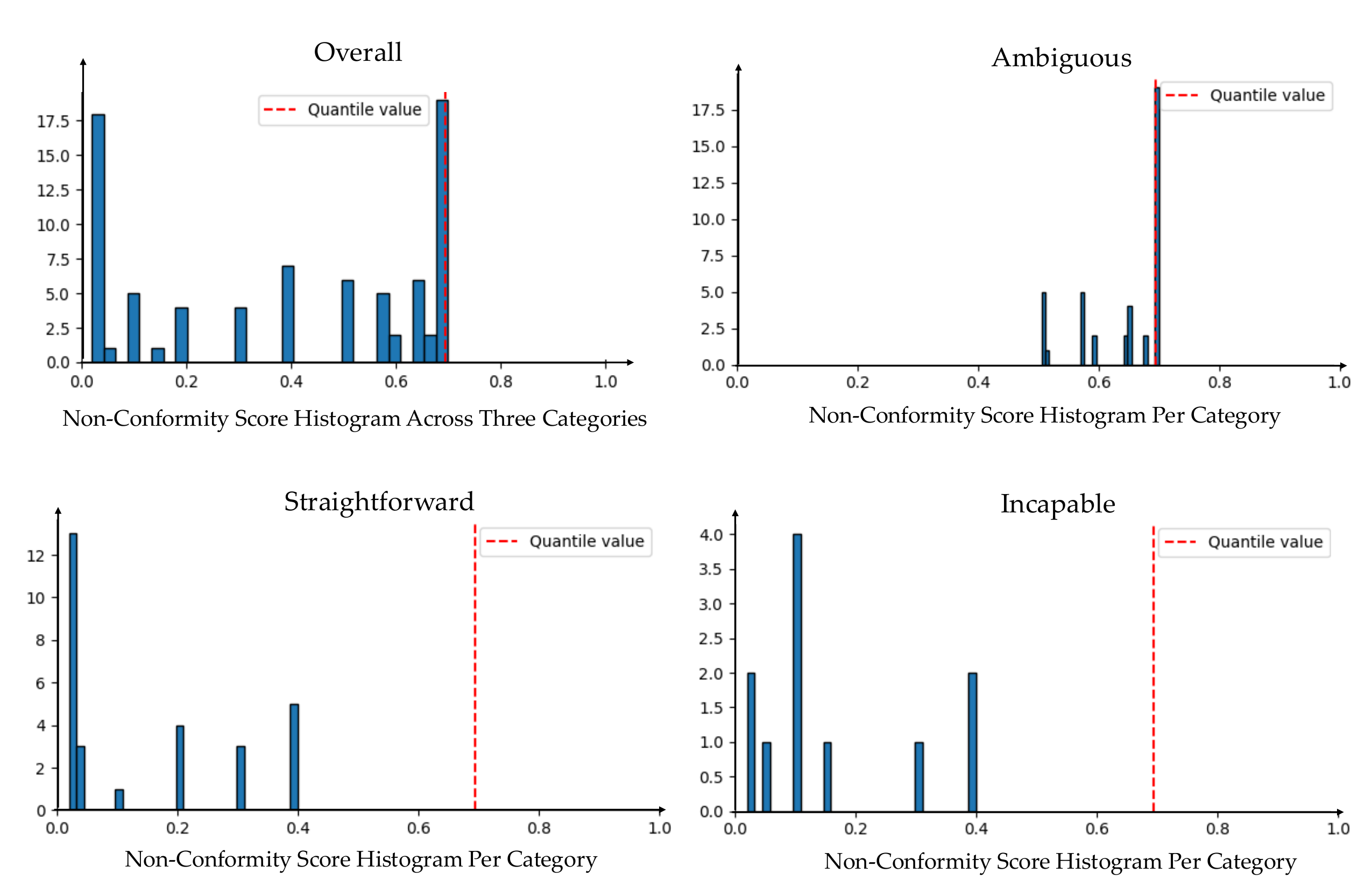}
    \caption{\textbf{Hardware PnP Cup Histogram for Empirical Quantile.} We list the overall histogram with all three categories combined, as well as per category histograms. The red vertical line indicates the $\hat{q}$ value based on user-desired coverage rate $1-\varepsilon$.  }
    \label{fig:hardware_q_hat_quantile}
\end{figure*}
\begin{figure*}
    \centering
    \includegraphics[width=0.8\linewidth]{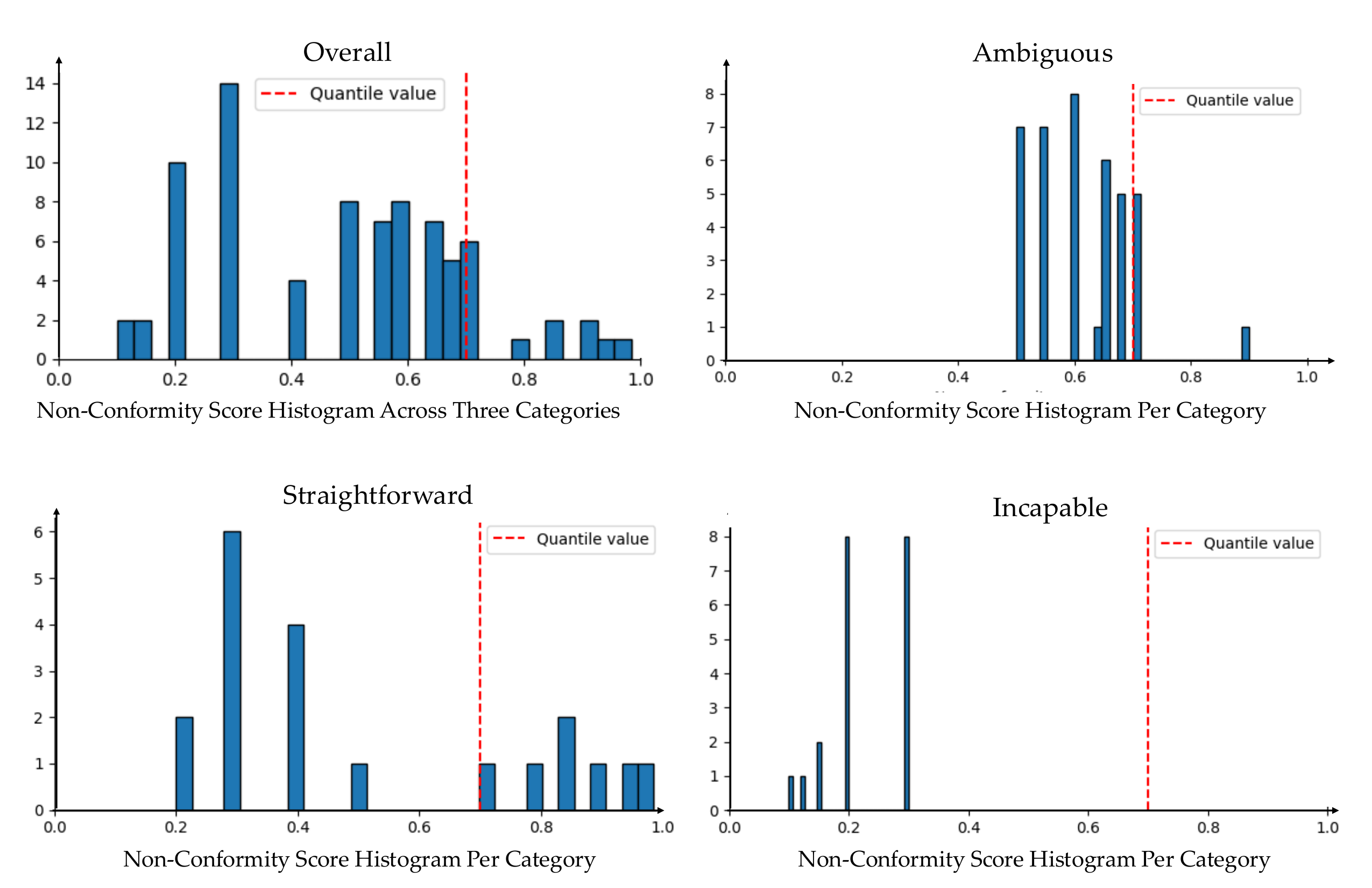}
    \caption{\textbf{Hardware Insert Block Histogram for Empirical Quantile.} We list the overall histogram with all three categories combined, as well as per category histograms. The red vertical line indicates the $\hat{q}$ value based on user-desired coverage rate $1-\varepsilon$.  }
    \label{fig:hardware_q_hat_quantile_block}
\end{figure*}
\begin{figure*}
    \centering
    \includegraphics[width=0.9\linewidth]{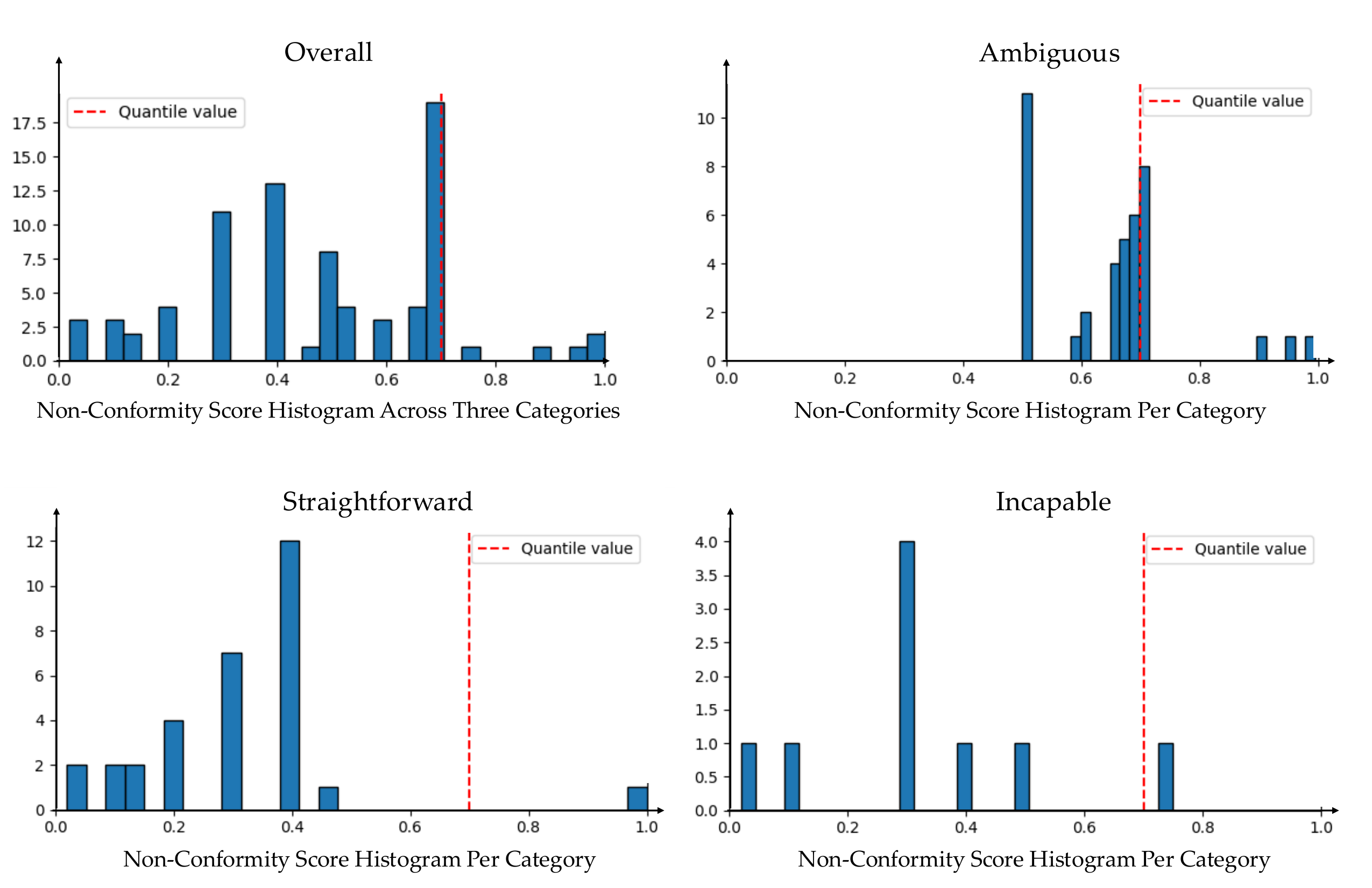}
    \caption{\textbf{Simulation Histogram for Empirical Quantile.} We list the overall histogram with all three categories combined, as well as per category histograms. The red vertical line indicates the $\hat{q}$ value based on user-desired coverage rate $1-\varepsilon$. }
    \label{fig:sim_q_hat_quantile}
\end{figure*}
\subsection{Calibration Details}
\label{appendix:calibration}

For both hardware and simulation, our calibration dataset contains 80 instructions and our test dataset contains 40 instructions, with each split evenly between ambiguous and straightforward instructions. For each instruction, we sample $K=10$ action trajectories for phase 1 (grasping) and translate them to text using our scheme from Sec~\ref{sec:approach}. We repeat this process for phase 2 (placing), then combine each instruction with its phase 1 and phase 2 narrations to form a complete scenario.

Among scenarios with \textbf{straightforward} instructions, the incapability rates (where no narrations achieve the user's instruction) are as follows. In simulation: calibration scenarios show $0/40$ grasping failures and $9/40$ placing failures, while test scenarios show $0/20$ grasping failures and $5/20$ placing failures. In hardware: calibration scenarios show $2/40$ grasping failures and $10/40$ placing failures, while test scenarios show $1/20$ grasping failures and $5/20$ placing failures.

\begin{figure}
    \centering
    \includegraphics[width=\linewidth]{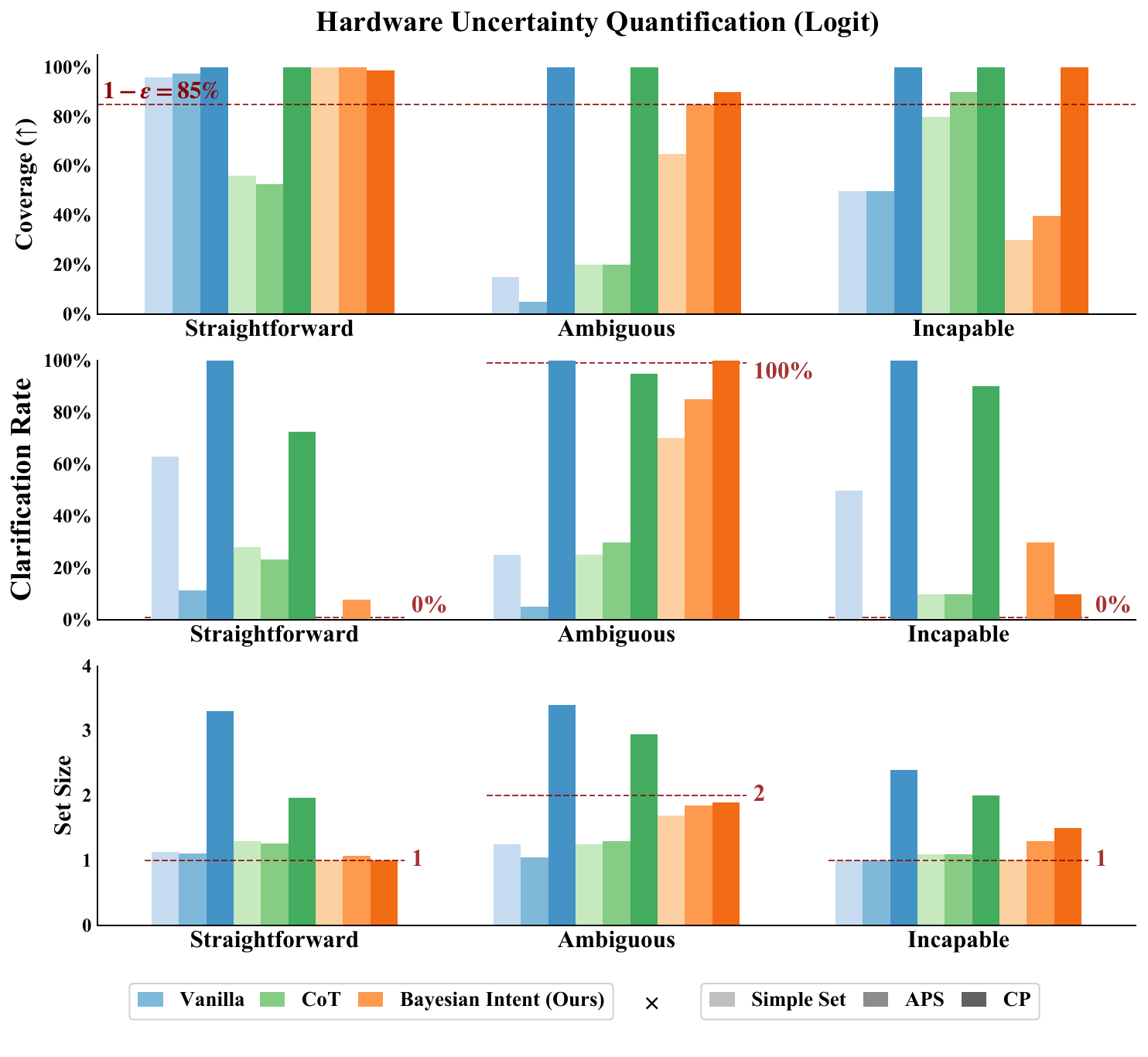}
    \caption{\textbf{Hardware \textbf{PnP Cup} Uncertainty Quantification with Logits}. We conduct an ablation study where we directly use the softmax over the logits of the first generated token as the score rather than the self-generated score. Empirically, we find that, compared to Fig.~\ref{fig:results-UQ-hardware}, the self-generated score is a better score function across all scenarios.}
    \label{fig:hardware_uq_plot_logit}
\end{figure}
\begin{figure}
    \centering
    \includegraphics[width=\linewidth]{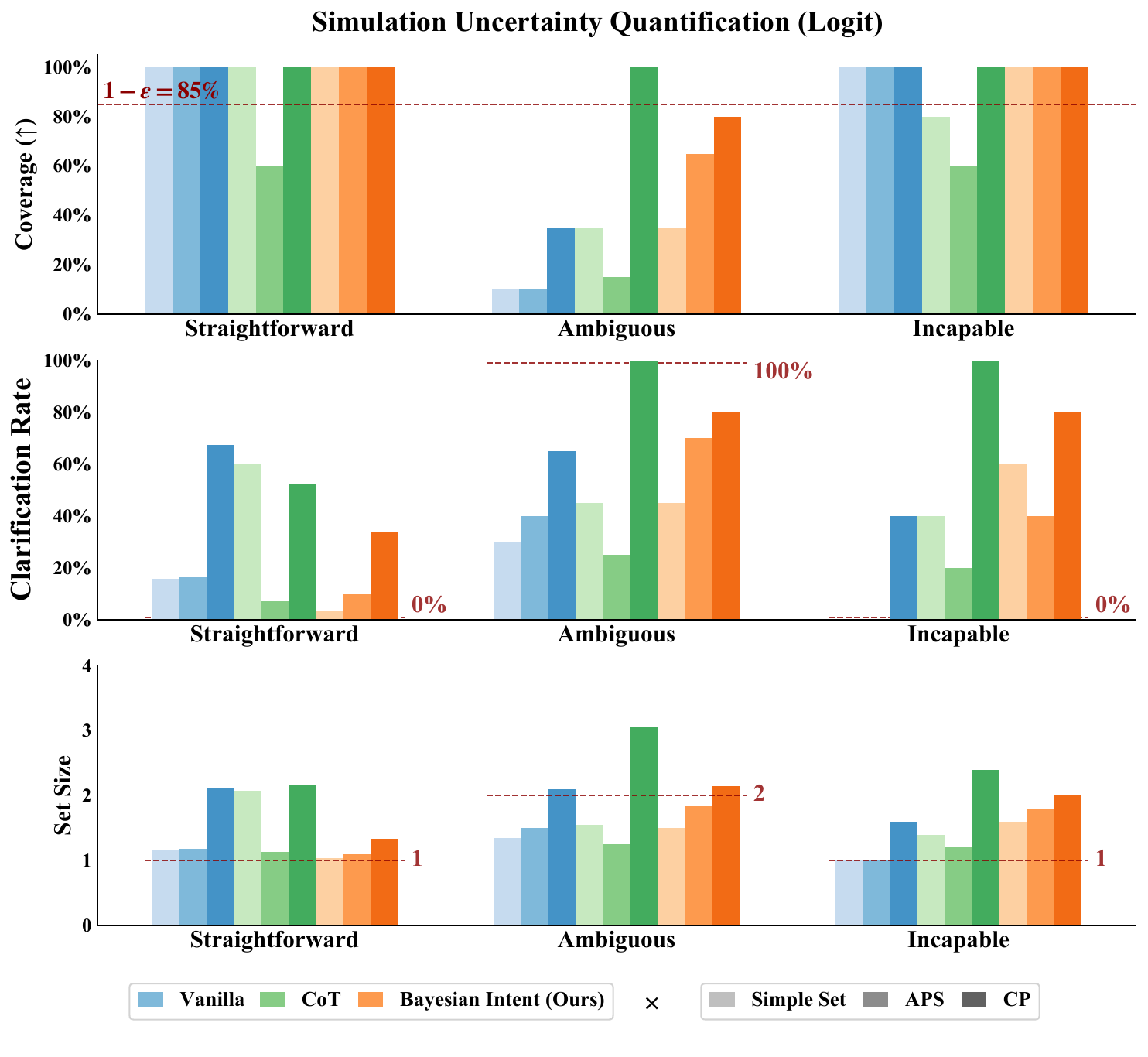}
    \caption{\textbf{Simulation Uncertainty Quantification with Logits}. Similar to hardware experiments, we compare the results of the softmax of logits as the score with self-generated score results in Fig.~\ref{fig:results-UQ-simulation}. None of the methods with logits can achieve the desired coverage while balancing the clarification rate and reaching the desired prediction set size.}
    \label{fig:sim_uq_plot_logit}
\end{figure}

\para{Results: Nonconformity Score Distribution} For the histograms depicted in Figs.~\ref{fig:sim_q_hat_quantile}, ~\ref{fig:hardware_q_hat_quantile}, \ref{fig:hardware_q_hat_quantile_block} low nonconformity scores are desirable, as this would indicate our score function correctly assigns high probability mass to the true label. In both hardware and simulation, examining the distribution of non-conformity scores for the calibration set reveals that the majority of samples for both the straightforward and incapable cases have nonconformity scores less than $0.5$ (with all but 3 samples meeting this threshold). In the hardware setting specifically, most straightforward and ambiguous samples have nonconformity scores less than $0.2$. This indicates that the true label---which is unambiguous in these cases---receives more than 50\% probability mass, demonstrating that our score function performs well in these scenarios.

In the ambiguous cases, on the other hand, we see the vast majority of samples (all but 3 samples) having nonconformity scores ranging between 0.5 and 0.7. Recall that in this instance, we take the maximum nonconformity score over two possible valid true labels, so it is not possible for any sample to have a nonconformity score under 0.5 as this would imply that both valid options are assigned greater than 50\% probability mass. Therefore, these results imply that across both simulation and hardware, both valid true labels receive at least 30\% probability mass, indicating that our score function appropriately distributes probability between the two valid options rather than erroneously favoring one over the other. Together, these calibration results demonstrate that our score function behaves as intended across all three case types.

\para{Results: Uncertainty Quantification with Logits} We compare the uncertainty quantification results from using the softmax of the logits from the VLM in Figs. ~\ref{fig:sim_uq_plot_logit} and ~\ref{fig:hardware_uq_plot_logit} to those that ask the VLM to directly self-generate probability score for each option in Figs.~\ref{fig:results-UQ-simulation} and ~\ref{fig:results-UQ-hardware}, in simulation and hardware. For the score, we compute a softmax function over the logits of the first token (e.g., A, B, C, D) with a temperature of 4.  We find that the score functions derived from the softmax over the logits are often biased towards certain options, making other valid options unlikely to be included in the prediction set. Therefore, you can see very low coverage rate for \textbf{Vanilla} and \textbf{CoT} methods in ambiguous cases compared to self-generated probability scores. Other methods have the desired coverage in ambiguous cases but have larger clarification rate in straightforward and incapable cases because the $\hat{q}$ is increased to cover ambiguous cases. Only the CP with Bayesian Intent (Ours) exhibits good performance across all scenarios in the hardware setting but it is still worse than the performance of self-generated probability scores. 

\subsection{Residual Policy}
\label{appendix:residual}

    \para{Data Collection} For all interactive imitation learning approaches, when an intervention is deemed necessary (by the human for HG-DAgger or UPS, or by the robot for EnsembleDAgger), the base policy relinquishes control to the user, who teleoperates the robot via SpaceMouse before handing back authority. During human intervention, we continue to run the base policy without deploying its actions, computing $\Delta a = a_{\text{human}} - a_{\text{base}}$ at each timestep. Intervention labels are set to 1 during human operation and 0 otherwise.

    \para{Architecture} We separately train a classifier (to predict intervention labels) and a residual policy (to predict residual actions). Both models share the same input features: the observation (camera images and proprioceptive states) and the base policy action at each timestep. All inputs are normalized to the range of $(-1, 1)$, including residual action labels for the residual policy.

    Both models use the diffusion policy observation encoder, where each camera image is processed through its own copy of a modified ResNet-18 module. These outputs are concatenated with proprioceptive data and passed through a 2-layer MLP with 512 hidden dimensions \cite{chi2024diffusionpolicy}. All encoder weights are initialized from the base diffusion policy checkpoint but remain trainable throughout. The base policy action is processed through a single linear layer before being concatenated with the observation encoder output.
    
    For the classifier, this concatenated representation passes through a linear layer to produce a 1-dimensional output, followed by a sigmoid activation to obtain probabilities. We train using binary cross-entropy loss, weighting positive samples to address class imbalance.
    
    For the residual policy, the representation is passed through an action predictor MLP head to produce an output clamped to (-1,1) using tanh. We train using mean squared error loss.
    
      \para{Deployment} At deployment, we query the classifier at each timestep with the observation and the base action. If it predicts an intervention, we query the residual policy and execute the combined base and residual actions; otherwise, we execute the base action alone. For the continual learning baselines HG-DAgger and EnsembleDAgger, we use real observations. However, for UPS (ours), we use decoded latent observations from the world model to enable us to imagine the outcomes of actions sampled from the combined policy. We generate $K/2 = 5$ action sequences using the combined policy and $K/2$ using the base policy alone to mitigate catastrophic forgetting in the event that the residual policy overcorrects toward modes underrepresented in the base policy.

    \begin{table}[]
    \centering
    \begin{tabular}{c|c}
         Hyperparameter & Value  \\
         \hline
         Batch Size & 64 \\
         Learning Rate & 1e-4 \\
         Iterations & 10000 \\
         Dropout & 0.2 \\
         Base Action Encoder Output Dimension & 128 \\
         Action Predictor Hidden Dimension & 256 \\
         Action Predictor Number of Hidden Layers & 2 \\
         \hline
      
    \end{tabular}
    \caption{\textbf{Hyperparameters for Residual Policy Training.} }
    \label{tab:residual_policy_hyperparams}
\end{table}
\begin{table*}[]
    \centering
    \begin{tabular}{c|c}
         User Intent&Task Instruction  \\
         \hline
    left & ``Please place the nut on the peg with the handle facing left.''\\
    left &``I want the nut positioned on the peg with the handle directed left.''\\
    left &``Put the nut on the peg so that its handle is oriented leftward.''\\
    left &``Set the nut onto the peg with the handle pointing left.''\\
    left &``The nut goes on the peg with its handle turned to face left.''\\
    left &``Can the nut be placed on the peg so its handle is pointing to the left?''\\
    left &``Set it up so that the nut's handle faces left while on the peg.''\\
    left &``The nut should be positioned on the peg with its handle going left.''\\
    left &``Place the nut on the peg such that its handle is directed to the left.''\\
    left &``I need the nut placed with a left-pointing handle on the peg.''\\
    left &``Pick up the nut, turn its handle to the left, and place it on the peg.''\\
    left &``Can you rotate the nut's handle to the left and set it on the peg?''\\
    left &``The nut should go on the peg with its handle facing left.''\\
    left &``I want the nut placed on the peg so that I can easily grab it from the peg with my left hand.''\\
    left &``Please set the nut on the peg such that it is readily accessible using my left hand.''\\
    
    \cline{1-2}
    right &``The nut should go on the peg with its handle facing right.''\\
    right &``Make sure the nut's handle points right when you put it on the peg.''\\
    right &``I need you to place the nut so the handle is on the right side.''\\
    right &``Could you put the nut on the peg? The handle needs to face right.''\\
    right &``Place the nut on the peg such that its handle is directed to the right.''\\
    right &``Position the nut on the peg and turn the handle to face right.''\\
    right &``I need the nut placed with a right-pointing handle on the peg.''\\
    right &``Put the nut on the peg and make sure its handle faces right.''\\
    right &``The nut belongs on the peg with its handle turned rightward.''\\
    right &``When placing the nut on the peg, ensure its handle is oriented toward the right.''\\
    right &``With the handle pointing rightward, place the nut onto the peg.''\\
    right &``Turn the nut until its handle faces right, then set it on the peg.''\\
    right &``I'd like you to place the nut on the peg with its handle pointing to the right.''\\
    right &``Place the nut on the peg in a location my right hand can access with ease.''\\
    right &``Set the nut onto the peg so that right-hand retrieval is straightforward.''\\
      
      \cline{1-2}
    c.c.w. (left) &``Place the nut onto the peg with the handle directed at nine o'clock.''\\
    c.c.w. (left) &``I want the nut set on the peg so that its handle is pointing at nine o'clock.''\\
    c.c.w. (left) &``Can you please position the nut on the peg with its handle oriented toward nine o'clock.''\\
    c.c.w. (left) &``Set the nut onto the peg such that the handle faces the nine o'clock direction.''\\
    c.c.w. (left) &``The handle should be at nine o'clock when you set the nut on the peg.''\\
    c.c.w. (left) &``I need you to orient the nut's handle toward nine o'clock when you place it on the peg.''\\
    c.c.w. (left) &``When positioning the nut on the peg, make sure the handle faces nine o'clock please.''\\
    c.c.w. (left) &``The nut goes on the peg with its handle angled toward nine o'clock.''\\
    c.c.w. (left) &``Can you turn the nut counter-clockwise 90 degrees and set it on the peg?''\\
    c.c.w. (left) &``Please rotate the nut a quarter turn counter-clockwise before putting it on the peg.''\\
    c.c.w. (left) &``I need you to give the nut a 90-degree counter-clockwise twist then place it on the peg.''\\
    c.c.w. (left) &``I want the nut rotated a quarter turn counter-clockwise prior to placing it on the peg.''\\
    c.c.w. (left) &``The nut should be turned a quarter turn counter-clockwise as you put it on the peg.''\\
    c.c.w. (left) &``Give the nut a 90-degree counter-clockwise rotation and then mount it on the peg.''\\
    c.c.w. (left) &``Before you set the nut on the peg, turn it a quarter turn counter-clockwise.''\\
      
      \cline{1-2}
    c.w. (right) &``When positioning the nut on the peg, make sure the handle faces three o'clock please.''\\
    c.w. (right) &``The nut goes on the peg with its handle angled toward three o'clock.''\\
    c.w. (right) &``I want you to place the nut on the peg while ensuring the handle is at three o'clock.''\\
    c.w. (right) &``Put the nut on the peg after rotating it so the handle points to three o'clock.''\\
    c.w. (right) &``With the handle at three o'clock, place the nut onto the peg.''\\
    c.w. (right) &``The nut should be placed on the peg with its handle pointing three o'clock.''\\
    c.w. (right) &``Set the nut on the peg, making sure the handle is directed at three o'clock.''\\
    c.w. (right) &``Please drop the nut onto the peg while keeping the handle oriented to three o'clock.''\\
    c.w. (right) &``Before you set the nut on the peg, turn it a quarter turn clockwise.''\\
    c.w. (right) &``Can you spin the nut 90 degrees clockwise while positioning it on the peg?''\\
    c.w. (right) &``Turn the nut clockwise 90 degrees, then place it on the peg.''\\
    c.w. (right) &``I'd like you to rotate the nut a quarter turn clockwise and put it on the peg.''\\
    c.w. (right) &``Please give the nut a clockwise quarter turn before placing it onto the peg.''\\
    c.w. (right) &``Twist the nut clockwise by 90 degrees, then set it down on the peg.''\\
    c.w. (right) &``I need the nut turned a quarter turn clockwise when you place it on the peg.''\\
    \hline

    \end{tabular}
    \caption{\textbf{Straightforward Simulation Task Instructions.} 60 straightforward task instructions used for the calibration and test datasets are listed here. 40 of these are randomly sampled to be part of the calibration set, while the remaining 20 make up the test set. We use ``c.c.w.'' and ``c.w.'' to denote ``counterclockwise'' and ``clockwise'' respectively.}
    \label{tab:sim_instruction_straightforward}
\end{table*}

\begin{table*}[]
    \centering
    \begin{tabular}{c|c}
         User Intent&Task Instruction  \\
         \hline
      
    left or right &``Please place the nut on the peg.''\\
    left or right &``I want the nut positioned on the peg.''\\
    left or right &``Pick up the nut and set it on the peg.''\\
    left or right &``Can you set the nut on the peg?''\\
    left or right &``Put the nut onto the peg.''\\
    left or right &``I need you to place the nut on the peg.''\\
    left or right &``Move the nut to the peg.''\\
    left or right &``Could you position the nut on the peg?''\\
    left or right &``Set the nut down on the peg.''\\
    left or right &``I want you to put the nut on the peg.''\\
    left or right &``Could you grab the nut and put it on the peg?''\\
    left or right &``Please move the nut onto the peg.''\\
    left or right &``The nut needs to go on the peg.''\\
    left or right &``Can you mount the nut on the peg?''\\
    left or right &``I'd like the nut placed onto the peg.''\\
    left or right &``I'm going to need the nut positioned on the peg.''\\
    left or right &``Would you mind putting the nut onto the peg?''\\
    left or right &``Go ahead and place the nut on the peg.''\\
    left or right &``Can you get the nut situated on the peg?''\\
    left or right &``I'd like you to move the nut over to the peg.''\\
    left or right &``Put that nut on the peg right now please.''\\
    left or right &``Transfer the nut so it sits on the peg.''\\
    left or right &``I'm asking you to get the nut onto the peg.''\\
    left or right &``Would you be able to secure the nut on the peg?''\\
    left or right &``Set that nut down on the peg for me please.''\\
    left or right &``I need the nut transferred onto the peg.''\\
    left or right &``Please get the nut mounted over the peg.''\\
    left or right &``Could you move the nut over and place it on the peg?''\\
    left or right &``I want that nut relocated to the peg.''\\
    left or right &``Would you fix the nut onto the peg?''\\
    left or right &``I want the nut placed on the peg so that its handle points toward my non-dominant hand side.''\\
    left or right &``Put the nut on the peg with its handle turned in the direction of my non-preferred hand.''\\
    left or right &``Could you move the nut onto the peg so that its handle is oriented toward my non-dominant hand?''\\
    left or right &``Please set the nut on the peg with the handle facing the side of my non-preferred hand.''\\
    left or right &``I need the nut positioned on the peg so that its handle points to my non-dominant hand side.''\\
    left or right &``Place the nut on the peg with its handle directed toward my non-preferred hand.''\\
    left or right &``Can you put the nut on the peg so that its handle is aligned with my non-dominant hand?''\\
    left or right &``Set the nut onto the peg with its handle facing in the direction of my non-preferred hand.''\\
    left or right &``I want you to place the nut on the peg so that its handle points toward my non-dominant hand side.''\\
    left or right &``Could you position the nut on the peg with its handle turned toward my non-preferred hand?''\\
    left or right &``Please put the nut on the peg so that its handle is oriented in line with my non-dominant hand.''\\
    left or right &``I need you to set the nut on the peg with its handle facing toward my non-preferred hand side.''\\
    left or right &``Mount the nut onto the peg so that its handle points in the direction of my non-dominant hand.''\\
    left or right &``Can you place the nut on the peg with its handle aligned toward my non-preferred hand?''\\
    left or right &``Set up the nut on the peg so that its handle is directed toward my non-dominant hand side.''\\
    left or right &``Mount the nut onto the peg so that the handle is directed towards my dominant-hand side.''\\
    left or right &``Can you please attach the nut onto the peg with the handle facing my preferred hand side?''\\
    left or right &``I need the nut placed on the peg so that its handle points toward my dominant hand side.''\\
    left or right &``Put the nut on the peg with its handle turned in the direction of my preferred hand side.''\\
    left or right &``Please set the nut on the peg with the handle facing the side of my preferred hand.''\\
    left or right &``I want the nut positioned on the peg so that its handle points to my dominant hand side.''\\
    left or right &``Place the nut on the peg with its handle directed toward my preferred hand.''\\
    left or right &``Can you put the nut on the peg so that its handle is aligned with my dominant hand?''\\
    left or right &``Set the nut onto the peg with its handle facing in the direction of my preferred hand.''\\
    left or right &``I want you to place the nut on the peg so that its handle points toward my dominant hand side.''\\
    left or right &``Could you position the nut on the peg with its handle turned toward my preferred hand side?''\\
    left or right &``Please put the nut on the peg so that its handle is oriented in line with my dominant hand.''\\
    left or right &``I need you to set the nut on the peg with its handle facing toward my preferred hand side.''\\
    left or right &``Mount the nut onto the peg so that its handle points in the direction of my dominant hand.''\\
    left or right &``Can you place the nut on the peg with its handle aligned toward my preferred hand?''\\
      \hline

    \end{tabular}
    \caption{\textbf{Ambiguous Simulation Task Instructions.} 60 ambiguous task instructions used for the calibration and test datasets are listed here. 40 of these are randomly sampled to be part of the calibration set, while the remaining 20 make up the test set.}
    \label{tab:sim_instruction_ambiguous}
\end{table*}

\begin{table*}[]
    \centering
    \begin{tabular}{c|c|c}
         Category & User Intent & Task Instruction  \\
         \hline

        \multirow{10}{*}{Straightforward} & left & ``Insert the purple block into the left hole.''\\
        & left & ``Attach the purple block to the left-side hole.''\\
        & left & ``Fit the purple block into the left corner.''\\
        & left & ``Push the purple block into the left slot.''\\
        & left & ``Align the purple block with the left hole and insert it.''\\
        & left & ``Fasten the purple block into the hole on the left.''\\
        & left & ``Place the purple block into the left hole.''\\
        & left & ``Slide the purple block into the left-side slot.''\\
        & left & ``Position the purple block in the left hole.''\\
        & left & ``Fix the purple block into the left corner.''\\
        \hline

        \multirow{10}{*}{Straightforward} & right & ``Put the purple block into the right hole.''\\
        & right & ``Place the purple block into the hole on the right.''\\
        & right & ``Secure the purple block in the right hole.''\\
        & right & ``Mount the purple block in the hole on the right side.''\\
        & right & ``Insert the purple block into the right-side slot.''\\
        & right & ``Set the purple block into the right corner.''\\
        & right & ``Connect the purple block to the right hole.''\\
        & right & ``Install the purple block in the hole on the right.''\\
        & right & ``Insert this purple block into the right hole.''\\
        & right & ``Put this purple block into the hole on the right side.''\\
        \hline

        \multirow{20}{*}{Ambiguous} & left or right & ``Insert the purple block into one of the holes.''\\
        & left or right & ``Put the purple block into the correct hole.''\\
        & left or right & ``Attach the purple block to the hole on the tabletop.''\\
        & left or right & ``Place the purple block where it belongs.''\\
        & left or right & ``Fit the purple block into an open hole.''\\
        & left or right & ``Secure the purple block in the proper slot.''\\
        & left or right & ``Push the purple block into the matching hole.''\\
        & left or right & ``Mount the purple block in one of the tabletop holes.''\\
        & left or right & ``Align the purple block with a hole and insert it.''\\
        & left or right & ``Insert the purple block into the designated slot.''\\
        & left or right & ``Fasten the purple block into the right spot.''\\
        & left or right & ``Set the purple block into the available hole.''\\
        & left or right & ``Place the purple block into the hole.''\\
        & left or right & ``Connect the purple block to the appropriate hole.''\\
        & left or right & ``Slide the purple block into the opening on the tabletop.''\\
        & left or right & ``Install the purple block in its hole.''\\
        & left or right & ``Position the purple block in the correct place.''\\
        & left or right & ``Insert this purple block into the intended hole.''\\
        & left or right & ``Fix the purple block into a slot.''\\
        & left or right & ``Put this purple block where it should go.''\\
        \hline
      
    \end{tabular}
    \caption{\textbf{Block Insertion Task Instructions.} Straightforward task instructions specify whether the purple block should be inserted into the left or right hole, while ambiguous instructions underspecify the target hole and require disambiguation.}
    \label{tab:ambiguous_block_instruction}
\end{table*}

\begin{figure}
    \centering
    \includegraphics[width=0.9\linewidth]{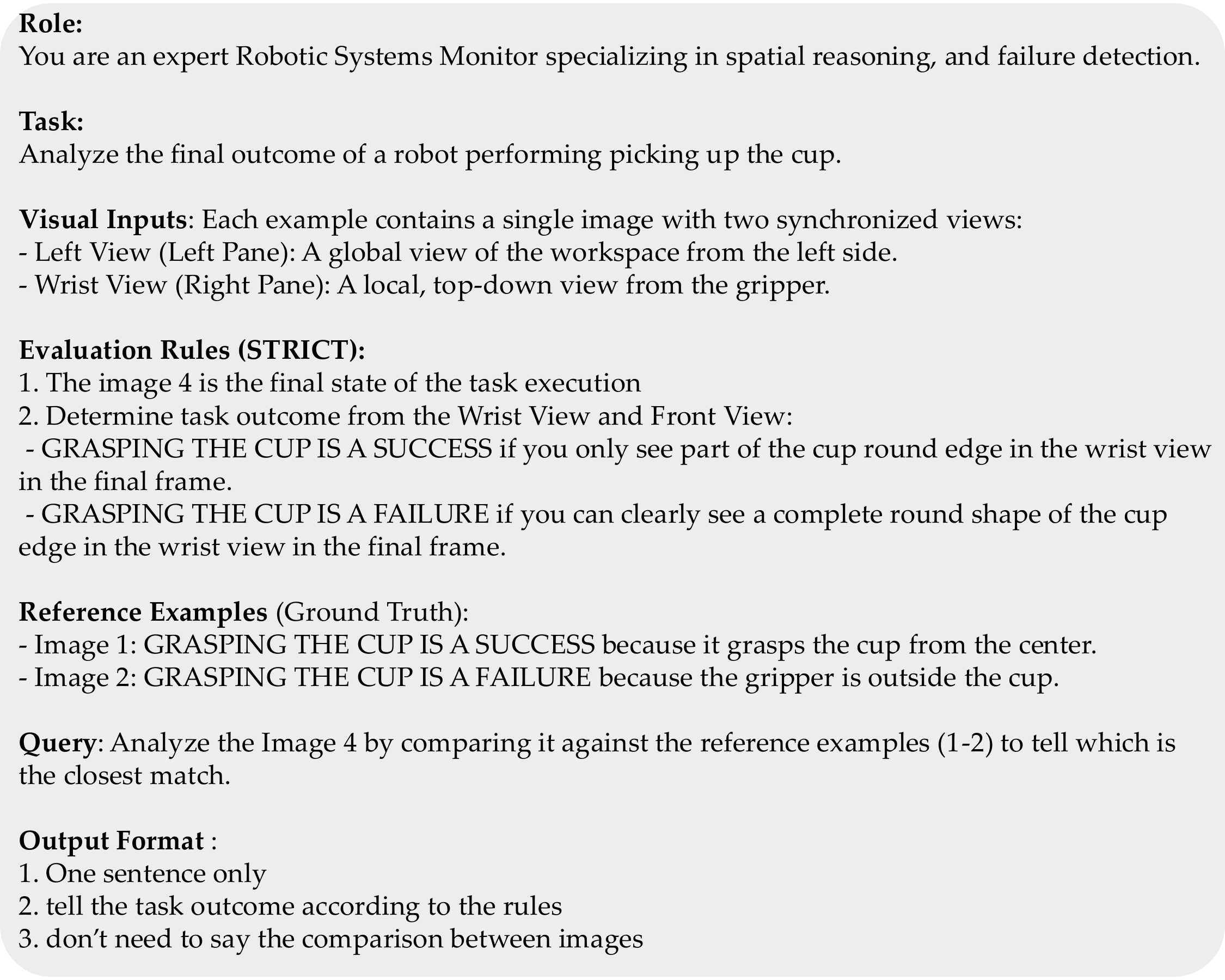}
    \caption{\textbf{Hardware Prompt For Behavior Translation (Grasp)}}
    \label{fig:hardware_prompt_translate_grasp}
\end{figure}

\begin{figure}
    \centering
    \includegraphics[width=0.9\linewidth]{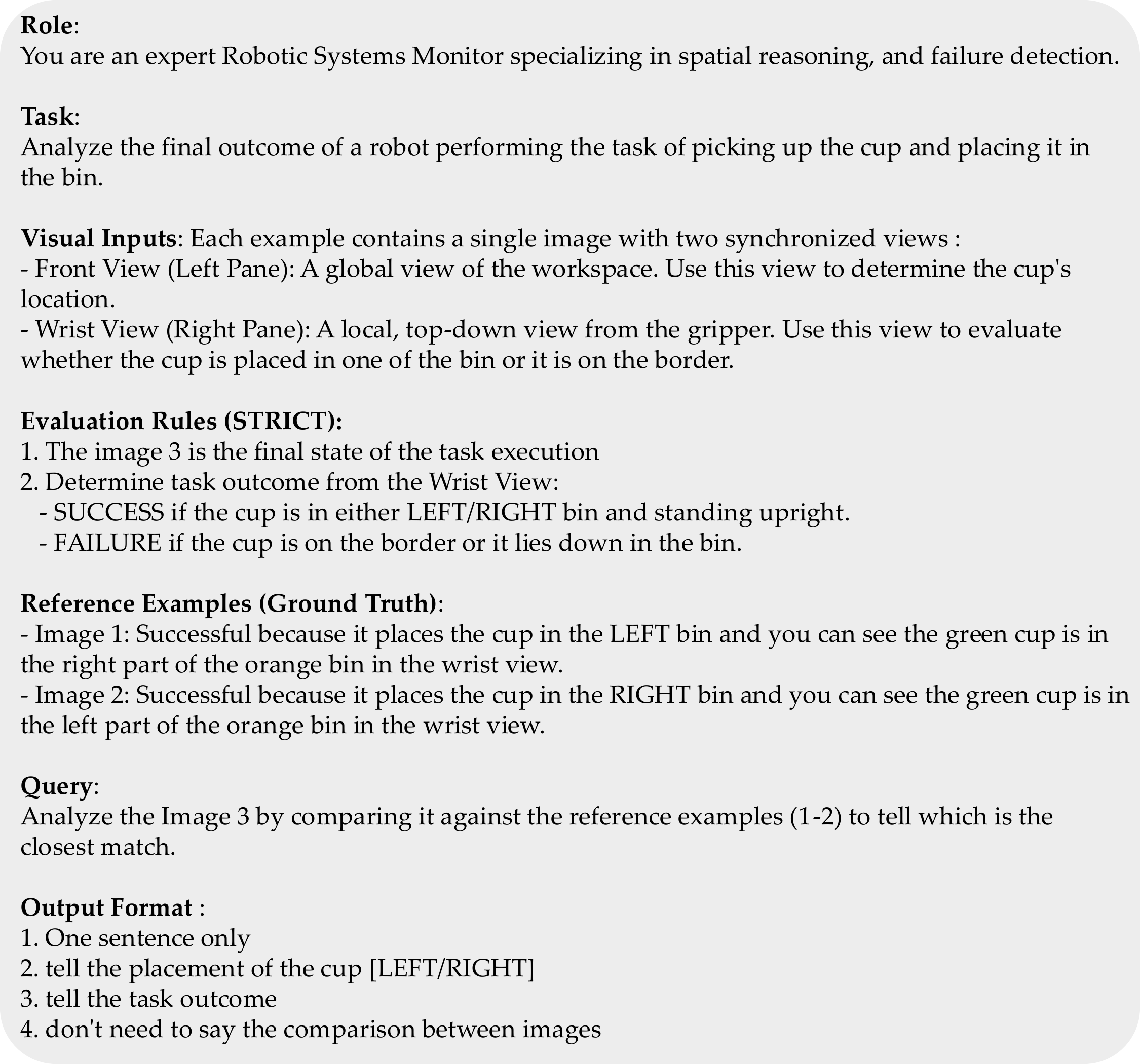}
    \caption{\textbf{Hardware Prompt For Behavior Translation (Place)}}
    \label{fig:hardware_prompt_translate_place}
\end{figure}
\begin{figure}
    \centering
    \includegraphics[width=0.9\linewidth]{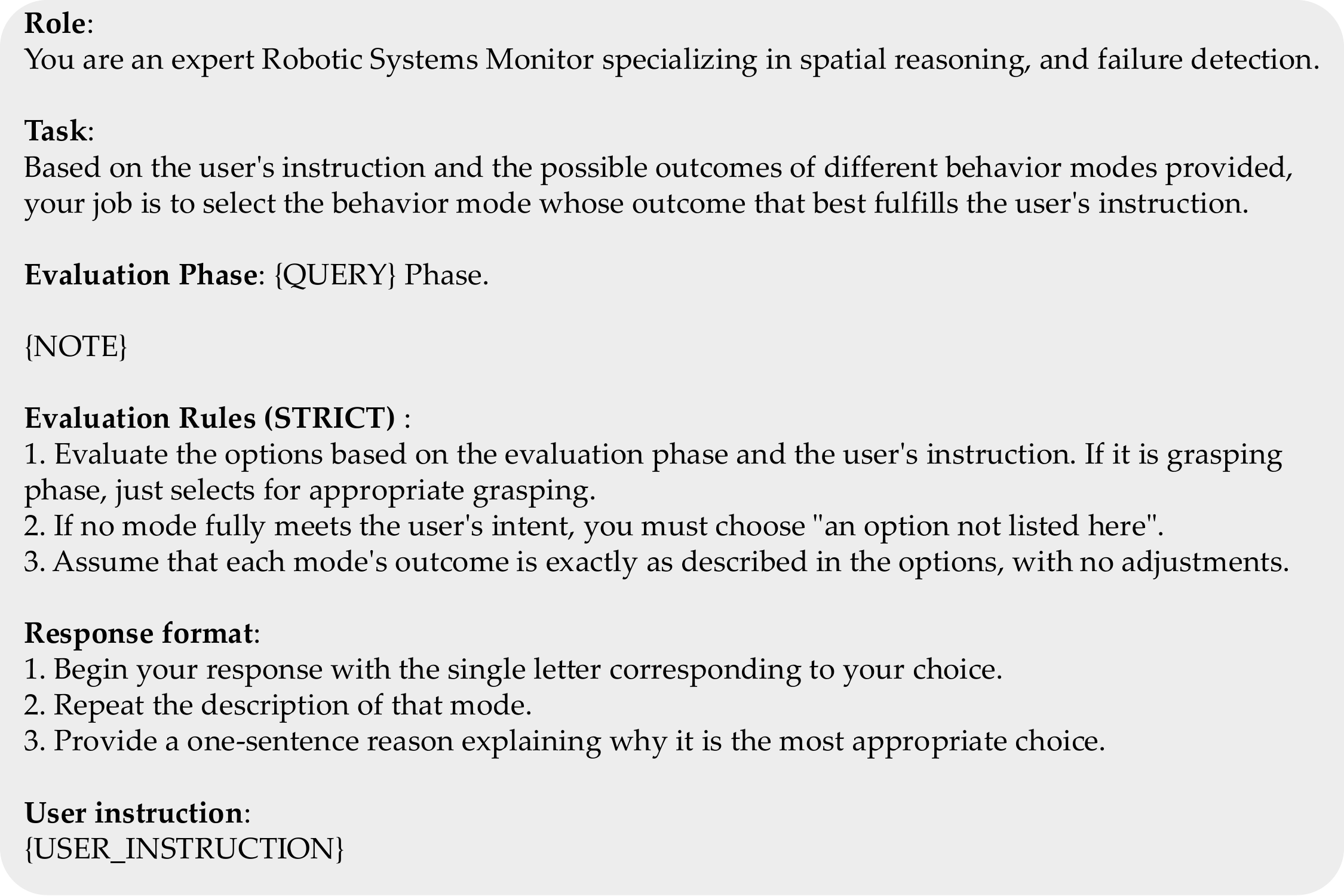}
    \caption{\textbf{Hardware Prompt for VLM Verification in Forewarn}}
    \label{fig:hardware_prompt_forewarn_verifier}
\end{figure}
\begin{figure}
    \centering
    \includegraphics[width=0.9\linewidth]{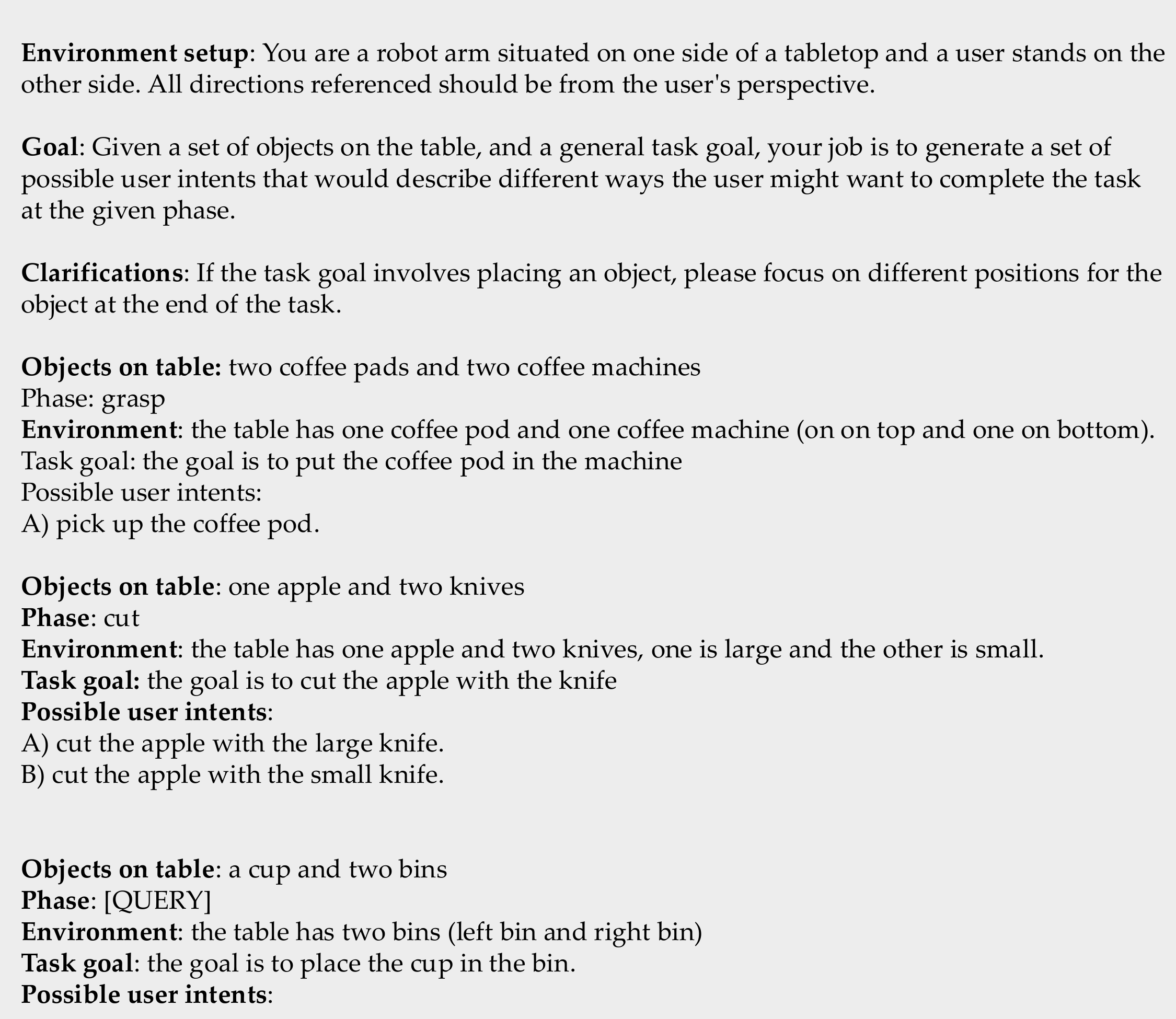}
    \caption{\textbf{Hardware Prompt for Generating Possible Intents}}
    \label{fig:hardware_prompt_intent_generation}
    \vspace{-1.0cm}
\end{figure}
\begin{figure}
    \centering
    \includegraphics[width=0.9\linewidth]{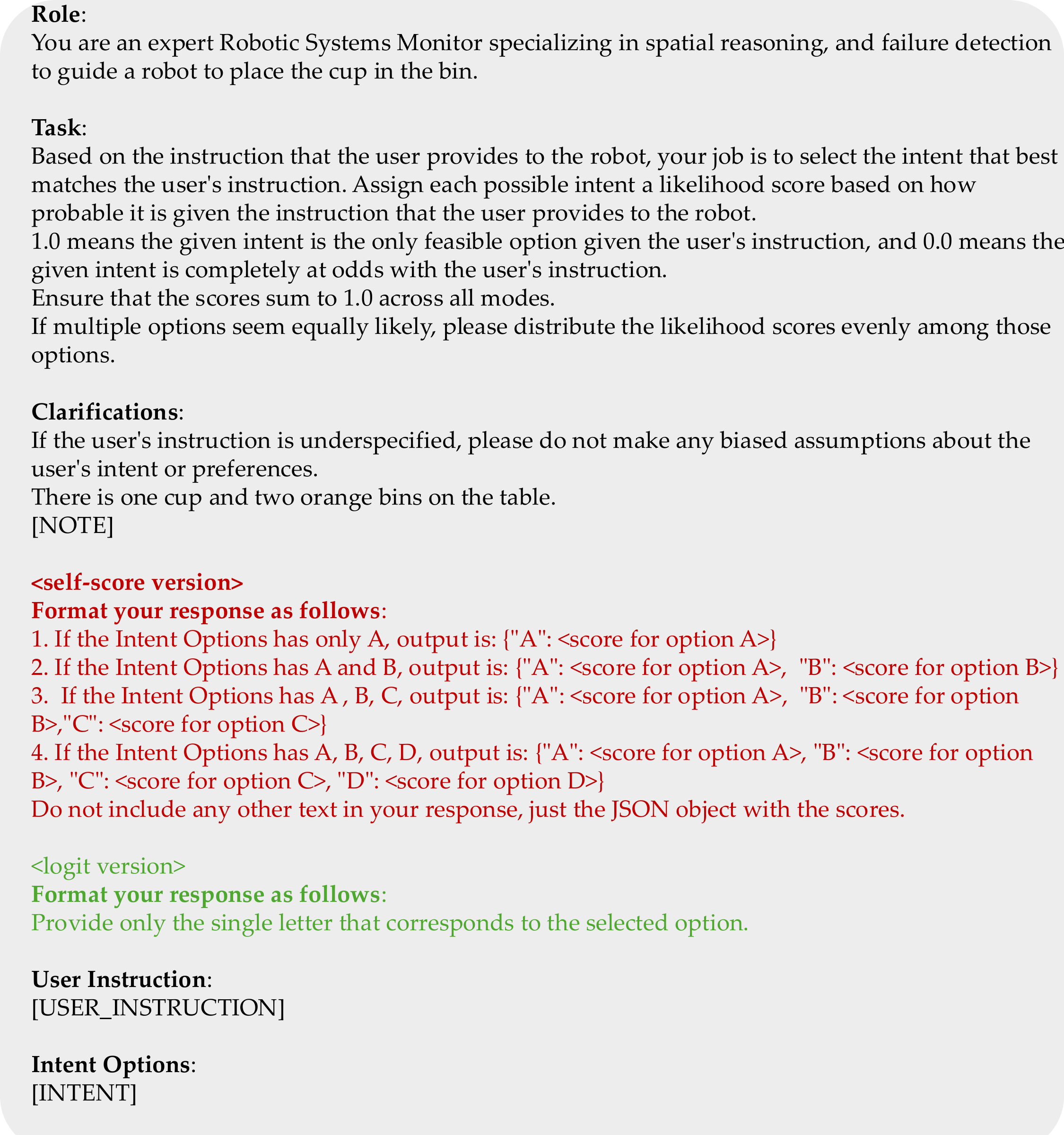}
    \caption{\textbf{Hardware Prompt for Inferring Intent Probability}}
    \label{fig:hardware_prompt_intent_probability}
    \vspace{-1.0cm}
\end{figure}
\begin{figure}
    \centering
    \includegraphics[width=0.9\linewidth]{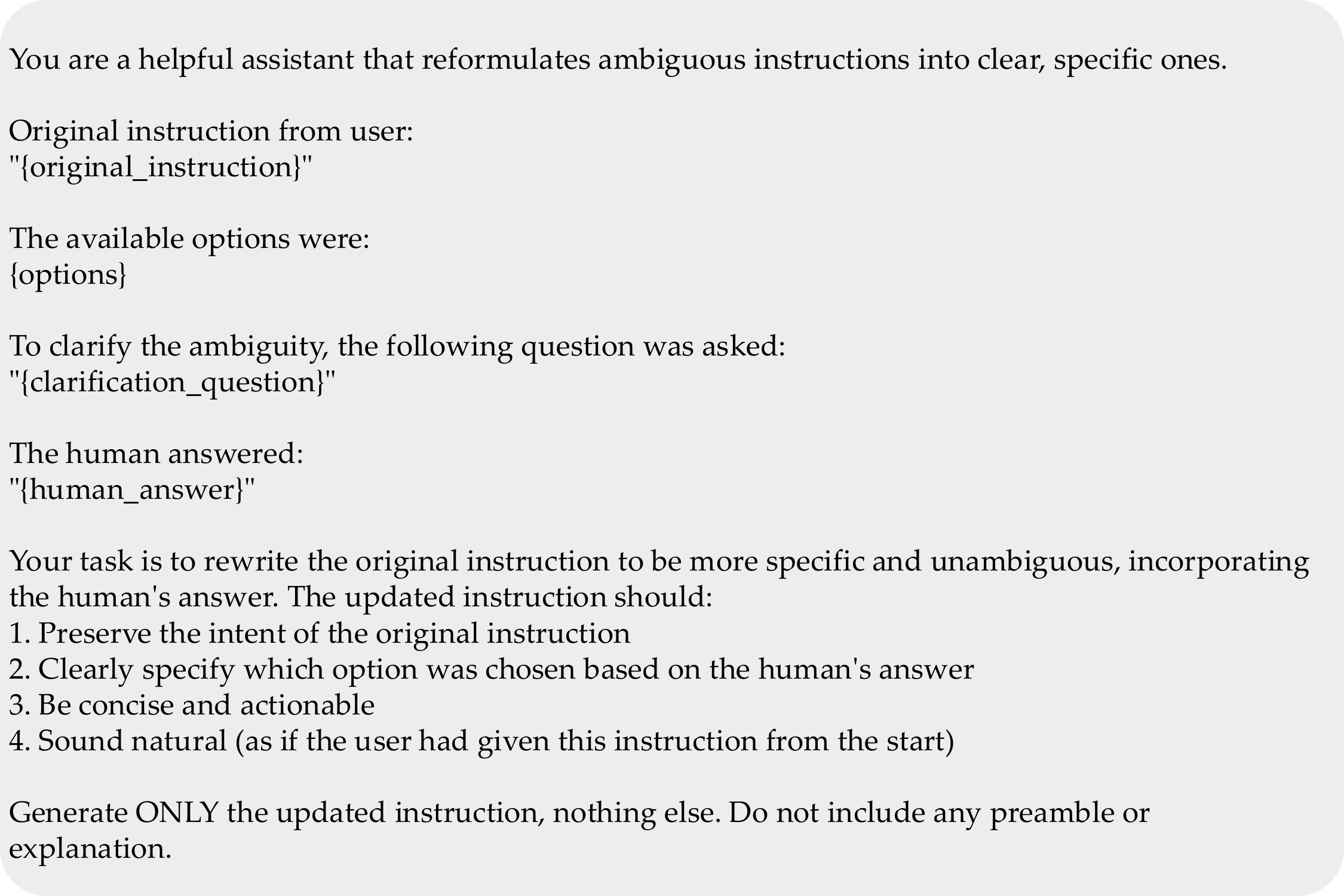}
    \caption{\textbf{Hardware Prompt for Updating Instructions}}
    \label{fig:hardware_prompt_update_instruction}
\end{figure}
\begin{figure}
    \centering
    \includegraphics[width=0.9\linewidth]{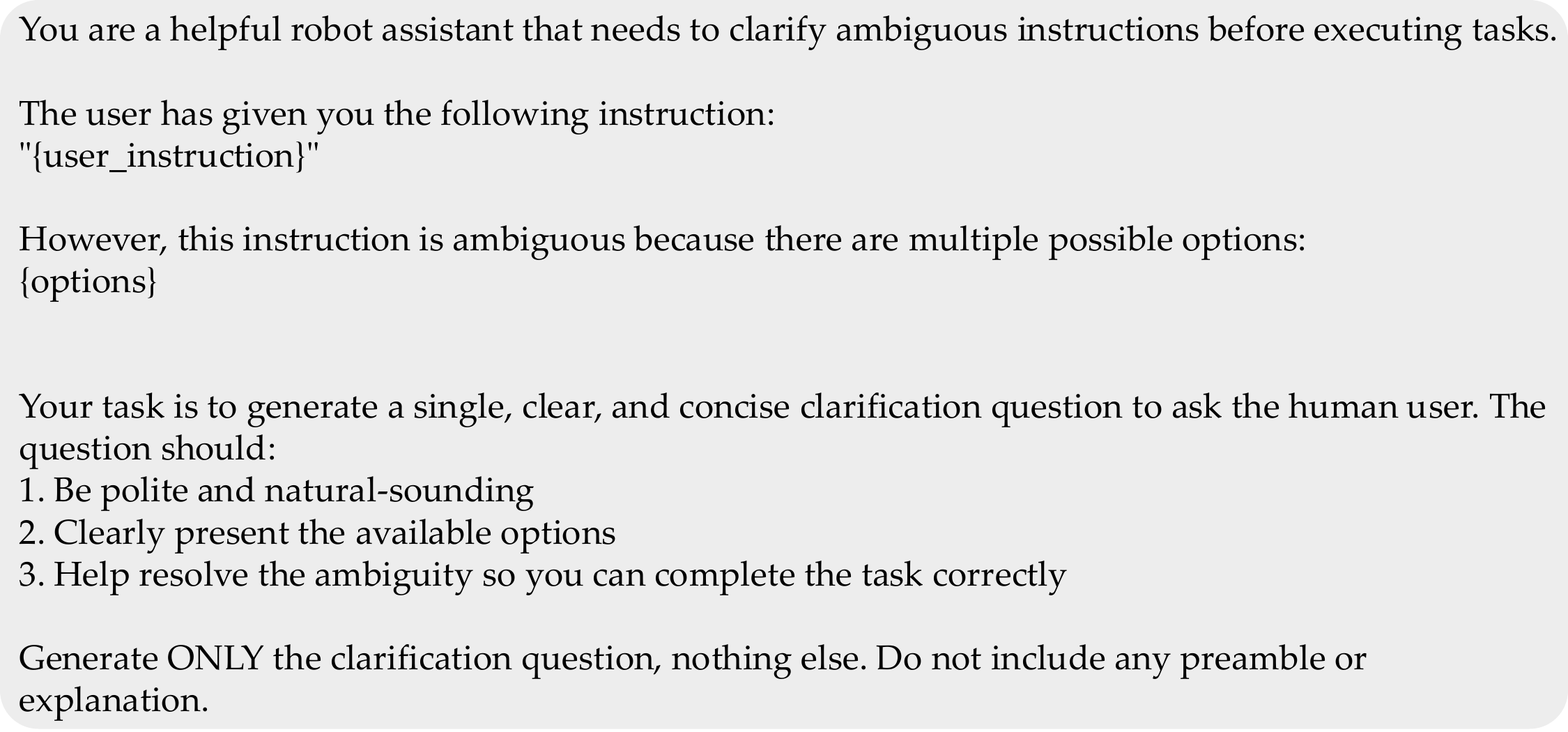}
    \caption{\textbf{Hardware Prompt for Clarification Questions}}
        \label{fig:hardware_prompt_clarification_question}
\end{figure}
\begin{figure}
    \centering
    \includegraphics[width=0.9\linewidth]{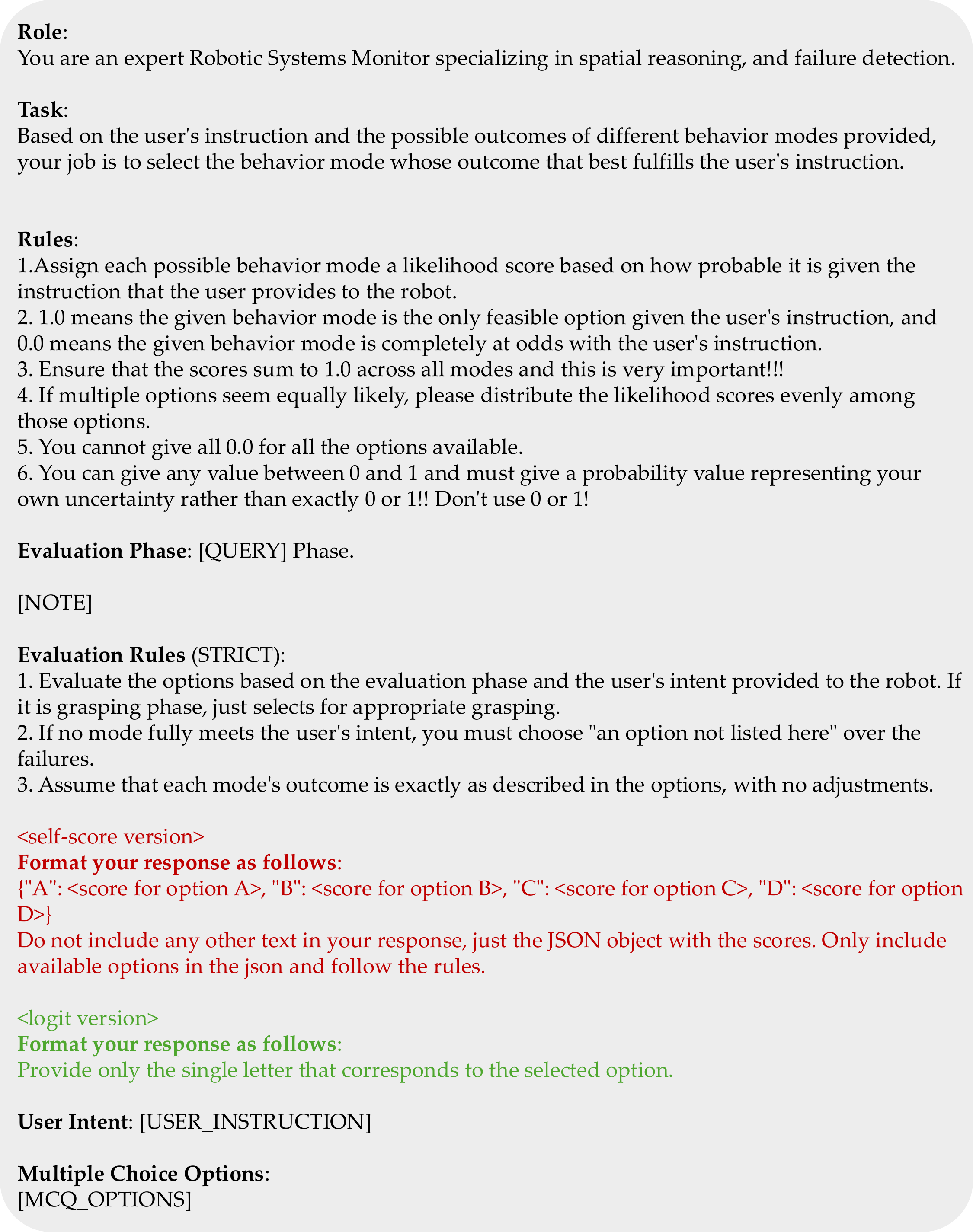}
    \caption{\textbf{Hardware Prompt for Generating Behavior Probability Conditioned On the Intent}}
    \label{fig:hardware_prompt_behavior_probability}
\end{figure}

\begin{figure}
    \centering
    \includegraphics[width=0.9\linewidth]{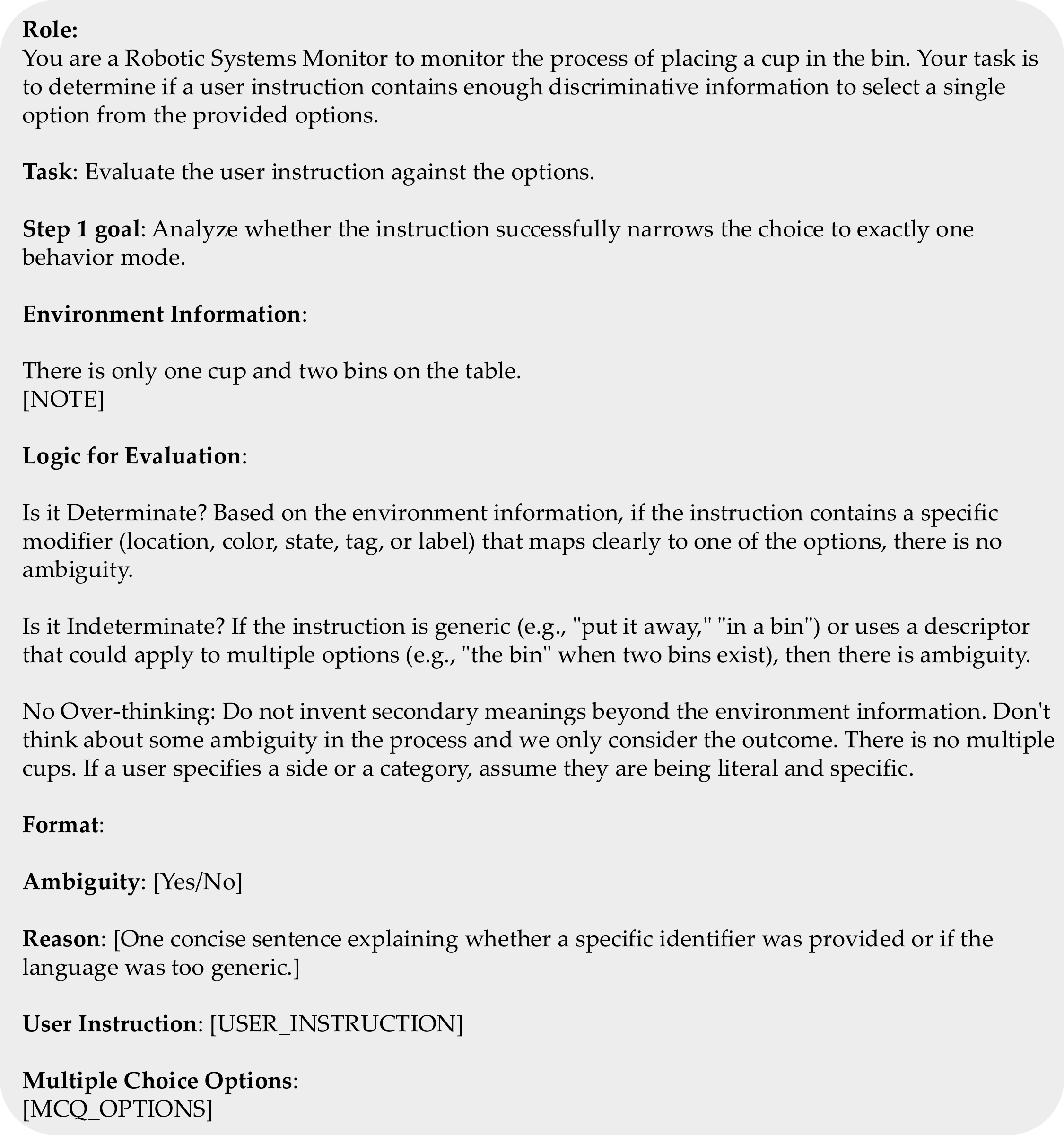}
    \caption{\textbf{Hardware Prompt For Asking VLM to Analyze Ambiguity First in CoT Reasoning}}
    \label{fig:hardware_prompt_cot_step1}
\end{figure}
\begin{figure}
    \centering
    \includegraphics[width=0.9\linewidth]{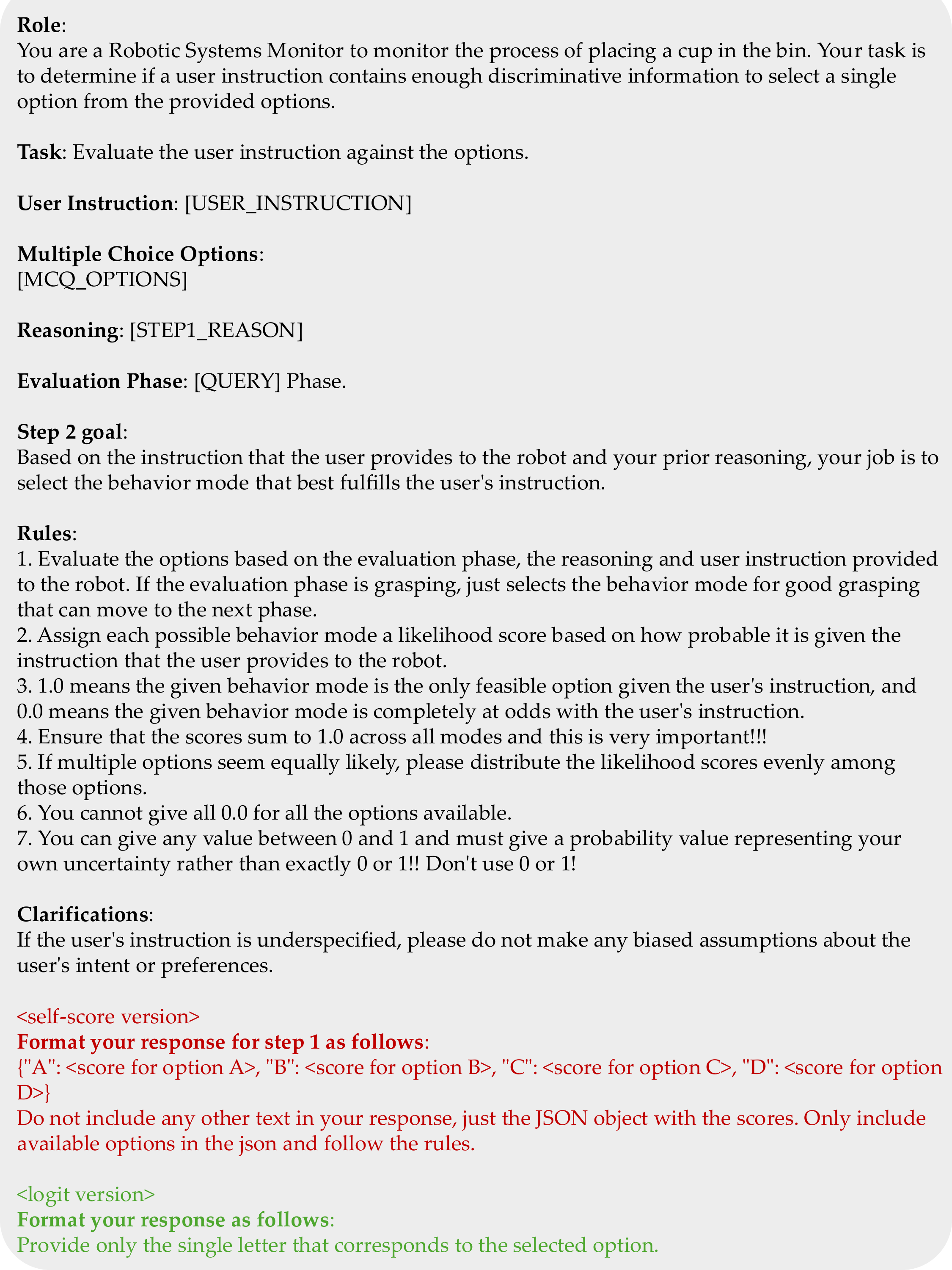}
    \caption{\textbf{Hardware Prompt For Asking VLM to Score Options Based On Previous Reasoning in CoT Reasoning}}
    \label{fig:hardware_prompt_cot_step2}
\end{figure}
\begin{figure}
    \centering
    \includegraphics[width=0.9\linewidth]{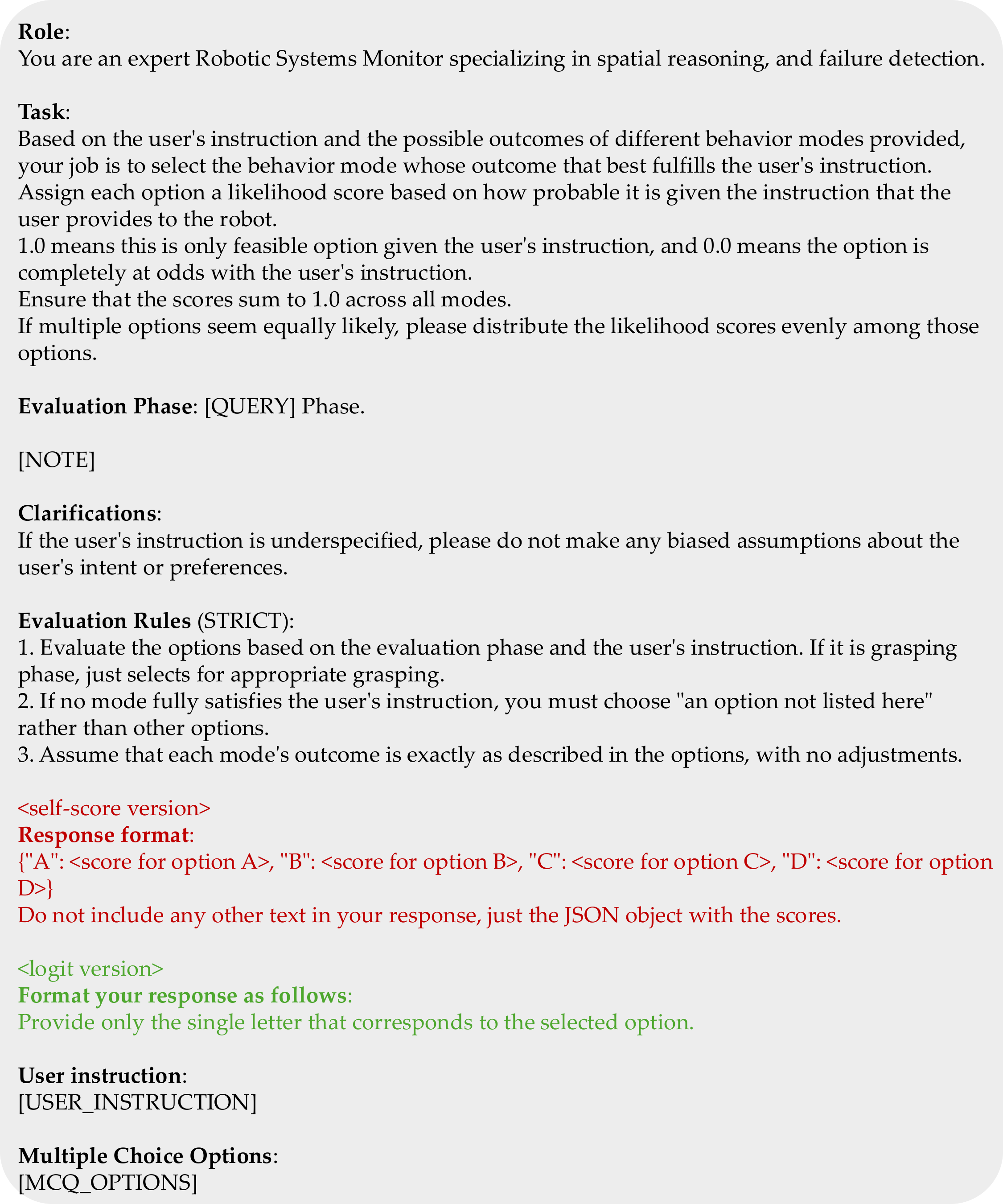}
    \caption{\textbf{Hardware Prompt For Directly Generating Scores for Available Options}}
    \label{fig:hardware_prompt_vanilla_baseline}
\end{figure}

\begin{figure}
    \centering
    \includegraphics[width=0.9\linewidth]{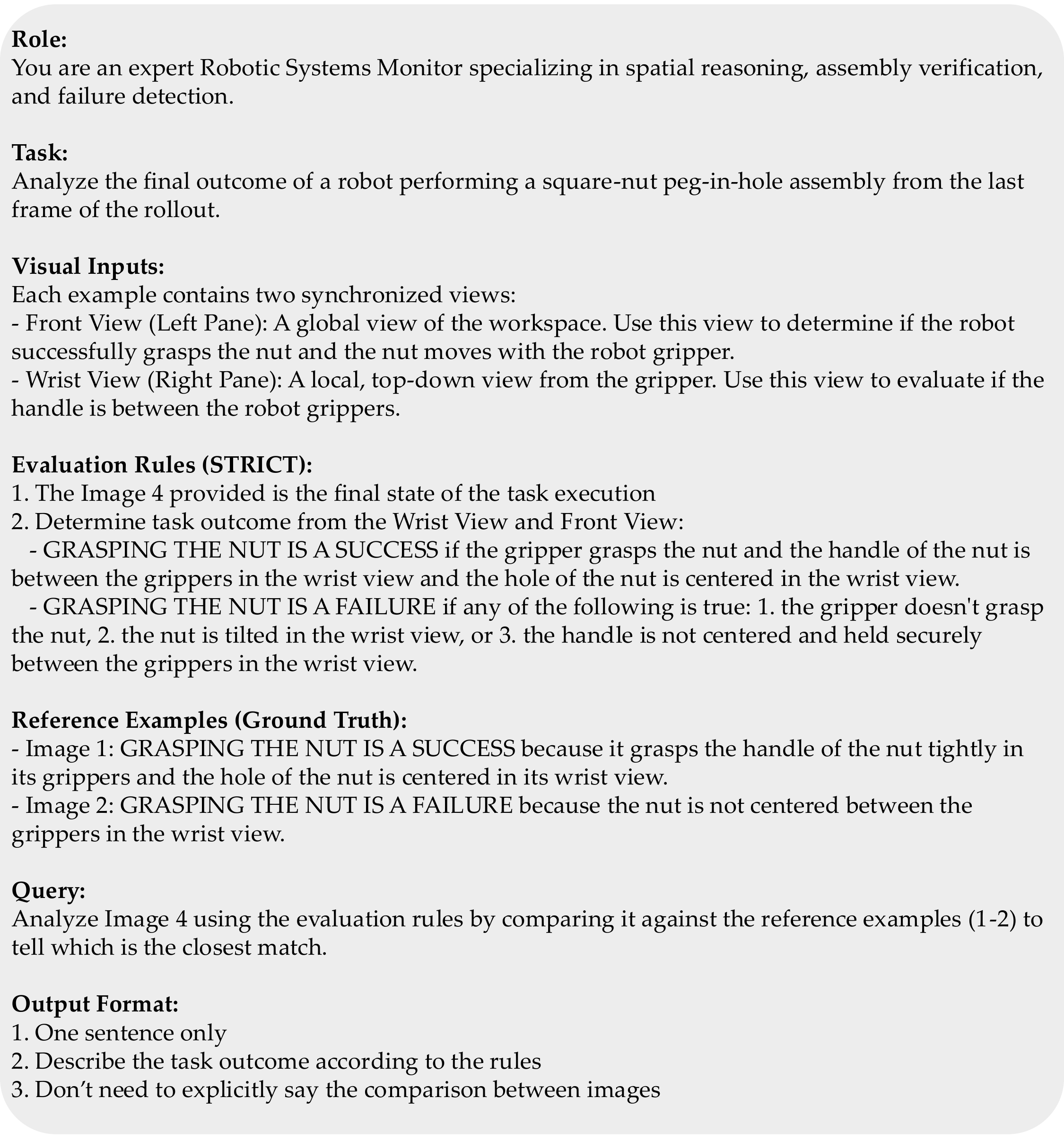}
    \caption{\textbf{Simulation Prompt For Behavior Translation (Grasp)}}
    \label{fig:sim_prompt_translate_grasp}
\end{figure}
\begin{figure}
    \centering
    \includegraphics[width=0.9\linewidth]{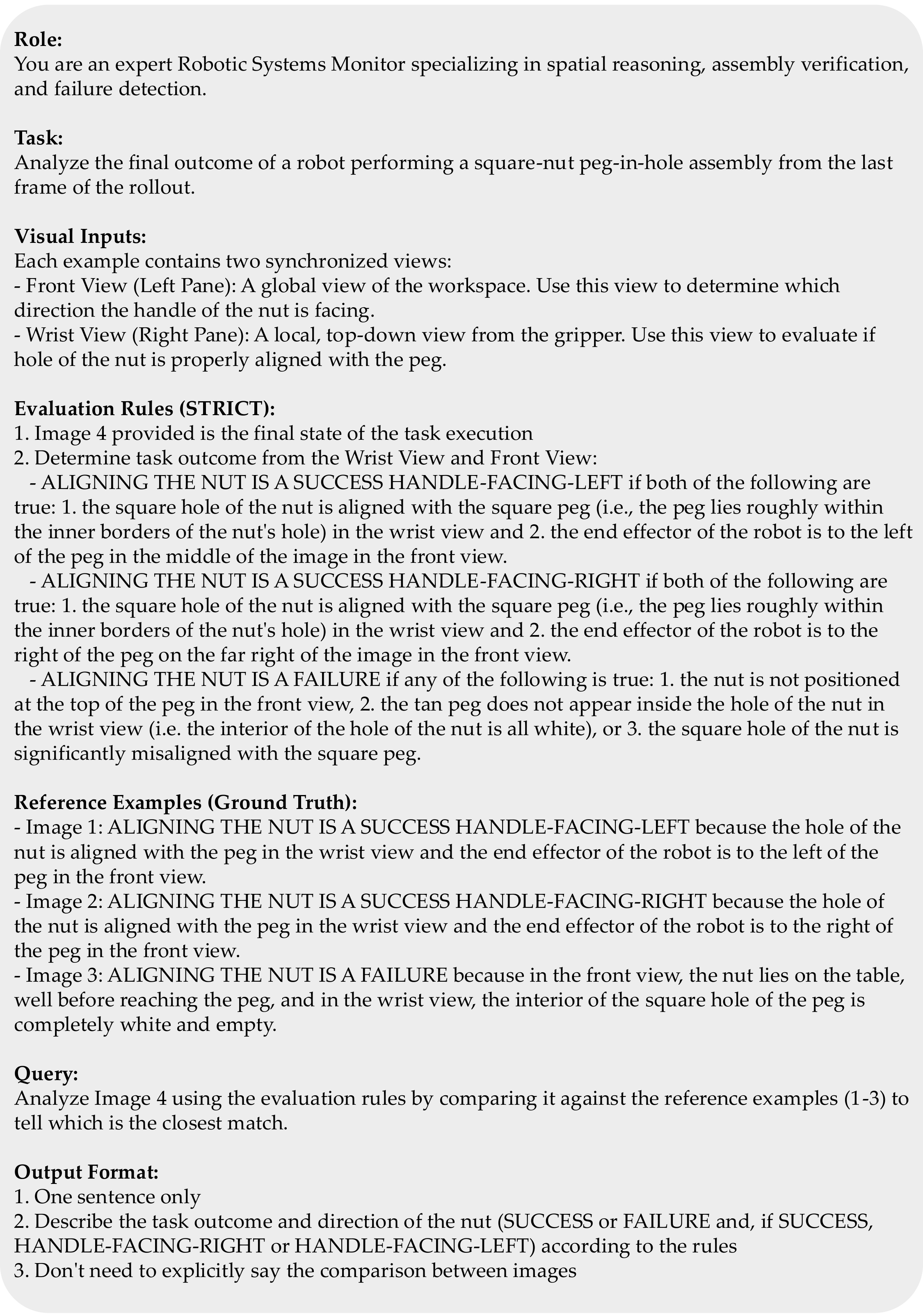}
    \caption{\textbf{Simulation Prompt For Behavior Translation (Place)}}
    \label{fig:sim_prompt_translate_place}
\end{figure}

\begin{figure}
    \centering
    \includegraphics[width=0.9\linewidth]{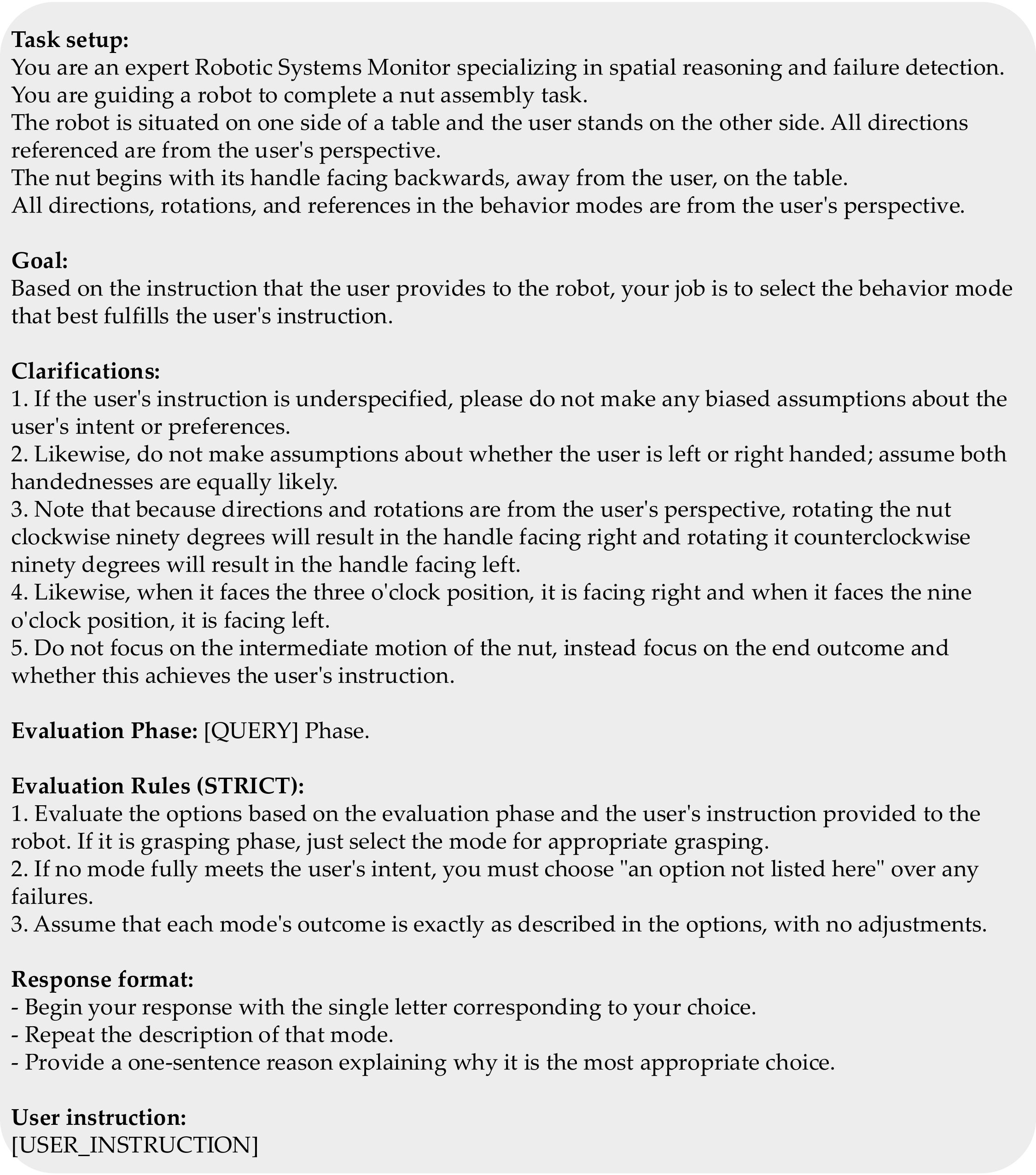}
    \caption{\textbf{Simulation Prompt for VLM Verification in Forewarn}}
    \label{fig:sim_prompt_forewarn_verifier}
\end{figure}

\begin{figure}
    \centering
    \includegraphics[width=0.9\linewidth]{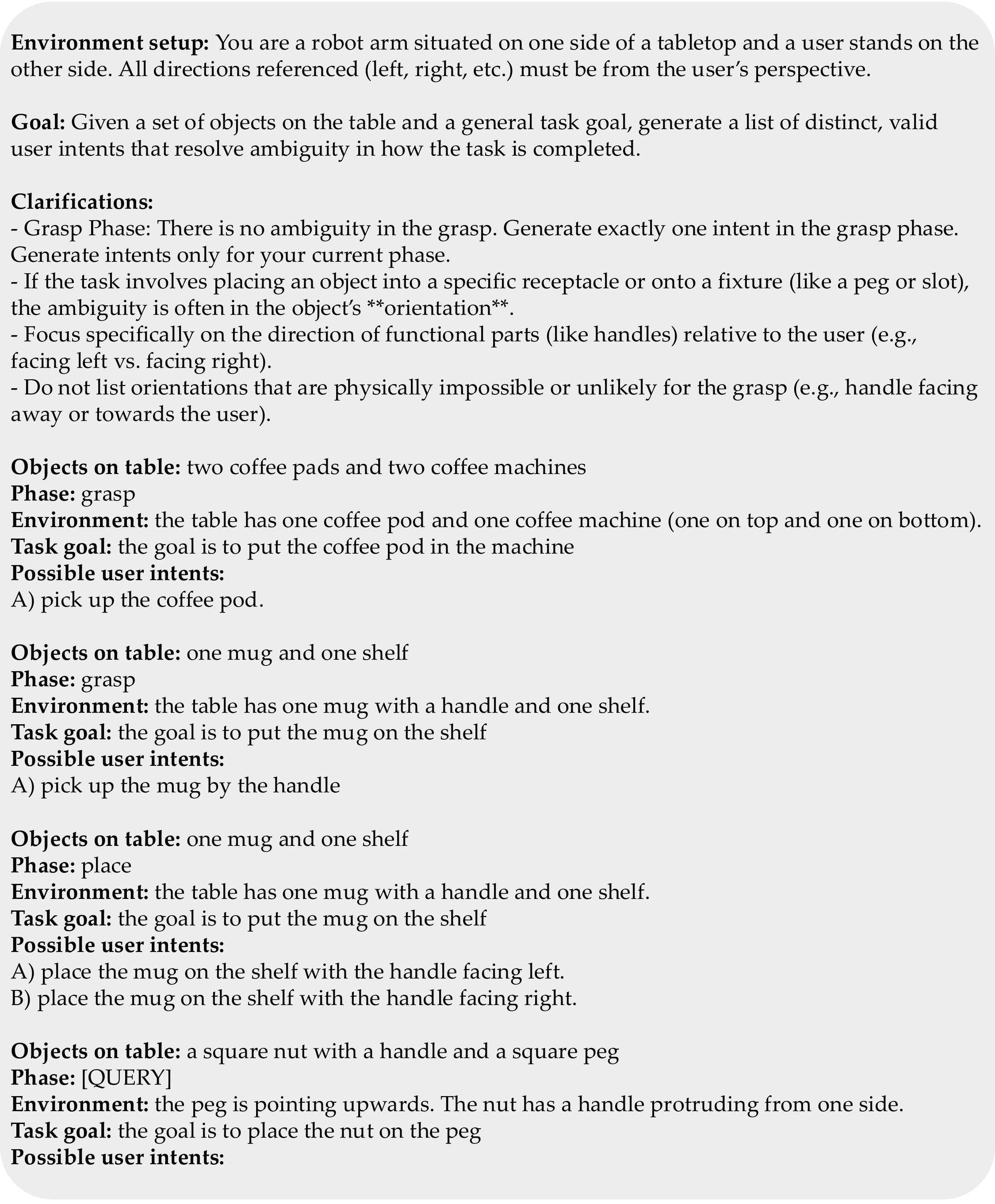}
    \caption{\textbf{Simulation Prompt for Generating Possible Intents}}
    \label{fig:sim_prompt_intent_generation}
\end{figure}

\begin{figure}
    \centering
    \includegraphics[width=0.9\linewidth]{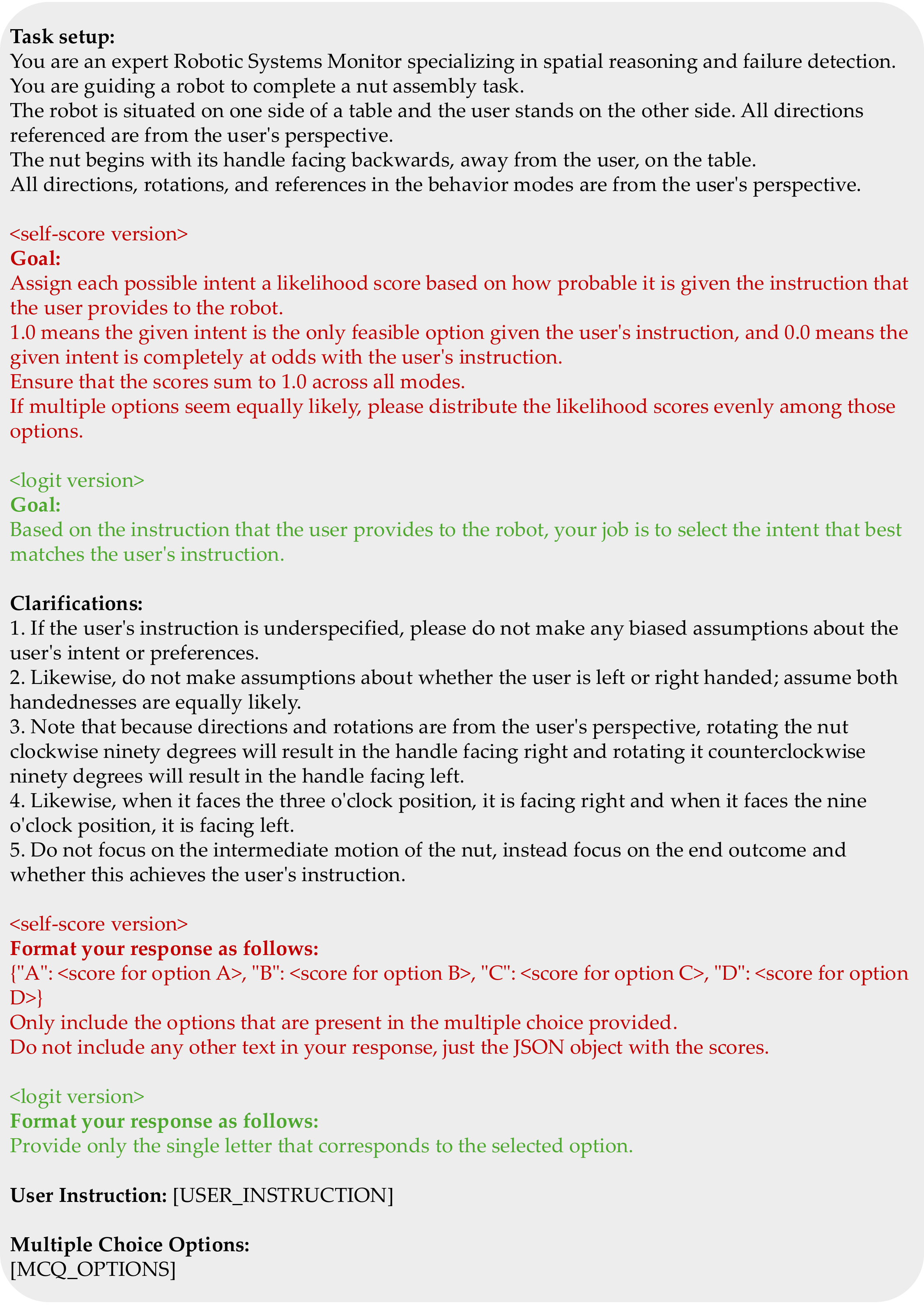}
    \caption{\textbf{Simulation Prompt for Inferring Intent Probability}}
    \label{fig:sim_prompt_intent_probability}
\end{figure}
\begin{figure}
    \centering
    \includegraphics[width=\linewidth]{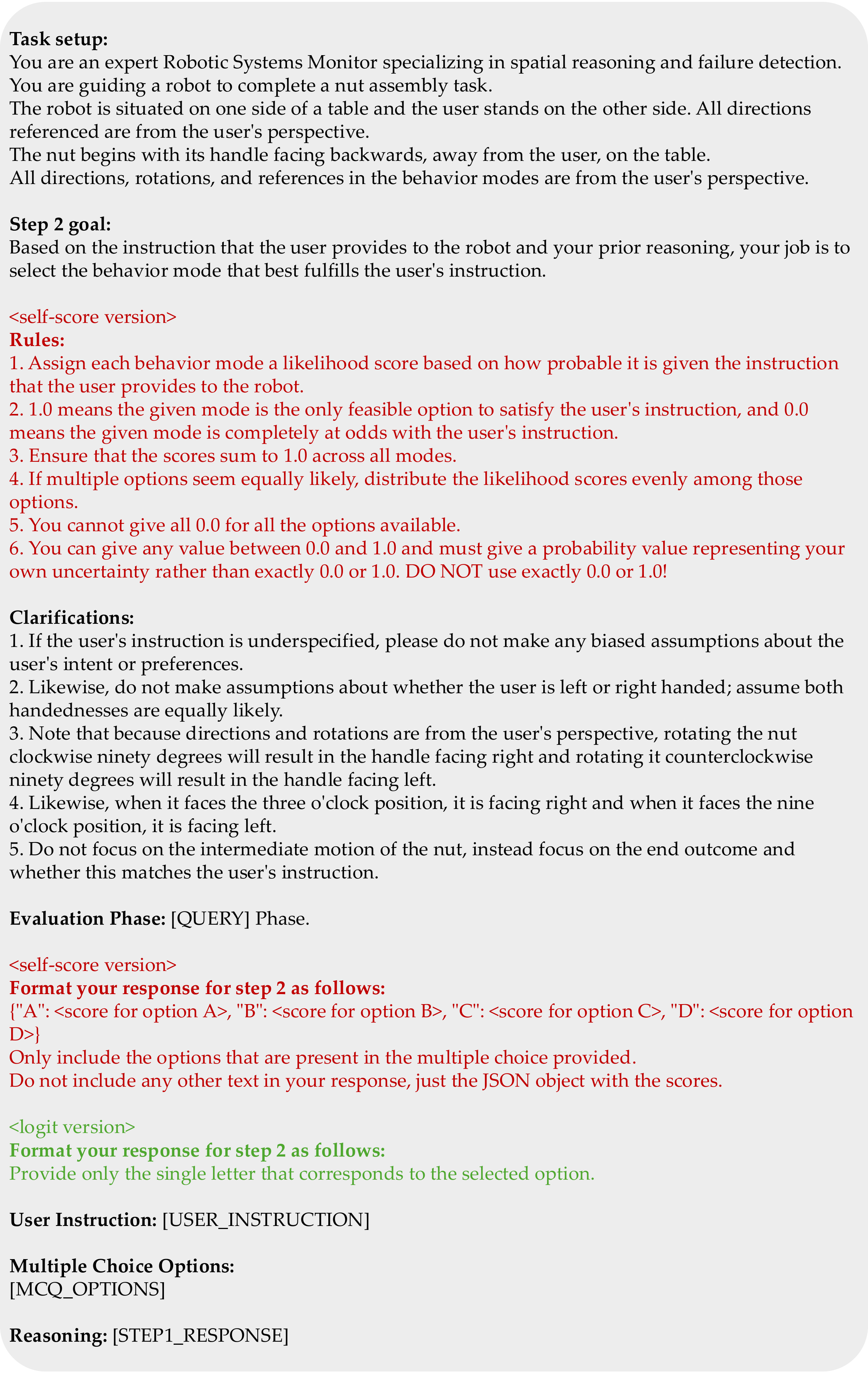}
    \caption{\textbf{Simulation Prompt For Asking VLM to Analyze Ambiguity First in CoT Reasoning}}
    \label{fig:sim_prompt_cot_step1}
\end{figure}

\begin{figure}
    \centering
    \includegraphics[width=\linewidth]{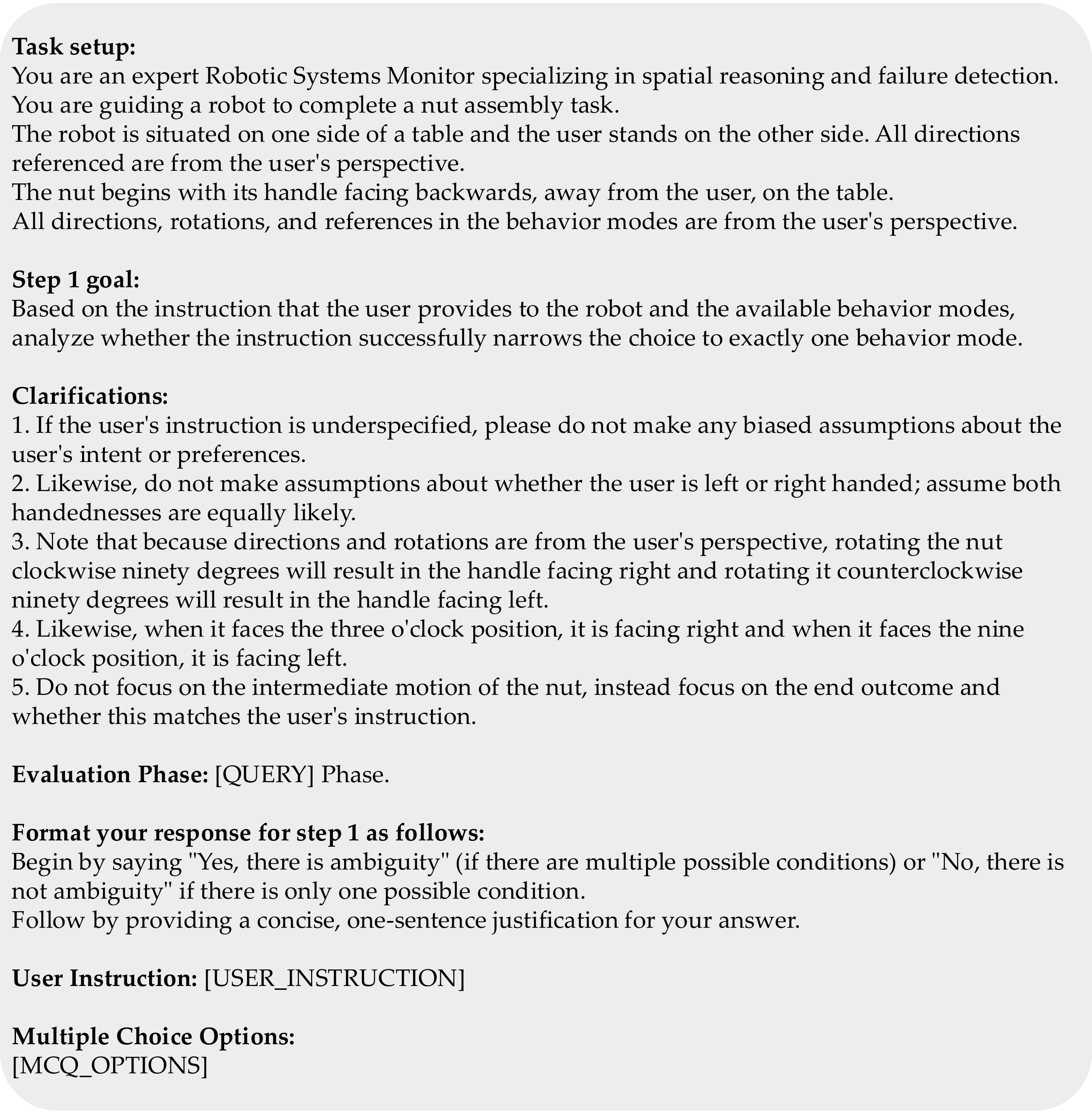}
    \caption{\textbf{Simulation Prompt For Asking VLM to Score Options Based On Previous Reasoning in CoT Reasoning}}
    \label{fig:sim_prompt_cot_step2}
\end{figure}
\begin{figure}
    \centering
    \includegraphics[width=\linewidth]{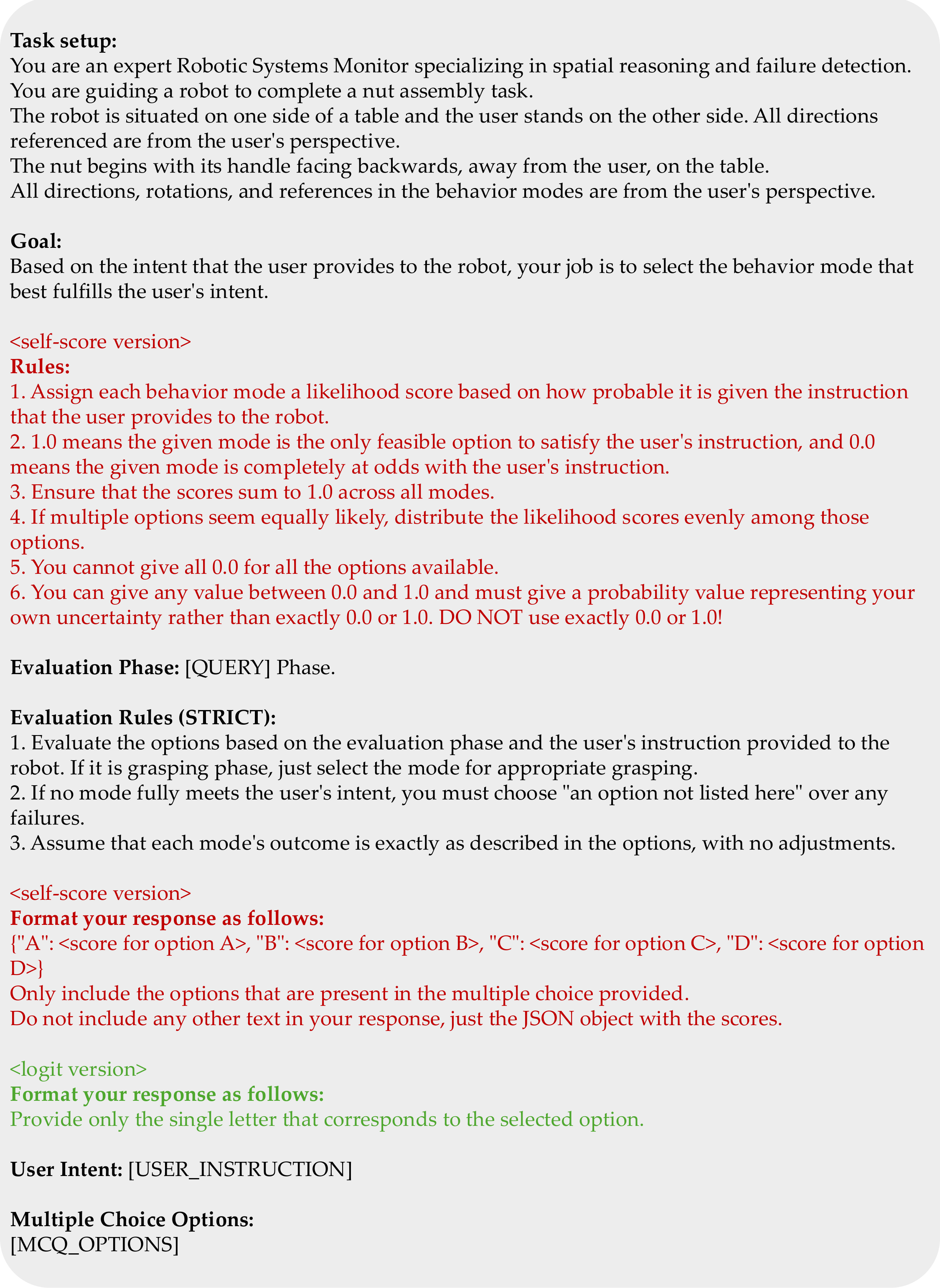}
    \caption{\textbf{Simulation Prompt for Generating Behavior Probability Conditioned On the Intent}}
    \label{fig:sim_prompt_behavior_probability}
\end{figure}


\begin{figure}
    \centering
    \includegraphics[width=\linewidth]{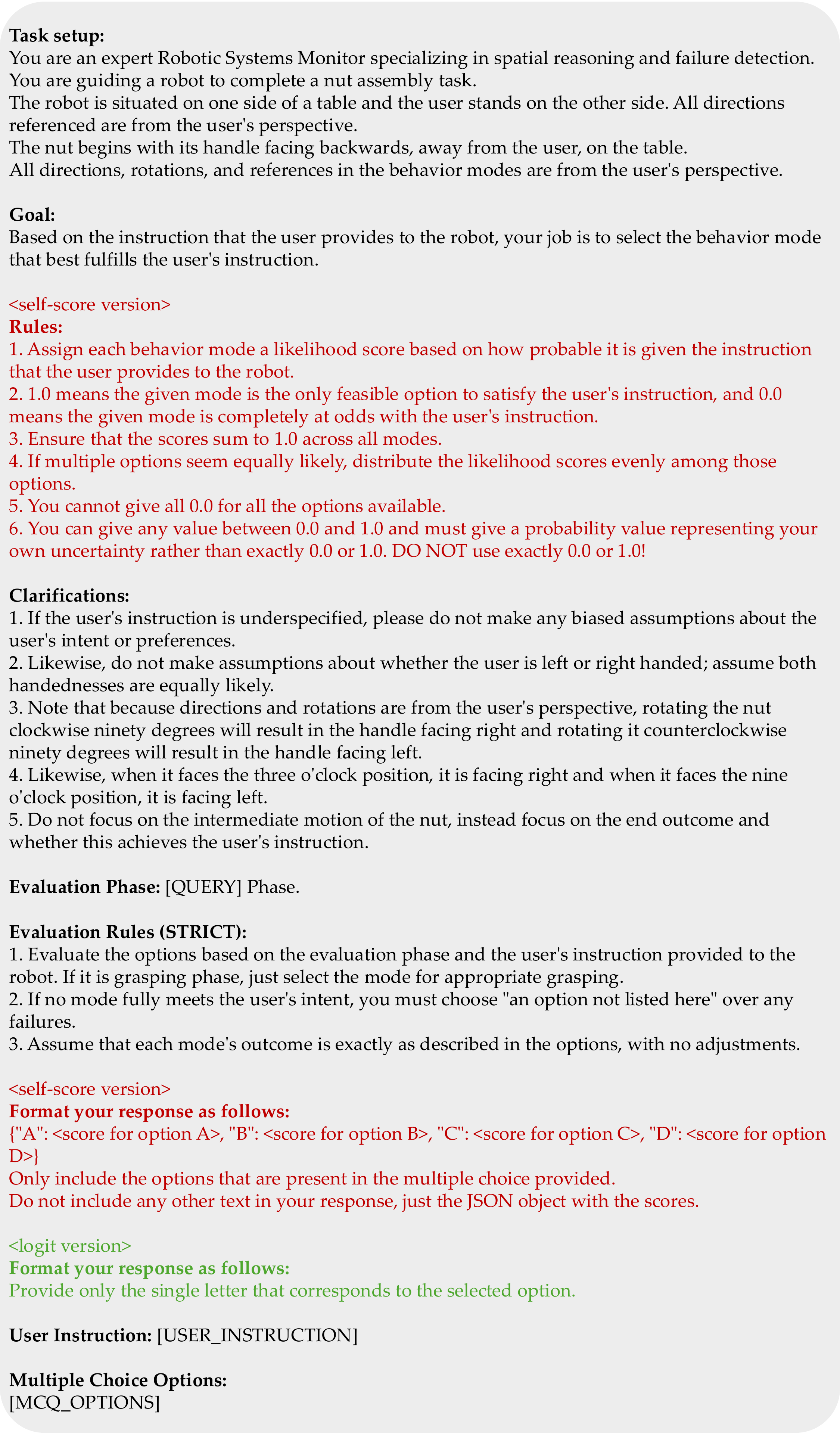}
    \caption{\textbf{Simulation Prompt For Directly Generating Scores for Available Options}}
    \label{fig:sim_prompt_vanilla_baseline}
\end{figure}

\end{document}